\documentclass{article}

\PassOptionsToPackage{numbers, compress}{natbib}
 \usepackage[preprint]{neurips_2026}

\usepackage[utf8]{inputenc} % allow utf-8 input
\usepackage[T1]{fontenc}    % use 8-bit T1 fonts
\usepackage{hyperref}       % hyperlinks
\usepackage{url}            % simple URL typesetting
\usepackage{booktabs}       % professional-quality tables
\usepackage{tabularx}       % auto-wrapping table columns
\usepackage{amsfonts}       % blackboard math symbols
\usepackage{nicefrac}       % compact symbols for 1/2, etc.
\usepackage{microtype}      % microtypography
\usepackage{xcolor}         % colors

\usepackage{amsmath, amssymb, amsthm}
\usepackage[capitalize,noabbrev]{cleveref}
\usepackage{thmtools, thm-restate}  % restatable propositions (LoMAP style)
\usepackage{algorithm, algorithmic} % algorithm pseudocode
\usepackage{enumitem}               % flexible list formatting
\usepackage{graphicx}               % includegraphics
\usepackage{subcaption}             % subfigure environment
\usepackage{multirow}               % multi-row table cells
\usepackage{pifont}                 % checkmarks (\ding)
\usepackage{makecell}               % multi-line table headers
\usepackage{wrapfig}                % wrap text around figures/tables
\usepackage{caption}                % \captionof for mixed float environments

\newcommand{\cmark}{\ding{51}}%
\newcommand{\xmark}{\ding{55}}%

\newcommand{\valstd}[2]{$#1${\tiny $\pm #2$}}

\newcommand{\NoNumber}[1]{{\def\alglinenumber##1{}\STATE #1}\addtocounter{ALC@line}{-1}}

\newcommand*\diff{\mathop{}\!\mathrm{d}}

\newcommand{\bs}{\boldsymbol}
\newcommand{\E}{\mathbb{E}}
\newcommand{\R}{\mathbb{R}}
\let\cite\citep

\title{Refining Compositional Diffusion \\for Reliable Long-Horizon Planning}

\author{%
  Kyowoon Lee$^{1}$ \quad
  Yunhao Luo$^{2}$ \quad
  Anh Tong$^{3}$ \quad
  Jaesik Choi$^{1,4}$ \\
  \\
  $^{1}$~KAIST \quad
  $^{2}$~University of Michigan \quad
  $^{3}$~Korea University \quad
  $^{4}$~INEEJI
}

\begin{document}

\maketitle

\begin{abstract}
	Compositional diffusion planning generates long-horizon trajectories by stitching together overlapping short-horizon segments through score composition. However, when local plan distributions are multimodal, existing compositional methods suffer from mode-averaging, where averaging incompatible local modes leads to plans that are neither locally feasible nor globally coherent. We propose \emph{Refining Compositional Diffusion} (RCD), a training-free guidance method that steers compositional sampling toward high-density, globally coherent plans. RCD leverages the \emph{self-reconstruction error} of a pretrained diffusion model as a proxy for the log-density of composed plans, combined with an \emph{overlap consistency} term that enforces consistency at segment boundaries. We show that the combined guidance concentrates sampling on high-density plans that mitigate mode-averaging. Experiments on challenging long-horizon tasks from OGBench, including locomotion, object manipulation, and pixel-based observations, demonstrate that RCD consistently outperforms existing methods. Project website at \textcolor{magenta}{\url{https://refining-compositional-diffusion.github.io/}}.
\end{abstract}

%=============================================================================
% INTRODUCTION
%=============================================================================
\section{Introduction}

Planning plays a fundamental role in sequential decision-making, enabling agents to reason about future consequences before committing to actions. Classical approaches such as Model Predictive Control~\cite{tassa2012synthesis} and Monte Carlo tree search~\cite{silver2016mastering, silver2017mastering, lee2018deep} achieve strong performance when accurate dynamics models are available, yet constructing such models for high-dimensional, continuous systems remains a significant challenge~\cite{janner2022planning}.
Diffusion probabilistic models have recently emerged as a compelling alternative for trajectory planning, learning to sample directly from the distribution of feasible plans and incorporating task-specific guidance through auxiliary gradient signals~\cite{janner2022planning, ajay2023is, liang2023adaptdiffuser}. However, the effectiveness of a diffusion planner is inherently constrained by the horizon of available training data. Collecting long-horizon demonstrations covering all possible start-to-goal combinations is prohibitively expensive, motivating \emph{compositional} approaches that assemble long plans from short, reusable segments~\cite{mishra2023generative, zhang2023diffcollage, luo2025generative, mishra2026compositional}.

Compositional approaches address this limitation by decomposing a long trajectory into overlapping local segments, each modeled by a single short-horizon diffusion model, and composing their score functions at inference time to sample from an approximation of the joint distribution~\cite{du2020compositional, du2023reduce, zhang2023diffcollage}. This strategy is data efficient and modular, allowing a model trained on short data to plan over arbitrarily long horizons. GSC~\cite{mishra2023generative} pioneered this for trajectory planning by chaining short-horizon diffusion trajectories via score composition. In practice, this composition averages the noise predictions of adjacent segments at every denoising step on their shared overlap regions. CompDiffuser~\cite{luo2025generative} additionally introduced a bidirectional conditioning mechanism that propagates information between adjacent chunks during denoising. However, when the local plan distribution is \emph{multimodal}, this averaging-based composition suffers from \emph{mode-averaging}~\cite{mishra2026compositional}. Adjacent segments independently commit to incompatible modes, and their averaged scores produce composed trajectories that lie in low-density regions of the true distribution. While CDGS~\cite{mishra2026compositional} mitigates this issue through population-based search and DDIM-inversion-based pruning~\cite{song2021denoising}, it incurs substantial inference-time overhead because many candidate trajectories must be repeatedly resampled, ranked, and filtered during the denoising process. More importantly, it improves final outcomes primarily through post hoc candidate selection, rather than by directly steering the denoising process toward more coherent plans.

In this paper, we propose \textbf{R}efining \textbf{C}ompositional \textbf{D}iffusion (\textbf{RCD}), a training-free guidance method that steers compositional sampling toward high-density regions of the local plan distributions while maintaining consistency across overlapping boundaries. We first show that the \emph{self-reconstruction error}, the discrepancy between a predicted clean sample and its reconstruction after partial renoising, provides a differentiable proxy for the local log-density. High-density samples reconstruct faithfully, whereas off-manifold trajectories, including mode-averaging artifacts, exhibit large reconstruction error. We then introduce an \emph{overlap consistency} term that penalizes score disagreement between adjacent segments at shared boundaries. Computed from pretrained local models alone, the combined signal directly steers denoising toward a tilted distribution that concentrates on high-density, globally coherent plans, mitigating mode-averaging without the population-based resampling and ranking overhead of prior search-based approaches, making RCD a plug-and-play component universally applicable to diffusion models.

Our main contributions are as follows:
\textbf{(1)} We analyze the self-reconstruction error as a density proxy for composed trajectories and show its formal connection to the diffusion Evidence Lower Bound (ELBO), complemented by an overlap consistency term that measures score disagreement at segment boundaries.
\textbf{(2)} We propose \emph{Refining Compositional Diffusion} (RCD), a training-free guidance method that combines these signals to steer compositional sampling toward high-density, globally coherent plans using only a pretrained local diffusion model.
\textbf{(3)} We evaluate RCD on OGBench long-horizon tasks spanning locomotion, object manipulation, and pixel-based observations, where it consistently improves success rates over prior compositional methods while running an order of magnitude faster than search-based alternatives, all without additional training or architectural changes to the local diffusion model.

%=============================================================================
% BACKGROUND
%=============================================================================
\section{Background}
\label{sec:background}

\subsection{Planning with Diffusion Models}
\label{sec:diffusion_planning}

Diffusion probabilistic models~\cite{ho2020denoising, song2021scorebased} learn a data distribution by defining a forward process that gradually adds Gaussian noise and a reverse process that recovers the clean data.
Given a clean trajectory $\bs{\tau}^{(0)} \sim p(\bs{\tau})$, the forward process produces a sequence of increasingly noisy versions $\bs{\tau}^{(1)}, \ldots, \bs{\tau}^{(T)}$ via
\begin{align}
	\label{eq:ddpm_forward}
	\bs{\tau}^{(t)} = \sqrt{\alpha_t}\, \bs{\tau}^{(0)} + \sqrt{1 - \alpha_t}\, \bs{\epsilon}, \quad \bs{\epsilon} \sim \mathcal{N}(\mathbf{0}, \mathbf{I}),
\end{align}
where $\alpha_1 > \alpha_2 > \cdots > \alpha_T \approx 0$ is a monotonically decreasing noise schedule.
The reverse process recovers $\bs{\tau}^{(0)}$ by learning a noise-prediction network $\bs{\epsilon}_\theta(\bs{\tau}^{(t)}, t)$ trained to minimize $\E[\| \bs{\epsilon} - \bs{\epsilon}_\theta(\bs{\tau}^{(t)}, t) \|^2]$.
Sampling proceeds iteratively from $\bs{\tau}^{(T)} \sim \mathcal{N}(\mathbf{0}, \mathbf{I})$:
\begin{align}
	\label{eq:ddpm_reverse}
	\bs{\tau}^{(t-1)} = \bs{\mu}_\theta(\bs{\tau}^{(t)}, t) + \sigma_t\, \mathbf{z}, \quad \mathbf{z} \sim \mathcal{N}(\mathbf{0}, \mathbf{I}),
\end{align}
where $\bs{\mu}_\theta$ is the predicted mean derived from $\bs{\epsilon}_\theta$ and $\sigma_t$ is the reverse-process standard deviation.
Diffuser~\cite{janner2022planning} applies this framework to trajectory planning by training a diffusion model over trajectories $\bs{\tau} = (\mathbf{s}_0, \mathbf{a}_0, \mathbf{s}_1, \mathbf{a}_1, \ldots)$ from offline data.
Task-specific objectives such as goal-reaching or return maximization are incorporated via gradient-based guidance at inference time \cite{dhariwal2021diffusion}.

\subsection{Compositional Diffusion via Factor Graphs}
\label{sec:comp_diffusion}
Consider a global plan $\bs{\tau} = (x_1, \ldots, x_N) \in \R^{N \times D}$ decomposed into $M$ overlapping local segments (factors) $y_j$, where each variable $x_i \in \R^D$ represents a state or state-action pair and adjacent segments $y_j$ and $y_{j+1}$ share boundary variables in the overlap $y_j \cap y_{j+1}$.
Following the Bethe approximation~\cite{yedidia2005constructing, mishra2026compositional}, the global distribution factorizes as:
\begin{align}
	\label{eq:bethe}
	p(\bs{\tau}) = \frac{\prod_{j=1}^{M} p(y_j)}{\prod_{i=1}^{N} p(x_i)^{d_i - 1}},
\end{align}
where $d_i$ is the degree of variable $x_i$ in the factor graph ($d_i = 2$ for overlap variables, $d_i = 1$ otherwise).
Taking log-gradients yields the compositional score:
\begin{align}
	\label{eq:comp_score}
	\nabla \log p(\bs{\tau}) = \sum_{j=1}^{M} \nabla \log p(y_j) + \sum_{i=1}^{N}(1 - d_i)\nabla \log p(x_i).
\end{align}
In practice, each $p(y_j)$ is represented by a shared diffusion model conditioned on the noisy states of its neighbors, $\bs{\epsilon}_\theta(y_j^{(t)}, t \mid y_{j-1}^{(t)}, y_{j+1}^{(t)})$~\cite{luo2025generative}. For notational simplicity, we write $\bs{\epsilon}_\theta(y_j^{(t)}, t)$ throughout, with the neighbor conditioning left implicit. The global score is computed by averaging noise predictions in overlapping regions~\cite{mishra2023generative, zhang2023diffcollage, mishra2026compositional}.
Bethe marginal consistency requires $p_{y_j}(x_i) = p_{y_{j+1}}(x_i)$ for all shared variables $x_i \in y_j \cap y_{j+1}$.
When local distributions are multimodal, score averaging blends incompatible modes, pushing samples toward low-density inter-modal regions. This problem worsens with the number of composed segments.

\subsection{Tweedie's Formula for Denoising}
\label{sec:tweedie}

When samples are perturbed by Gaussian noise $\tilde{\bs{\tau}} \sim \mathcal{N}(\bs{\tau}, \sigma^2 \mathbf{I})$, Tweedie's formula~\cite{robbins1992empirical} provides the Bayes-optimal denoised estimate $\E[\bs{\tau} | \tilde{\bs{\tau}}] = \tilde{\bs{\tau}} + \sigma^2 \nabla_{\tilde{\bs{\tau}}} \log p(\tilde{\bs{\tau}})$.
Applying this to the diffusion forward process in \cref{eq:ddpm_forward}, we obtain the posterior mean~\cite{chung2023diffusion}:
\begin{align}
	\label{eq:tweedie_diffusion}
	\E[\bs{\tau}^{(0)} | \bs{\tau}^{(t)}]
	= \frac{1}{\sqrt{\alpha_t}} \bigl( \bs{\tau}^{(t)} + (1 - \alpha_t) \nabla_{\bs{\tau}^{(t)}} \log p(\bs{\tau}^{(t)}) \bigr)
	\approx \frac{1}{\sqrt{\alpha_t}} \bigl( \bs{\tau}^{(t)} - \sqrt{1 - \alpha_t}\, \bs{\epsilon}_\theta(\bs{\tau}^{(t)}, t) \bigr),
\end{align}
where the score function $\mathbf{s}_\theta$ is approximated by $\nabla_{\bs{\tau}^{(t)}} \log p(\bs{\tau}^{(t)}) \approx -\bs{\epsilon}_\theta(\bs{\tau}^{(t)}, t) / \sqrt{1 - \alpha_t}$.
The right-hand side of \cref{eq:tweedie_diffusion} defines the \emph{Tweedie denoised estimate} $\hat{\bs{\tau}}_0^{(t)}$, which serves as a one-step prediction of the clean trajectory from a noisy sample at timestep $t$.

% Training-free Diffusion Guidance moved to Appendix (see \cref{sec:tfg})

%=============================================================================
% REFINING COMPOSITIONAL DIFFUSION
%=============================================================================
\section{Refining Compositional Diffusion}
\label{sec:method}

\begin{figure}[t]
	\centering
	\newlength{\toyheight}\setlength{\toyheight}{3.5cm}%
	\begin{subfigure}[b]{0.25\textwidth}
		\centering
		\includegraphics[height=\toyheight]{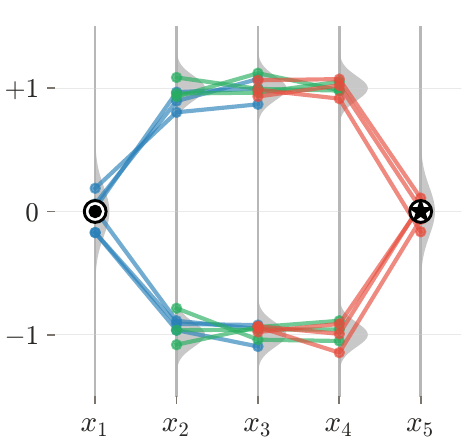}
		\caption{Training Segments}
		\label{fig:toy_a}
	\end{subfigure}\hfill
	\begin{subfigure}[b]{0.25\textwidth}
		\centering
		\includegraphics[height=\toyheight]{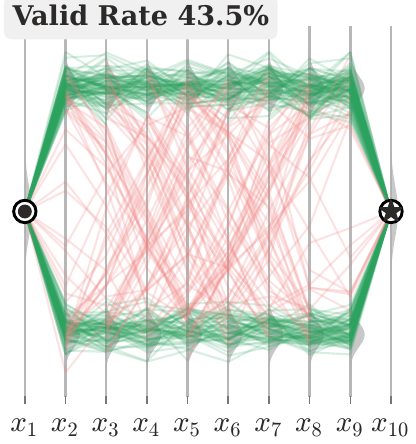}
		\caption{CompDiffuser}
		\label{fig:toy_b}
	\end{subfigure}\hfill
	\begin{subfigure}[b]{0.25\textwidth}
		\centering
		\includegraphics[height=\toyheight]{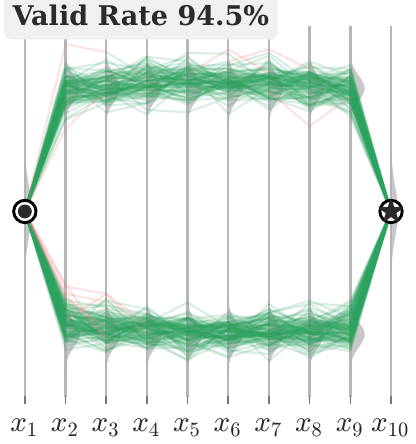}
		\caption{RCD \textbf{(ours)}}
		\label{fig:toy_c}
	\end{subfigure}\hfill
	\begin{subfigure}[b]{0.25\textwidth}
		\centering
		\includegraphics[height=\toyheight]{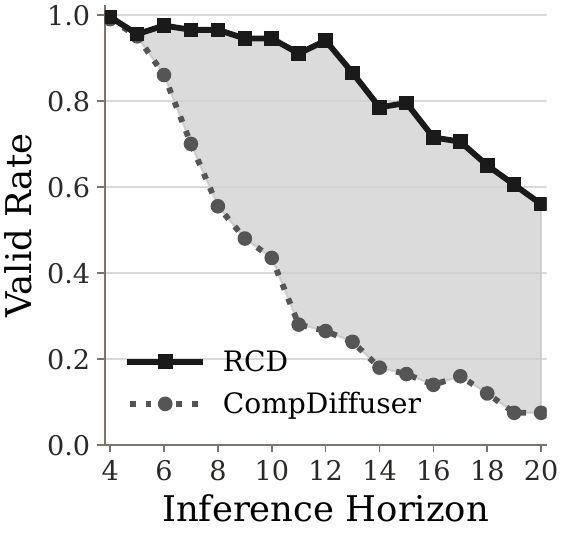}
		\caption{Valid Rate vs.\ Horizon}
		\label{fig:toy_d}
	\end{subfigure}
    \caption{\textbf{Toy illustration of the mode-averaging problem.} \textbf{(a)}~Training data consists of overlapping $l{=}3$ segments, each shown in a distinct color. They are anchored at a fixed start \protect\raisebox{-.08cm}{\includegraphics[height=.3cm]{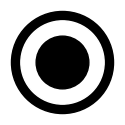}} and goal \protect\raisebox{-.08cm}{\includegraphics[height=.3cm]{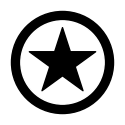}}, and pass through a bimodal distribution with modes at $+1$ and $-1$. \textbf{(b)}~CompDiffuser averages incompatible modes in overlap regions, producing many invalid (\textcolor[HTML]{f28482}{red}) trajectories off both modes. \textbf{(c)}~RCD guides the denoising toward high-density modes, yielding mostly valid (\textcolor[HTML]{2ca25f}{green}) trajectories. \textbf{(d)}~Valid rate drops sharply with horizon for CompDiffuser, while RCD maintains high validity.}
    \vspace{-0.2cm}
	\label{fig:toy_insight}
\end{figure}

% In this section, we present RCD, a training-free guidance method for compositional diffusion planning. We first illustrate the mode-averaging problem inherent in compositional score averaging (\cref{sec:problem}), then introduce the self-reconstruction error as a density proxy (\cref{sec:recon_error}) and the overlap consistency term for detecting local inconsistencies (\cref{sec:overlap}). We finally combine these signals into a unified guidance objective (\cref{sec:rcd_guidance}). An overview of RCD contrasted with existing compositional methods is provided in \cref{fig:overview}.

\subsection{Mode-Averaging Problem in Compositional Diffusion}
\label{sec:problem}

Compositional score averaging as described in \cref{sec:comp_diffusion} provides a reasonable approximation when the Bethe-style factorization is accurate and the composed score field faithfully represents the intended global distribution. In practice, however, both are only approximate, and the resulting artifacts are especially severe when $p(y)$ is multimodal, where incompatible local modes can be averaged together. Consider two adjacent segments $y_k$ and $y_{k+1}$ sharing overlap variables.
If $p(y)$ is bimodal, for instance the overlap variables can take values near $+1$ or $-1$ with equal probability, then $y_k$ and $y_{k+1}$ may independently commit to different modes.
When the scores from the two segments point in opposite directions, their average nearly cancels, leaving the overlap variables stranded in a low-density region between modes (near $0$).
The composed trajectory thus contains transitions that are individually implausible under any local model, leading to plan failure (\cref{fig:toy_b}). The probability of mode mismatch grows with the number of segments $M$.
Under idealized independent mode selection, the probability that all $M{-}1$ adjacent pairs agree on the same mode decays exponentially with $M$. In practice, iterative denoising introduces correlations that slow the decay, but the qualitative trend persists and the valid rate drops sharply as $M$ grows (\cref{fig:toy_d}). \cref{fig:overview} illustrates how RCD addresses this problem.

\subsection{Self-Reconstruction Error as a Density Proxy}
\label{sec:recon_error}

The core observation of RCD is that a pretrained diffusion model already provides an intrinsic density signal. Samples in high-density regions are faithfully reconstructed through a noise-denoise cycle, while mode-averaged samples in low-density regions are not. This \emph{self-reconstruction error} serves as a training-free density proxy for compositional guidance. Given a candidate clean trajectory $\hat{\bs{\tau}}_0$ obtained via the Tweedie estimate in \cref{eq:tweedie_diffusion}, we assess its quality by measuring how well it can be \emph{self-reconstructed} through the local models.

\begin{restatable}[Self-Reconstruction Error]{definition}{DefRecon}
	\label{def:recon}
	Given a candidate trajectory $\hat{\bs{\tau}}_0$, a shared diffusion model $\bs{\epsilon}_\theta$, and a probe noise level $s \in \{1, \ldots, T\}$, the self-reconstruction error is
	\begin{align}
		\mathcal{E}_{\mathrm{recon}}(\hat{\bs{\tau}}_0;\, s) = \E_{\bs{\epsilon}}\!\bigl[\, \| \hat{\bs{\tau}}_0 - \hat{\bs{\tau}}_{0}^{\mathrm{rec}}(\hat{\bs{\tau}}_0, \bs{\epsilon}, s) \|^2 \,\bigr],
		\label{eq:recon_error}
	\end{align}
	where $\hat{\bs{\tau}}_{0}^{\mathrm{rec}}(\hat{\bs{\tau}}_0, \bs{\epsilon}, s)$ is the composed Tweedie reconstruction obtained from the noised sample $\hat{\bs{\tau}}_s = \sqrt{\alpha_s}\, \hat{\bs{\tau}}_0 + \sqrt{1{-}\alpha_s}\, \bs{\epsilon}$: each local segment $\hat{y}_{j,s} = \hat{\bs{\tau}}_s[j]$ is independently denoised via $\hat{y}_{j,0} = (\hat{y}_{j,s} - \sqrt{1{-}\alpha_s}\, \bs{\epsilon}_\theta(\hat{y}_{j,s}, s)) / \sqrt{\alpha_s}$, and the per-segment estimates are merged by the Bethe composition rule in \cref{eq:bethe}.
\end{restatable}

The probe timestep $s$ controls the perturbation magnitude. In practice, we approximate the expectation with a single Monte Carlo sample. Intuitively, when $\hat{\bs{\tau}}_0$ lies near a mode of $p(\bs{\tau})$, the score network $\bs{\epsilon}_\theta$ accurately predicts the injected noise at level $s$, yielding a small reconstruction error.
Conversely, when $\hat{\bs{\tau}}_0$ falls between modes, as occurs under mode-averaging, the score prediction is biased toward a neighboring mode rather than recovering the original inter-modal input, resulting in large reconstruction error.
The following proposition formalizes this connection.

\begin{restatable}[Reconstruction Error as Density Proxy]{proposition}{PropDensity}
	\label{prop:density_proxy}
	Consider the self-reconstruction error $\mathcal{E}_{\mathrm{recon}}(\hat{\bs{\tau}}_0; s)$ with squared-error distance.
	Assume that each local denoiser $\bs{\epsilon}_\theta$ is trained via the standard DDPM objective.
	Then the weighted sum over probe timesteps satisfies
	\begin{align}
		\sum_{s=1}^T \frac{\alpha_s}{1-\alpha_s}\, \mathcal{E}_{\mathrm{recon}}(\hat{\bs{\tau}}_0;\, s)
		= \sum_{s=1}^T \E_{\bs{\epsilon}}\!\left[\, \bigl\| \bs{\epsilon} - \bar{\bs{\epsilon}}_\theta\bigl(\sqrt{\alpha_s}\, \hat{\bs{\tau}}_0 + \sqrt{1{-}\alpha_s}\, \bs{\epsilon},\, s\bigr) \bigr\|_2^2 \,\right]
		\geq -\log p_\theta(\hat{\bs{\tau}}_0),
		\label{eq:elbo_connection}
	\end{align}
	where $\bar{\bs{\epsilon}}_\theta$ is the composed noise prediction and $p_\theta(\hat{\bs{\tau}}_0)$ is the marginal likelihood.
\end{restatable}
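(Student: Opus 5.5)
The proof splits into two parts: an algebraic identity that holds term by term in $s$, followed by the standard variational bound for DDPMs applied to the composed model.

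\emph{Step 1 (the equality).} Fix $s$ and $\bs{\epsilon}$, and write $\hat{\bs{\tau}}_s = \sqrt{\alpha_s}\,\hat{\bs{\tau}}_0 + \sqrt{1-\alpha_s}\,\bs{\epsilon}$. Each local reconstruction is an affine function of that segment's noise prediction, $\hat{y}_{j,0} = (\hat{y}_{j,s} - \sqrt{1-\alpha_s}\,\bs{\epsilon}_\theta(\hat{y}_{j,s},s))/\sqrt{\alpha_s}$. The Bethe merge of \cref{eq:bethe}, in practice the averaging on overlaps, is a convex combination with weights summing to one on each variable $x_i$. It therefore commutes with this affine map, because $\hat{\bs{\tau}}_s$ restricted to every segment is just the corresponding slice of the global vector. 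This gives
\[
\hat{\bs{\tau}}_0^{\mathrm{rec}} = \frac{1}{\sqrt{\alpha_s}}\bigl(\hat{\bs{\tau}}_s - \sqrt{1-\alpha_s}\,\bar{\bs{\epsilon}}_\theta(\hat{\bs{\tau}}_s,s)\bigr),
\]
where $\bar{\bs{\epsilon}}_\theta$ is the composed noise prediction defined by the same merge rule. Substituting $\hat{\bs{\tau}}_s$ yields $\hat{\bs{\tau}}_0 - \hat{\bs{\tau}}_0^{\mathrm{rec}} = \sqrt{(1-\alpha_s)/\alpha_s}\,\bigl(\bar{\bs{\epsilon}}_\theta(\hat{\bs{\tau}}_s,s) - \bs{\epsilon}\bigr)$. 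Squaring, taking $\E_{\bs{\epsilon}}$, multiplying by $\alpha_s/(1-\alpha_s)$, and summing over $s$ gives the left equality of \cref{eq:elbo_connection} exactly, with no approximation.

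\emph{Step 2 (the inequality).} Regard $\bar{\bs{\epsilon}}_\theta$ as the noise predictor of a single diffusion model on the full trajectory space. This model is the reverse chain $p_\theta(\bs{\tau}^{(0:T)})$ with means $\bs{\mu}_\theta$ built from $\bar{\bs{\epsilon}}_\theta$ as in \cref{eq:ddpm_reverse}. Apply Jensen's inequality to $-\log p_\theta(\hat{\bs{\tau}}_0) = -\log \E_{q}[p_\theta(\bs{\tau}^{(0:T)})/q(\bs{\tau}^{(1:T)}\mid \hat{\bs{\tau}}_0)]$ and use the usual Gaussian KL decomposition~\cite{ho2020denoising}. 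Each KL term $L_{s-1}$ then equals $w_s\,\E_{\bs{\epsilon}}\|\bs{\epsilon}-\bar{\bs{\epsilon}}_\theta(\hat{\bs{\tau}}_s,s)\|^2 + C_s$, where the weights $w_s = \beta_s^2/(2\sigma_s^2\alpha_s'(1-\alpha_s))$ come from the reverse variances and the constants $C_s$ do not depend on $\theta$. The assumption that the denoiser is ``trained via the standard DDPM objective'' should be read, as in Ho et al., as working with the simplified objective, which sets these weights to one. Under that reading the unweighted sum of noise-prediction errors is identified with the negative ELBO, which upper-bounds $-\log p_\theta(\hat{\bs{\tau}}_0)$.

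\emph{Main obstacle.} The bound in Step 2 is only literally true after this reweighting. The exact ELBO carries $s$-dependent weights $w_s$ and additive constants: the entropy terms $C_s$, the prior term $L_T$, and the discrete decoder $L_0$. I would handle this by choosing $\sigma_s$ so that $w_s = 1$, or equivalently by stating the result up to positive per-step weights and an additive constant independent of $\hat{\bs{\tau}}_0$, and then absorbing those into the definition of $p_\theta$. Doing so makes the left side equal the negative ELBO of the composed model. One should also note that $p_\theta$ here is the marginal of the composed reverse process, not the Bethe product itself; this follows from Step 1's identification of the composed reconstruction with a single-model Tweedie step.
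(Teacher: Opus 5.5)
\textbf{Where the proposal agrees with the paper, and where it fails.} Your two steps match the paper's proof. Step 1 is its identity, and your observation that overlap averaging is a convex combination commuting with the affine Tweedie map is exactly what the paper compresses into ``direct algebraic manipulation.'' The commutation holds because both segments read the same slice of $\hat{\bs{\tau}}_s$. Step 2 is the paper's appeal to the DDPM variational bound for the model defined by $\bar{\bs{\epsilon}}_\theta$, which the paper likewise asserts only ``up to a timestep re-weighting.'' The gap is in the repair you propose for Step 2, which goes the wrong way.

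\textbf{Why the repair fails.} The constants $C_s$ do not depend on $\theta$, but they do depend on $\sigma_s$. Let $\beta_s = 1-\alpha_s/\alpha_{s-1}$, $\tilde\beta_s = \frac{1-\alpha_{s-1}}{1-\alpha_s}\beta_s$ and $r_s = \tilde\beta_s/\sigma_s^2$. Then $C_s = \frac{ND}{2}(r_s - 1 - \log r_s) \ge 0$, with equality only at $\sigma_s^2 = \tilde\beta_s$. Forcing $w_s = 1$ gives $r_s = 2(1-\beta_s)(1-\alpha_{s-1})/\beta_s$, which is $\gg 1$ for all but the first few steps. So $\sum_s C_s$ is a large positive constant that enters the negative ELBO with a plus sign. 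What you actually obtain is $-\log p_\theta(\hat{\bs{\tau}}_0) \le \sum_s \E_{\bs{\epsilon}}\|\bs{\epsilon} - \bar{\bs{\epsilon}}_\theta\|^2 + \sum_s C_s + L_T + \cdots$. That is not the claim, and the left side of the statement is not the negative ELBO of that model, contrary to what you assert.

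Three further points in that paragraph also do not hold:
\begin{itemize}
\item ``Absorbing'' a constant $c$ replaces $p_\theta$ by $e^{-c}p_\theta$. This is unnormalized, so it is no longer a marginal likelihood.
\item Your second option is not equivalent to the first. It is a strictly weaker, though true, statement.
\item Training with $L_{\mathrm{simple}}$ cannot reset the ELBO's weights. The bound holds for whatever $\bar{\bs{\epsilon}}_\theta$ is, and its weights are fixed by the reverse variances.
\end{itemize}

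\textbf{The choice that works is the opposite one.} For $s \ge 2$, take $\sigma_s^2 = \tilde\beta_s$. Then $C_s = 0$ and $w_s = \beta_s/(2(1-\beta_s)(1-\alpha_{s-1})) \le 1$. This holds for the usual linear schedules; the binding case $s=2$ reads roughly $\beta_2 \le 2\beta_1$. At $s=1$, use a continuous Gaussian decoder of variance $\beta_1$ rather than Ho et al.'s discrete one. Its expected negative log-density is $\frac{1}{2(1-\beta_1)}\E_{\bs{\epsilon}}\|\bs{\epsilon} - \bar{\bs{\epsilon}}_\theta(\hat{\bs{\tau}}_1, 1)\|^2 + \frac{ND}{2}\log(2\pi\beta_1)$. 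Since every weight is at most one,
\[
-\log p_\theta(\hat{\bs{\tau}}_0) \le \sum_{s=1}^T \E_{\bs{\epsilon}}\|\bs{\epsilon} - \bar{\bs{\epsilon}}_\theta(\hat{\bs{\tau}}_s, s)\|^2 + L_T + \tfrac{ND}{2}\log(2\pi\beta_1),
\]
where $L_T \approx \frac{1}{2}\alpha_T\|\hat{\bs{\tau}}_0\|^2$ and $\log(2\pi\beta_1) < 0$. The last two terms are therefore nonpositive whenever $\alpha_T\|\hat{\bs{\tau}}_0\|^2 \lesssim ND\,|\log(2\pi\beta_1)|$. Under that condition the stated inequality holds literally, with no reweighting, no absorbed constants, and no use of the training hypothesis.
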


The proof can be found in \cref{app:proof_density}. Since $\mathcal{E}_{\mathrm{recon}}$ at a single probe timestep $s$ is one non-negative term of this upper bound on $-\log p_\theta(\hat{\bs{\tau}}_0)$, minimizing it is directionally aligned with maximizing the composed log-density, providing a principled justification for using $\mathcal{E}_{\mathrm{recon}}$ as a guidance objective.

\begin{figure}[t]
	\centering
    \includegraphics[width=0.999\linewidth]{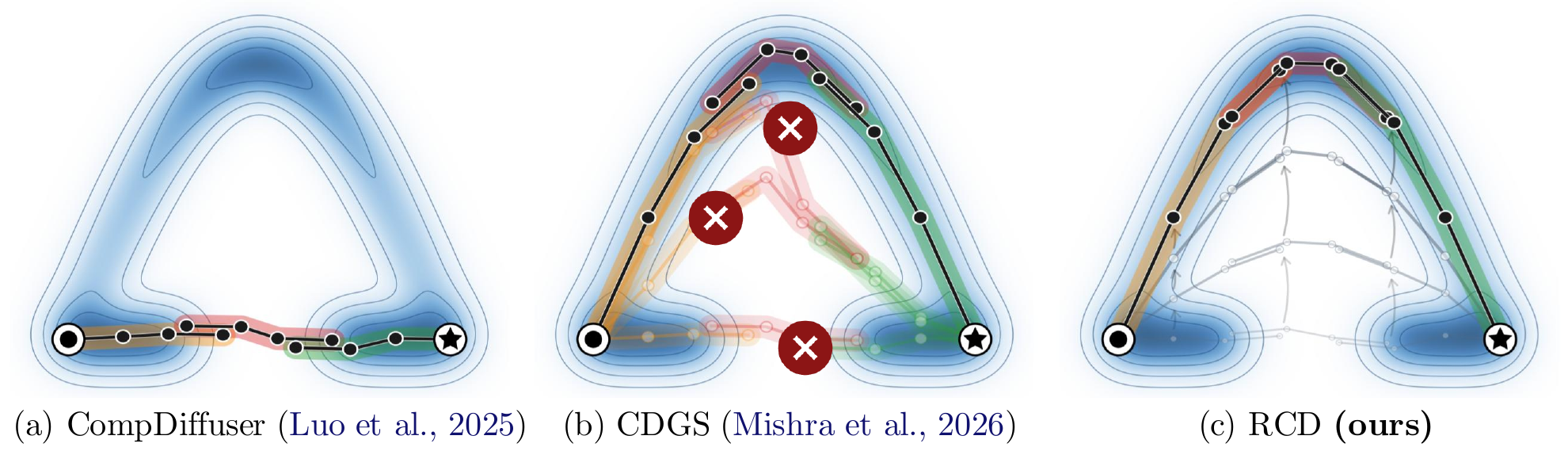}
    \caption{\textbf{Overview of RCD, contrasted with CompDiffuser and CDGS.} Trajectories are planned from start \protect\raisebox{-.08cm}{\includegraphics[height=.3cm]{figures/mark_start_crop.png}} to goal \protect\raisebox{-.08cm}{\includegraphics[height=.3cm]{figures/mark_goal_crop.png}} over a transition distribution \protect\raisebox{-.08cm}{\includegraphics[height=.3cm]{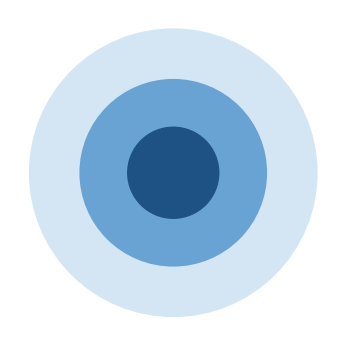}}. \textbf{(a)}~CompDiffuser composes overlapping segments via bidirectional conditioning and autoregressive denoising, but produces mode-averaged trajectories in low-density regions when local distributions are multimodal. \textbf{(b)}~CDGS mitigates this through population-based search and pruning at substantial inference cost. \textbf{(c)}~RCD iteratively refines each denoising step toward high-density, globally coherent plans using the self-reconstruction error and overlap consistency (\cref{sec:rcd_guidance}).}
	\label{fig:overview}
    \vspace{-0.2cm}
\end{figure}

\subsection{Overlap Consistency}
\label{sec:overlap}

The self-reconstruction error captures overall fidelity but may miss \emph{local inconsistencies} at segment boundaries.
Two adjacent segments may each reconstruct well individually while predicting different values for their shared variables $y_k \cap y_{k+1}$.
We introduce the overlap consistency term to detect and penalize such disagreements.

\begin{restatable}[Overlap Consistency]{definition}{DefOverlap}
	\label{def:overlap}
	Let $\hat{y}_{1,0}, \ldots, \hat{y}_{M,0}$ be the per-segment Tweedie estimates from the reconstruction step of \cref{def:recon}.
	The overlap consistency is
	\begin{align}
		\mathcal{E}_{\mathrm{ov}}(\hat{y}_{1:M}) := \frac{1}{M-1} \sum_{k=1}^{M-1} \bigl\| \hat{y}_{k,0}\big|_{y_k \cap y_{k+1}} - \hat{y}_{k+1,0}\big|_{y_k \cap y_{k+1}} \bigr\|_2^2,
		\label{eq:overlap}
	\end{align}
	where $\hat{y}_{j,0}\big|_{y_k \cap y_{k+1}}$ denotes the Tweedie estimate of segment $j$ restricted to the variables shared between segments $k$ and $k{+}1$.
\end{restatable}

The overlap consistency is zero when all segments agree perfectly in their shared regions, a necessary condition for the Bethe approximation to be self-consistent, and grows when independent denoising pushes adjacent segments toward incompatible modes.

\begin{restatable}[Overlap Consistency Measures Score Disagreement]{proposition}{PropOverlap}
	\label{prop:overlap}
	Let $\hat{y}_{k,0}\big|_{y_k \cap y_{k+1}}$ and $\hat{y}_{k+1,0}\big|_{y_k \cap y_{k+1}}$ be the per-segment Tweedie estimates restricted to the overlap region $y_k \cap y_{k+1}$.
	Since both segments receive the same noisy input in the overlap (extracted from $\hat{\bs{\tau}}_s$), the overlap mismatch decomposes as
	\begin{align}
		\bigl\| \hat{y}_{k,0}\big|_{y_k \cap y_{k+1}} - \hat{y}_{k+1,0}\big|_{y_k \cap y_{k+1}} \bigr\|^2
		= \frac{(1{-}\alpha_s)^2}{\alpha_s}\, \bigl\| \mathbf{s}_\theta^{(k)} - \mathbf{s}_\theta^{(k+1)} \bigr\|^2\big|_{y_k \cap y_{k+1}},
		\label{eq:overlap_as_score}
	\end{align}
    where $\mathbf{s}_\theta^{(j)} := -\bs{\epsilon}_\theta^{(j)} / \sqrt{1{-}\alpha_s}$ is the score from segment $j$.
\end{restatable}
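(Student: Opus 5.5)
The plan is a direct algebraic computation from the per-segment Tweedie estimates of \cref{def:recon}. First we write each segment's estimate explicitly. For $j \in \{k, k+1\}$,
\[
\hat{y}_{j,0} = \bigl(\hat{y}_{j,s} - \sqrt{1-\alpha_s}\,\bs{\epsilon}_\theta^{(j)}\bigr)/\sqrt{\alpha_s},
\]
where $\bs{\epsilon}_\theta^{(j)} := \bs{\epsilon}_\theta(\hat{y}_{j,s}, s)$ is segment $j$'s noise prediction. With the neighbor conditioning left implicit, these are genuinely different functions for $j = k$ and $j = k+1$, even though the network is shared. Next we restrict both estimates to the coordinates indexed by $y_k \cap y_{k+1}$.

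The key observation is that both segments are cut from the same composed noisy trajectory $\hat{\bs{\tau}}_s = \sqrt{\alpha_s}\,\hat{\bs{\tau}}_0 + \sqrt{1-\alpha_s}\,\bs{\epsilon}$. Hence $\hat{y}_{k,s}|_{y_k\cap y_{k+1}} = \hat{y}_{k+1,s}|_{y_k\cap y_{k+1}}$ exactly. Subtracting the two restricted estimates therefore cancels the noisy-input term and leaves
\[
\hat{y}_{k,0}|_{\cap} - \hat{y}_{k+1,0}|_{\cap} = -\sqrt{\tfrac{1-\alpha_s}{\alpha_s}}\,\bigl(\bs{\epsilon}_\theta^{(k)} - \bs{\epsilon}_\theta^{(k+1)}\bigr)\big|_{\cap}.
\]

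We then substitute the score parametrization $\bs{\epsilon}_\theta^{(j)} = -\sqrt{1-\alpha_s}\,\mathbf{s}_\theta^{(j)}$ from the statement, consistent with \cref{eq:tweedie_diffusion}. This turns the difference into
\[
\frac{1-\alpha_s}{\sqrt{\alpha_s}}\,\bigl(\mathbf{s}_\theta^{(k)} - \mathbf{s}_\theta^{(k+1)}\bigr)\big|_{\cap}.
\]
Taking squared Euclidean norms gives the prefactor $(1-\alpha_s)^2/\alpha_s$ in \cref{eq:overlap_as_score}.

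The only delicate step is justifying that the noisy inputs coincide on the overlap. The mismatch must be measured before the Bethe merge: the per-segment estimates in \cref{def:overlap} are the pre-merge quantities, and the noising is applied once to the full trajectory, not independently per segment. So we will state carefully that $\hat{y}_{j,s}$ is the restriction of the single $\hat{\bs{\tau}}_s$, and that the restriction operator is linear and commutes with the affine Tweedie map. Once this is in place the identity is exact, with no expectation over $\bs{\epsilon}$ needed: it holds pointwise for each draw of $\bs{\epsilon}$, and so also after averaging.
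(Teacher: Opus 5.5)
Your proposal is correct and takes essentially the same route as the paper. Both arguments restrict the two per-segment Tweedie estimates to the overlap and note that they share the same noisy input $\hat{\bs{\tau}}_s\big|_{y_k \cap y_{k+1}}$, which cancels in the difference. Both then substitute $\bs{\epsilon}_\theta^{(j)} = -\sqrt{1-\alpha_s}\,\mathbf{s}_\theta^{(j)}$ to obtain the prefactor $(1-\alpha_s)^2/\alpha_s$; you substitute before squaring rather than after, which makes no difference.
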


The proof can be found in \cref{app:proof_overlap}. \cref{prop:overlap} reveals that the overlap consistency measures the \emph{score disagreement} between adjacent segments.
Bethe marginal consistency requires that both segments produce identical scores in their shared region, so $\mathbf{s}_\theta^{(k)} \neq \mathbf{s}_\theta^{(k+1)}$ signals a violation.
When they disagree, the overlap variables are pulled in different directions, a hallmark of mode-averaging.
Minimizing $\mathcal{E}_{\mathrm{ov}}$ thus encourages the marginal consistency implicit in the Bethe approximation, providing complementary information to the global reconstruction error.

\subsection{RCD Guidance for Compositional Diffusion}
\label{sec:rcd_guidance}

Combining the self-reconstruction error and overlap consistency, we define the RCD guidance objective:
\begin{align}
	\mathcal{E}_{\mathrm{RCD}}(\hat{\bs{\tau}}_0;\, s) := \mathcal{E}_{\mathrm{recon}}(\hat{\bs{\tau}}_0;\, s) + \lambda_{\mathrm{ov}}\, \mathcal{E}_{\mathrm{ov}}(\hat{y}_{1:M}),
	\label{eq:rcd_obj}
\end{align}
where $\lambda_{\mathrm{ov}} \geq 0$ weights the overlap consistency.
Following the training-free guidance framework (\cref{sec:tfg}), RCD treats $\mathcal{E}_{\mathrm{RCD}}$ as the loss function and applies its gradient to steer the reverse process. At each denoising step $t$, we first obtain the Tweedie estimate $\hat{\bs{\tau}}_0^{(t)}$ from the current noisy sample $\bs{\tau}^{(t)}$ via \cref{eq:tweedie_diffusion}, then compute the guidance gradient with respect to $\bs{\tau}^{(t)}$ through the Tweedie estimate:
\begin{align}
	\mathbf{g}^{(t)} = \nabla_{\bs{\tau}^{(t)}} \mathcal{E}_{\mathrm{RCD}}\!\bigl(\hat{\bs{\tau}}_0^{(t)};\, s\bigr).
	\label{eq:rcd_grad}
\end{align}
The gradient is normalized and applied as a correction to the standard reverse step~\cite{shen2024understanding}:
\begin{align}
	\bs{\tau}^{(t-1)} = \bs{\mu}_\theta(\bs{\tau}^{(t)}, t) + \sigma_t\, \mathbf{z} - w\, \sigma_t^2\, \tilde{\mathbf{g}}^{(t)}, \quad \tilde{\mathbf{g}}^{(t)} = \frac{\mathbf{g}^{(t)}}{\|\mathbf{g}^{(t)}\|_\infty + \delta},
	\label{eq:rcd_update}
\end{align}
where $w > 0$ is the guidance weight and $\delta > 0$ ensures numerical stability. The resulting guided reverse process targets a tilted distribution $\tilde{p}(\bs{\tau}) \propto p_\theta(\bs{\tau}) \cdot \exp(-w \cdot \mathcal{E}_{\mathrm{RCD}}(\bs{\tau}))$ that concentrates on high-density, globally coherent plans, with the guidance weight $w$ controlling the sharpness of this concentration (see \cref{prop:guided} in \cref{app:proof_guided} for the formal statement and proof).
The full procedure is summarized as pseudo-code in \cref{alg:rcd}, with further practical details provided in \cref{app:practical_alg}.

%=============================================================================
% EXPERIMENTS
%=============================================================================
\section{Experiments}
\label{sec:experiments}

In this section, we present the effectiveness of RCD on long-horizon planning tasks from OGBench~\cite{park2025ogbench}.
Specifically, we demonstrate \textbf{(1)} that RCD produces more physically feasible plans than existing compositional methods, \textbf{(2)} that it further enhances planning performance on locomotion, object manipulation, and pixel-based observation tasks, and \textbf{(3)} that the two guidance components are complementary.
Additional details regarding our experimental setup and implementation are provided in \cref{app:impl_detail}, and further results including additional ablation studies in \cref{app:add_results}.

% \vspace{-0.2cm}
\paragraph{Datasets and Environments.}

We evaluate on OGBench~\cite{park2025ogbench}, a recently proposed benchmark for offline goal-conditioned reinforcement learning that provides diverse long-horizon tasks with varying difficulty levels.
Our evaluation covers three broad categories:
\textbf{(1) Locomotion:} \texttt{PointMaze}, \texttt{AntMaze}, and \texttt{HumanoidMaze} environments in \texttt{Stitch} datasets across \texttt{Medium}, \texttt{Large}, and \texttt{Giant} maze sizes. \textbf{(2) Object Manipulation:} \texttt{Cube} manipulation tasks (\texttt{Single}, \texttt{Double}, \texttt{Triple}, \texttt{Quadruple}) and \texttt{AntSoccer} (\texttt{Arena}, \texttt{Medium}).
\textbf{(3) Pixel-based Observations:} \texttt{AntMaze-Medium-Stitch} and \texttt{AntMaze-Large-Stitch} with visual observations, where planning operates in a learned latent space.
The \texttt{Stitch} datasets are constructed from short, unconnected trajectory segments that do not individually span start-to-goal pairs, making long-horizon planning critically dependent on compositional stitching. Further details on environments and evaluation protocol are provided in \cref{app:impl_detail}.

% \vspace{-0.2cm}
\paragraph{Baselines.}
We compare with offline goal-conditioned RL methods including goal-conditioned behavioral cloning (GCBC)~\cite{lynch2020learning}, goal-conditioned implicit V-learning (GCIVL) and Q-learning (GCIQL)~\cite{kostrikov2022offline}, Quasimetric RL (QRL)~\cite{wang2023optimal}, Contrastive RL (CRL)~\cite{eysenbach2022contrastive}, and Hierarchical implicit Q-learning (HIQL)~\cite{park2023hiql}. We also include diffusion-based generative planning baselines: Generative Skill Chaining (GSC)~\cite{mishra2023generative}, CompDiffuser (CD)~\cite{luo2025generative}, and CDGS~\cite{mishra2026compositional}. For CD, CDGS, and RCD, we use the same pretrained local diffusion model for fair comparison.

%-----------------------------------------------------------------------------
\subsection{Plan Feasibility Analysis}
\label{sec:plan_quality}

\begin{wrapfigure}{r}{7.0cm}
	\vspace{-0.5cm}
	\centering
	\includegraphics[width=1.0\linewidth]{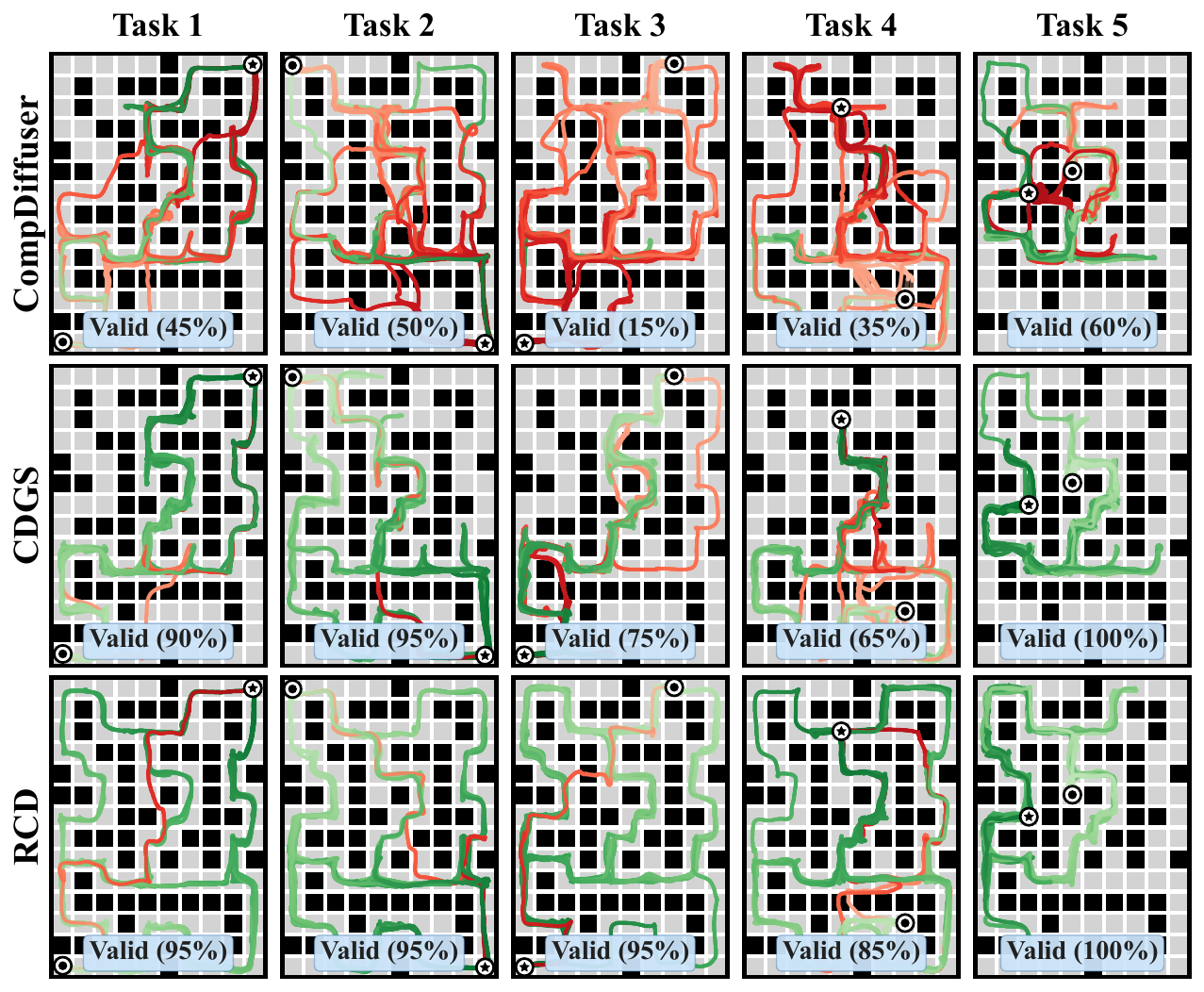}
	\caption{\textbf{Plan quality comparisons of CompDiffuser, CDGS, and RCD.} Each column shows 20 sampled plans in the \texttt{AntMaze-Giant-Stitch} environment for 5 test-time tasks defined in OGBench. Plans that violate environment constraints (wall penetration) are shown in \textcolor[HTML]{f28482}{red}; feasible plans in \textcolor[HTML]{2ca25f}{green}.}
	\label{fig:plan_quality}
	\vspace{-0.3cm}
\end{wrapfigure}

We first assess whether RCD improves the physical feasibility of composed plans.
\cref{fig:plan_quality} shows 20 sampled plans from CompDiffuser, CDGS, and RCD on each of the 5 test-time tasks in \texttt{AntMaze-Giant-Stitch}, together with the \emph{valid plan rate}: the fraction of plans that do not pass through maze walls.
CompDiffuser produces many infeasible plans, with valid rates ranging from 15\% to 60\% across tasks, as mode-averaging at corridor junctions pushes plans into physically impossible regions.
CDGS improves validity through population-based pruning, but remains inconsistent across tasks since pruning only filters already-corrupted trajectories post hoc, and tends to collapse plan diversity toward a single mode.
RCD achieves the highest valid rates across all 5 tasks by directly steering denoising toward high-density regions through the self-reconstruction error and overlap consistency, while preserving diverse plans across different feasible modes.
We observe similar trends on \texttt{PointMaze-Giant-Stitch}; see \cref{app:plan_quality_viz} for additional visualizations.

%-----------------------------------------------------------------------------
\subsection{Enhancing Long-Horizon Planning Performance}
\label{sec:main_results}

\begin{table}[t]
	\centering
	\caption{\textbf{Quantitative results on OGBench locomotion tasks.} Each environment provides 5 goal-conditioned tasks; we evaluate 20 episodes per task and report mean $\pm$ std across 5 seeds.}
	\label{tab:locomotion}
	\vspace{1mm}
	\resizebox{\textwidth}{!}{%
		\footnotesize
		\setlength{\tabcolsep}{4.5pt}
		\begin{tabular}{lll llllllllll}
			\toprule
			\textbf{Env}                           & \textbf{Type}                       & \textbf{Size}   & \textbf{GCBC}   & \textbf{GCIVL} & \textbf{GCIQL}  & \textbf{QRL}    & \textbf{CRL}   & \textbf{HIQL}            & \textbf{GSC}             & \textbf{CD}              & \textbf{CDGS}            & \textbf{RCD}             \\
			\midrule
			\multirow[c]{3}{*}{\texttt{pointmaze}} & \multirow[c]{3}{*}{\texttt{stitch}}
			                                       & \texttt{Medium}                     & \valstd{23}{18} & \valstd{70}{14} & \valstd{21}{9} & \valstd{80}{12} & \valstd{0}{1}   & \valstd{74}{6} & \valstd{\mathbf{100}}{0} & \valstd{\mathbf{100}}{0} & \valstd{\mathbf{100}}{0} & \valstd{\mathbf{100}}{0}                            \\
			                                       &                                     & \texttt{Large}  & \valstd{7}{5}   & \valstd{12}{6} & \valstd{31}{2}  & \valstd{84}{15} & \valstd{0}{0}  & \valstd{13}{6}           & \valstd{\mathbf{100}}{0} & \valstd{\mathbf{100}}{0} & \valstd{\mathbf{100}}{0} & \valstd{\mathbf{100}}{0} \\
			                                       &                                     & \texttt{Giant}  & \valstd{0}{0}   & \valstd{0}{0}  & \valstd{0}{0}   & \valstd{50}{8}  & \valstd{0}{0}  & \valstd{0}{0}            & \valstd{29}{3}           & \valstd{69}{3}           & \valstd{74}{3}           & \valstd{\mathbf{100}}{0} \\
			\midrule
			\multirow[c]{3}{*}{\texttt{antmaze}}   & \multirow[c]{3}{*}{\texttt{stitch}}
			                                       & \texttt{Medium}                     & \valstd{45}{11} & \valstd{44}{6}  & \valstd{29}{6} & \valstd{59}{7}  & \valstd{53}{6}  & \valstd{94}{1} & \valstd{\mathbf{97}}{2}  & \valstd{91}{2}           & \valstd{93}{3}           & \valstd{\mathbf{97}}{2}                             \\
			                                       &                                     & \texttt{Large}  & \valstd{3}{3}   & \valstd{18}{2} & \valstd{7}{2}   & \valstd{18}{2}  & \valstd{11}{2} & \valstd{67}{5}           & \valstd{66}{2}           & \valstd{89}{2}           & \valstd{63}{4}           & \valstd{\mathbf{91}}{2}  \\
			                                       &                                     & \texttt{Giant}  & \valstd{0}{0}   & \valstd{0}{0}  & \valstd{0}{0}   & \valstd{0}{0}   & \valstd{0}{0}  & \valstd{2}{2}            & \valstd{20}{1}           & \valstd{67}{3}           & \valstd{83}{3}           & \valstd{\mathbf{89}}{2}  \\
			\midrule
			\multirow[c]{3}{*}{\makecell{\texttt{humanoid}                                                                                                                                                                                                                                                                                \\\texttt{maze}}} & \multirow[c]{3}{*}{\texttt{stitch}}
			                                       & \texttt{Medium}                     & \valstd{29}{5}  & \valstd{12}{2}  & \valstd{12}{3} & \valstd{18}{2}  & \valstd{36}{2}  & \valstd{88}{2} & \valstd{92}{1}           & \valstd{92}{2}           & \valstd{90}{2}           & \valstd{\mathbf{93}}{2}                             \\
			                                       &                                     & \texttt{Large}  & \valstd{6}{3}   & \valstd{1}{1}  & \valstd{0}{0}   & \valstd{3}{1}   & \valstd{4}{1}  & \valstd{28}{3}           & \valstd{70}{3}           & \valstd{74}{3}           & \valstd{63}{4}           & \valstd{\mathbf{79}}{3}  \\
			                                       &                                     & \texttt{Giant}  & \valstd{0}{0}   & \valstd{0}{0}  & \valstd{0}{0}   & \valstd{0}{0}   & \valstd{0}{0}  & \valstd{3}{2}            & \valstd{5}{1}            & \valstd{44}{4}           & \valstd{35}{4}           & \valstd{\mathbf{62}}{3}  \\
			\bottomrule
		\end{tabular}%
	}
	\vspace{-0.4cm}
    % \vspace{-0.2cm}
\end{table}

\paragraph{Locomotion.}
\label{sec:locomotion_results}

\cref{tab:locomotion} reports success rates on \texttt{PointMaze-Stitch}, \texttt{AntMaze-Stitch}, and \texttt{HumanoidMaze-Stitch} tasks, where all methods plan in the 2D $x$-$y$ position space and execute actions via a learned inverse dynamics model~\cite{luo2025generative}.
Offline GCRL algorithms struggle on \texttt{Giant} mazes, where the required planning horizon far exceeds the length of any single training trajectory.
Compositional diffusion-based planners overcome this limitation by assembling long plans from short segments.
RCD achieves the best performance across nearly all tasks. As indicated by the enhanced plan feasibility in \cref{sec:plan_quality}, RCD translates improved plan quality into higher success rates, particularly on the most challenging \texttt{Giant} tasks where mode-averaging is most severe.

\paragraph{Object Manipulation in High-Dimensional State Spaces.}
\label{sec:manipulation_results}

\begin{wrapfigure}{r}{5.5cm}
	\vspace{-0.4cm}
	\centering
	\begin{minipage}{0.45\linewidth}
		\centering
		\includegraphics[width=\linewidth]{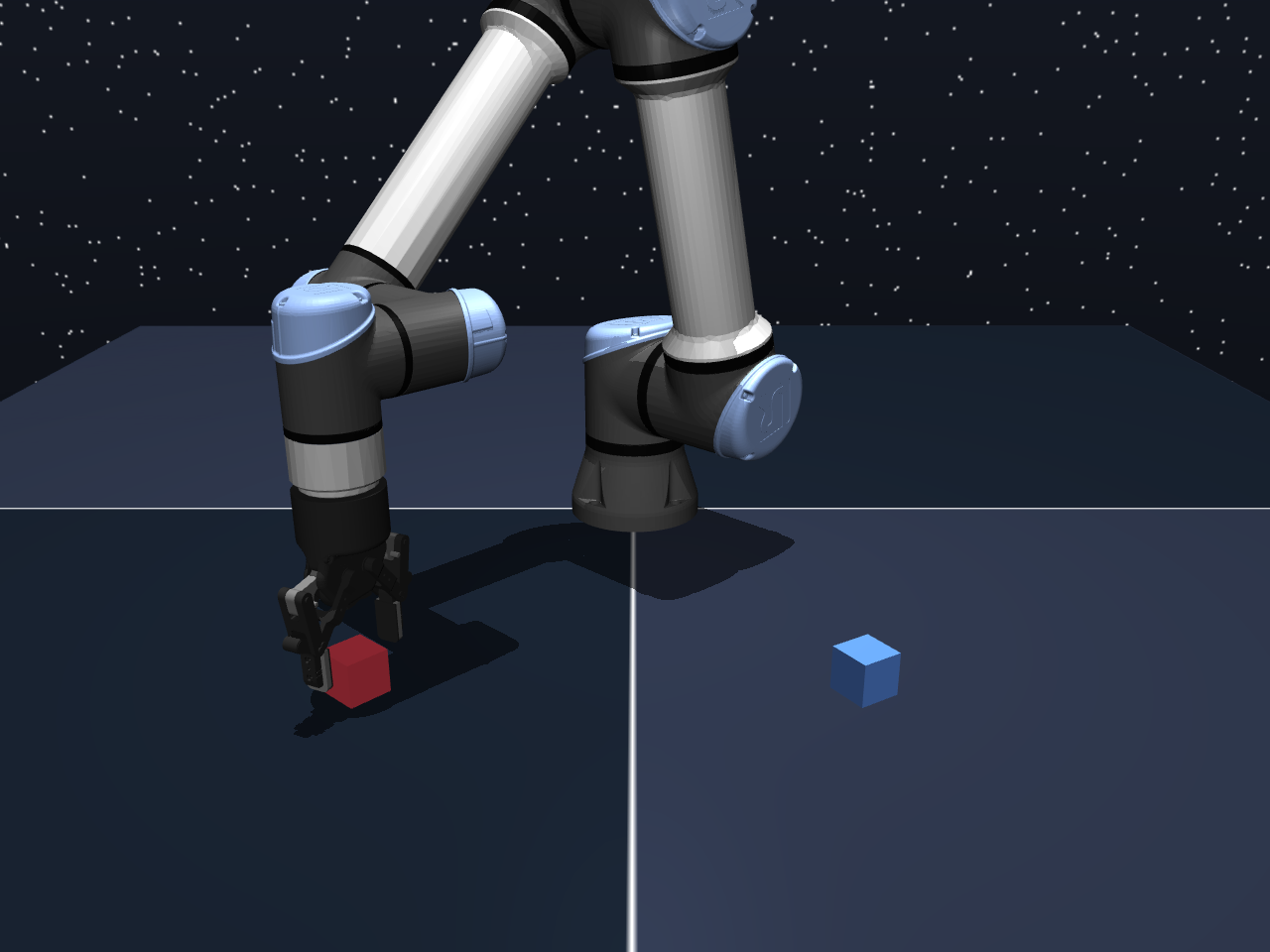}
	\end{minipage}%
	\hfill$\rightarrow$\hfill%
	\begin{minipage}{0.45\linewidth}
		\centering
		\includegraphics[width=\linewidth]{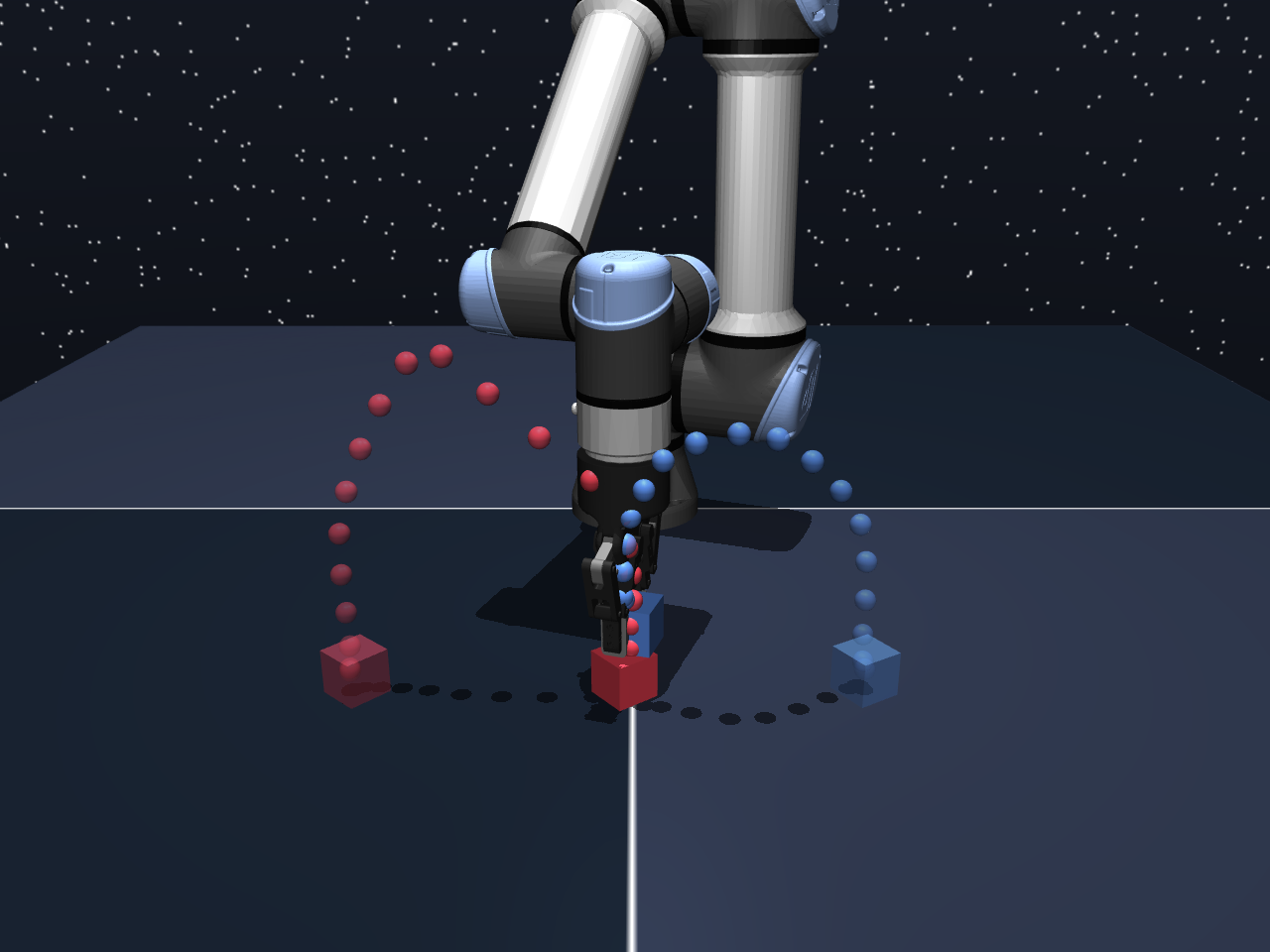}
	\end{minipage}
	\caption{\textbf{A cube manipulation sequence executed by RCD.}}
	\label{fig:cube_example}
	\vspace{-0.3cm}
\end{wrapfigure}

\cref{tab:manipulation} reports success rates on \texttt{Cube} and \texttt{AntSoccer} tasks, which involve planning in high-dimensional state spaces.
For \texttt{Cube}, the planner operates on the full state including the 6-DoF robot arm joint positions and all cube poses, and actions are generated by a DQL-based value-learning policy~\cite{wang2022diffusion}; for \texttt{AntSoccer}, we use the 17D planner that includes the $x$-$y$ positions of the ant and ball along with all 13 joint positions of the ant~\cite{luo2025generative}, with actions produced by a learned inverse dynamics model.
\texttt{Cube} tasks require composing dexterous pick-and-place sequences, ranging from \texttt{Single} cube to \texttt{Quadruple} with increasing horizon requirements.
\texttt{AntSoccer} requires coordinating ant locomotion with ball dribbling to reach a goal location.
Among offline GCRL methods, GCIQL performs best on \texttt{Cube}, while most baselines struggle on \texttt{AntSoccer} due to the long planning horizon.
RCD achieves the best or competitive performance across all tasks, demonstrating that the RCD guidance generalizes effectively to high-dimensional planning spaces.

\paragraph{Visual AntMaze.}
\label{sec:pixel_results}

\begin{wrapfigure}{r}{6.5cm}
	\vspace{-0.5cm}
    % \vspace{-0.4cm}
	\centering
	\captionof{table}{\textbf{Quantitative results on OGBench visual antmaze tasks.}}
	\label{tab:pixel}
	\vspace{-0.2cm}
	\footnotesize
	\setlength{\tabcolsep}{4.5pt}
	\begin{tabular}{ll llll}
		\toprule
		\textbf{Env} & \textbf{Size}   & \textbf{GSC}   & \textbf{CD}    & \textbf{CDGS}  & \textbf{RCD}            \\
		\midrule
		\multirow[c]{2}{*}{\makecell{\texttt{visual}                                                                \\\texttt{antmaze}}}
		             & \texttt{Medium} & \valstd{40}{4} & \valstd{55}{3} & \valstd{26}{4} & \valstd{\mathbf{63}}{3} \\
		             & \texttt{Large}  & \valstd{8}{3}  & \valstd{15}{3} & \valstd{11}{3} & \valstd{\mathbf{18}}{3} \\
		\bottomrule
	\end{tabular}
	\vspace{0.2cm}
	\includegraphics[width=0.95\linewidth]{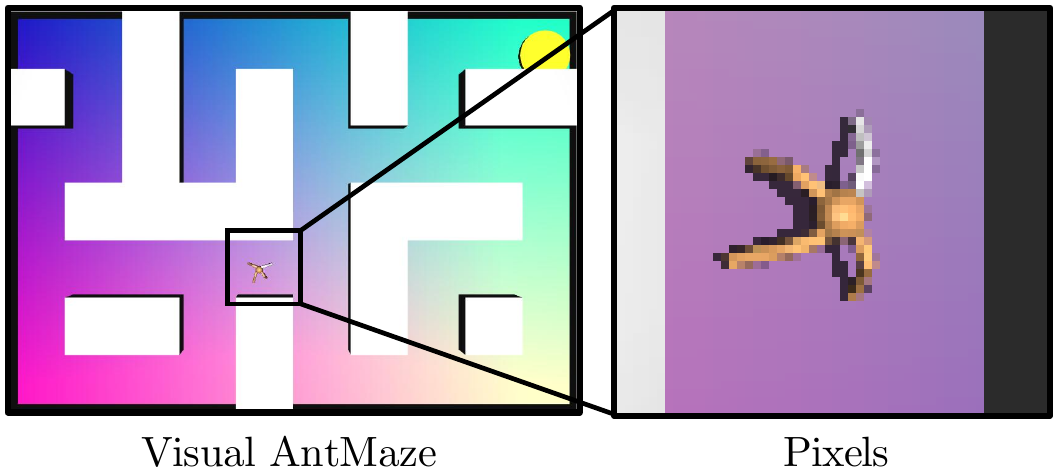}
    \vspace{-0.2cm}
	% \captionof{figure}{\textbf{Task example in the visual antmaze.}}
	\label{fig:visual_antmaze}
	% \vspace{-0.8cm}
    \vspace{-0.4cm}
\end{wrapfigure}

To evaluate RCD beyond state-based settings, we test on \texttt{AntMaze-Stitch} tasks with pixel observations, where the agent observes $64 \times 64$ RGB images and planning operates in a 16-dimensional latent space learned by a variational autoencoder. Actions are produced by an inverse dynamics model that maps consecutive latent states to actions.
\cref{tab:pixel} reports results for GSC, CD, CDGS, and RCD.
RCD achieves improved performance on both \texttt{Medium} and \texttt{Large} mazes, confirming that the self-reconstruction error provides a meaningful density signal even in a compressed latent space where the original pixel-level structure is abstracted away.

%-----------------------------------------------------------------------------
\subsection{Additional Results}
\label{sec:ablation}

\begin{table}[t]
	\centering
	\caption{\textbf{Quantitative results on OGBench object manipulation tasks.} Each environment provides 5 goal-conditioned tasks; we evaluate 20 episodes per task and report mean $\pm$ std across 5 seeds.}
	\label{tab:manipulation}
	\vspace{1mm}
	\resizebox{\textwidth}{!}{%
		\footnotesize
		\setlength{\tabcolsep}{4.5pt}
		\begin{tabular}{lll llllllllll}
			\toprule
			\textbf{Env}                           & \textbf{Type}                       & \textbf{Size}      & \textbf{GCBC}  & \textbf{GCIVL} & \textbf{GCIQL} & \textbf{QRL}   & \textbf{CRL}   & \textbf{HIQL}            & \textbf{GSC}             & \textbf{CD}              & \textbf{CDGS}            & \textbf{RCD}            \\
			\midrule
			\multirow[c]{4}{*}{\texttt{cube}}      & \multirow[c]{4}{*}{\texttt{play}}
			                                       & \texttt{Single}                     & \valstd{6}{2}      & \valstd{53}{4} & \valstd{68}{6} & \valstd{5}{1}  & \valstd{19}{2} & \valstd{15}{3} & \valstd{\mathbf{100}}{0} & \valstd{\mathbf{100}}{0} & \valstd{\mathbf{100}}{0} & \valstd{\mathbf{100}}{0}                           \\
			                                       &                                     & \texttt{Double}    & \valstd{1}{1}  & \valstd{36}{3} & \valstd{40}{5} & \valstd{1}{0}  & \valstd{10}{2} & \valstd{6}{2}            & \valstd{70}{4}           & \valstd{75}{3}           & \valstd{78}{4}           & \valstd{\mathbf{85}}{3} \\
			                                       &                                     & \texttt{Triple}    & \valstd{1}{1}  & \valstd{1}{0}  & \valstd{3}{1}  & \valstd{0}{0}  & \valstd{4}{1}  & \valstd{3}{1}            & \valstd{45}{4}           & \valstd{51}{4}           & \valstd{\mathbf{60}}{4}  & \valstd{\mathbf{60}}{3} \\
			                                       &                                     & \texttt{Quadruple} & \valstd{0}{0}  & \valstd{0}{0}  & \valstd{0}{0}  & \valstd{0}{0}  & \valstd{0}{0}  & \valstd{0}{0}            & \valstd{30}{4}           & \valstd{48}{4}           & \valstd{51}{4}           & \valstd{\mathbf{55}}{3} \\
			\midrule
			\multirow[c]{2}{*}{\texttt{antsoccer}} & \multirow[c]{2}{*}{\texttt{stitch}}
			                                       & \texttt{Arena}                      & \valstd{24}{8}     & \valstd{21}{3} & \valstd{2}{0}  & \valstd{1}{1}  & \valstd{1}{0}  & \valstd{15}{1} & \valstd{65}{3}           & \valstd{69}{3}           & \valstd{61}{3}           & \valstd{\mathbf{72}}{3}                            \\
			                                       &                                     & \texttt{Medium}    & \valstd{2}{1}  & \valstd{1}{0}  & \valstd{0}{0}  & \valstd{0}{0}  & \valstd{0}{0}  & \valstd{4}{1}            & \valstd{12}{2}           & \valstd{17}{3}           & \valstd{11}{2}           & \valstd{\mathbf{20}}{2} \\
			\bottomrule
		\end{tabular}%
	}
	\vspace{-0.2cm}
\end{table}

We present an ablation on the two RCD guidance components, an analysis of replanning, and a planning-time comparison with baselines below. Additional ablations on the guidance weight, the overlap consistency weight, and the probe level are provided in \cref{app:add_ablation}, confirming that RCD is robust to the choice of these hyperparameters.

\begin{wrapfigure}{r}{5.5cm}
	\vspace{-0.4cm}
	\centering
	\captionof{table}{\textbf{Ablation on guidance components.} Success rates with different combinations of $\mathcal{E}_{\mathrm{recon}}$ and $\mathcal{E}_{\mathrm{ov}}$.}
    \vspace{-0.1cm}
	\label{tab:ablation_components}
	\vspace{1mm}
	\footnotesize
	\begin{tabular}{cc ll}
		\toprule
		$\mathcal{E}_{\mathrm{recon}}$ & $\mathcal{E}_{\mathrm{ov}}$ & \makecell{\texttt{pointmaze}                           \\\texttt{Giant}} & \makecell{\texttt{antmaze}\\\texttt{Giant}} \\
		\midrule
		\xmark                         & \cmark                      & \valstd{59}{3}               & \valstd{56}{3}          \\
		\cmark                         & \xmark                      & \valstd{95}{2}               & \valstd{79}{3}          \\
		\cmark                         & \cmark                      & \valstd{\mathbf{100}}{0}     & \valstd{\mathbf{89}}{2} \\
		\bottomrule
	\end{tabular}
	\vspace{-0.3cm}
\end{wrapfigure}

\paragraph{Ablation on Guidance Components.}
RCD combines two guidance signals: the self-reconstruction error ($\mathcal{E}_{\mathrm{recon}}$) and the overlap consistency ($\mathcal{E}_{\mathrm{ov}}$). \cref{tab:ablation_components} isolates their contributions on \texttt{PointMaze-Giant-Stitch} and \texttt{AntMaze-Giant-Stitch}.
The self-reconstruction error alone already provides a large improvement over the unguided baseline by steering samples toward high-density regions.
Adding the overlap consistency further improves performance, particularly on \texttt{AntMaze-Giant-Stitch}, confirming that the two signals are complementary. $\mathcal{E}_{\mathrm{recon}}$ addresses global density while $\mathcal{E}_{\mathrm{ov}}$ targets local boundary agreement.

\begin{wraptable}{r}{7.5cm}
	\vspace{-0.4cm}
	\caption{\textbf{Ablation study on replanning.} Success rates on \texttt{pointmaze} and \texttt{antmaze} \texttt{stitch}, comparing CD and RCD with (\cmark) and without (\xmark) replanning.}
	\label{tab:ablation_replan}
	\centering
	\resizebox{0.53\textwidth}{!}{%
		\begin{tabular}{ll ll ll}
			\toprule
			\multirow{2}{*}[-0.7ex]{\textbf{Env}} & \multirow{2}{*}[-0.7ex]{\textbf{Size}} & \multicolumn{2}{c}{\textbf{CD}} & \multicolumn{2}{c}{\textbf{RCD}}                                              \\
			\cmidrule(lr){3-4} \cmidrule(lr){5-6}
			                                      &                                        & \textbf{\xmark}                 & \textbf{\cmark}                  & \textbf{\xmark} & \textbf{\cmark}          \\
			\midrule
			\multirow[c]{3}{*}{\texttt{pointmaze}}
			                                      & \texttt{Medium}                        & \valstd{100}{0}                 & \valstd{100}{0}                  & \valstd{100}{0} & \valstd{100}{0}          \\
			                                      & \texttt{Large}                         & \valstd{100}{0}                 & \valstd{100}{0}                  & \valstd{100}{0}  & \valstd{100}{0} \\
			                                      & \texttt{Giant}                         & \valstd{55}{5}                  & \valstd{69}{3}                   & \valstd{98}{1}  & \valstd{\mathbf{100}}{0} \\
			\midrule
			\multirow[c]{3}{*}{\texttt{antmaze}}
			                                      & \texttt{Medium}                        & \valstd{90}{3}                  & \valstd{91}{3}                   & \valstd{91}{2}  & \valstd{\mathbf{97}}{2}  \\
			                                      & \texttt{Large}                         & \valstd{73}{3}                  & \valstd{89}{2}                   & \valstd{76}{3}  & \valstd{\mathbf{91}}{2}  \\
			                                      & \texttt{Giant}                         & \valstd{29}{4}                  & \valstd{67}{3}                   & \valstd{69}{3}  & \valstd{\mathbf{89}}{2}  \\
			\midrule
			\multicolumn{2}{c}{\textbf{Average}}  & $74.5$                                 & $86.0$                          & $89.0$                           & $\mathbf{96.2}$                            \\
			\bottomrule
		\end{tabular}%
	}
	\vspace{-0.2cm}
\end{wraptable}

\paragraph{Ablation Study on Replanning.}
Replanning periodically regenerates a plan from the current state during execution to mitigate execution drift (see \cref{app:eval_protocol} for details).
\cref{tab:ablation_replan} compares CD and RCD with and without replanning on \texttt{PointMaze-Stitch} and \texttt{AntMaze-Stitch} tasks.
Replanning improves both methods, with the largest gains on \texttt{Giant} mazes where long plans are most susceptible to drift.
Notably, RCD without replanning already outperforms CD with replanning on the most challenging \texttt{Giant} tasks, highlighting the importance of refining plans during generation where mode-averaging is most severe.

\begin{wraptable}{r}{7.5cm}
	\vspace{-0.4cm}
	\caption{\textbf{Planning-time comparison with baselines.} Success rates and planning times (sec.).}
	\label{tab:planning_time}
	\centering
	\resizebox{0.53\textwidth}{!}{%
		\begin{tabular}{l ll ll}
			\toprule
			\multirow{2}{*}[-0.7ex]{\textbf{Method}} & \multicolumn{2}{c}{\texttt{pointmaze-Giant}} & \multicolumn{2}{c}{\texttt{antmaze-Giant}}                                                        \\
			\cmidrule(lr){2-3} \cmidrule(lr){4-5}
			                                         & \textbf{Succ.}                               & \textbf{Time (s)}                          & \textbf{Succ.}          & \textbf{Time (s)}          \\
			\midrule
			CD                                       & \valstd{69}{3}                               & \valstd{8.0}{0.5}                          & \valstd{67}{3}          & \valstd{8.9}{0.7}          \\
			CDGS                                     & \valstd{74}{3}                               & \valstd{44.7}{3.9}                         & \valstd{83}{3}          & \valstd{56.7}{7.4}         \\
			RCD                                      & \valstd{\mathbf{100}}{0}                     & \valstd{\mathbf{4.9}}{0.1}                 & \valstd{\mathbf{89}}{2} & \valstd{\mathbf{4.6}}{0.3} \\
			\bottomrule
		\end{tabular}%
	}
	\vspace{-0.2cm}
\end{wraptable}

\paragraph{Planning-Time Comparison.}
\cref{tab:planning_time} compares the success rate and planning time of RCD with CompDiffuser (CD) and CDGS on \texttt{PointMaze-Giant-Stitch} and \texttt{AntMaze-Giant-Stitch}, measured on a single NVIDIA H100 GPU.
RCD applies its guidance on top of the parallel compositional score averaging (\cref{eq:comp_score}). Despite introducing this guidance step, its planning time remains below CD, which adopts an autoregressive variant for bidirectional conditioning, and is an order of magnitude below CDGS (population-based search). Intuitively, RCD is fast because all $M$ segments are evaluated in a single batched call at each denoising step, whereas CD must sweep segments sequentially to propagate bidirectional conditioning and CDGS runs multiple sequential resampling and pruning passes per step.

%=============================================================================
% RELATED WORK
%=============================================================================
\section{Related Work}
\label{sec:related}

\paragraph{Planning with Diffusion Models.}
Diffusion probabilistic models~\cite{ho2020denoising, song2021scorebased} have emerged as a powerful framework for trajectory planning in reinforcement learning.
Diffuser~\cite{janner2022planning} pioneered this direction by training an unconditional diffusion model on offline trajectories and guiding it toward high-return regions via a learned value function.
Decision Diffuser~\cite{ajay2023is} introduced classifier-free guidance, conditioning directly on reward or constraint signals, while AdaptDiffuser~\cite{liang2023adaptdiffuser} progressively fine-tuned the model with synthetic high-quality data.
Subsequent work extended diffusion planning to hierarchical settings~\cite{chen2024simple, li2023hierarchical}, multi-agent coordination~\cite{zhu2023madiff}, latent trajectory spaces~\cite{xie2025latent}, and inference-time tree search over diffusion rollouts~\cite{yoon2025monte, yoon2025fast}.
Monolithic diffusion planners, however, are limited by the training data horizon, motivating \emph{compositional} approaches that stitch short segments at inference time.
GSC~\cite{mishra2023generative} chains skills via score composition, CompDiffuser~\cite{luo2025generative} introduces bidirectional neighbor conditioning for smoother stitching, and CDGS~\cite{mishra2026compositional} combines compositional diffusion with population-based guided search to mitigate mode-averaging.
A complementary line of work improves the reliability of plans produced by a diffusion planner.
LoMAP~\cite{lee2025local} projects intermediate diffusion samples onto a local approximation of the data manifold to suppress off-manifold artifacts, while RGG~\cite{lee2023refining} detects infeasible plans via a separately trained, time-dependent classifier and resamples them, which requires additional training data.
RCD shares the goal of refining diffusion-generated plans but targets the compositional setting, where mode-averaging is the dominant failure mode.
Unlike RGG, which trains a time-dependent classifier on additional labeled data, RCD is fully training-free and uses only the pretrained local denoiser as an intrinsic density proxy.

\vspace{-0.05cm}
\paragraph{Trajectory Stitching.}
The ability to compose novel long-horizon behaviors from short trajectory fragments is a long-standing challenge in offline RL.
One broad category of approaches relies on offline data augmentation, where short or reward-suboptimal trajectories are stitched or synthesized and a downstream policy is retrained on the enriched data, spanning generative augmentation~\cite{li2024diffstitch, jackson2024policy, lee2024gta, lee2025scots, chen2025extendable}, model-based stitching~\cite{char2022bats, hepburn2022model, zhou2023free}, and clustering-based augmentation~\cite{ghugare2024closing}.
RCD instead follows the \emph{inference-time} compositional diffusion line of CompDiffuser~\cite{luo2025generative} and CDGS~\cite{mishra2026compositional}, which compose local diffusion models directly at sampling time and avoid regenerating the dataset.

%=============================================================================
% CONCLUSION
%=============================================================================
\section{Conclusion}
\label{sec:conclusion}

We presented Refining Compositional Diffusion (RCD), a training-free guidance method that addresses the mode-averaging problem in compositional diffusion planning.
By leveraging the self-reconstruction error of a pretrained diffusion model as a density proxy and combining it with an overlap consistency term that penalizes score disagreement at segment boundaries, RCD steers compositional sampling toward high-density, globally coherent plans without additional training.
We established theoretical connections between the reconstruction error and the composed model's evidence lower bound, and showed that the RCD-guided reverse process targets a tilted distribution concentrating on globally coherent plans.
Empirically, RCD achieves consistent improvements over existing compositional methods across OGBench tasks spanning locomotion, object manipulation, and pixel-based observations, with particularly pronounced gains on the most challenging long-horizon tasks where mode-averaging is most severe.

\vspace{-0.05cm}
\paragraph{Limitations.}
The current formulation operates within the Bethe factor graph framework of~\cite{yedidia2005constructing, mishra2026compositional}, which assumes chain-structured overlapping segments. Extending RCD guidance to other compositional structures, such as hierarchical or temporal abstractions, or factor graphs with loops, is an interesting future direction. Another practical aspect, shared with prior compositional planners such as CompDiffuser~\cite{luo2025generative} and CDGS~\cite{mishra2026compositional}, is that the number of local segments used to determine the total horizon is pre-specified at inference time. Adaptively choosing it is another promising direction, for instance via tree-search-style horizon expansion~\cite{yoon2025monte} or by monitoring the self-reconstruction error along candidate horizons.

\vspace{-0.05cm}
\paragraph{Impact Statement.}
This paper advances compositional diffusion planning by introducing a training-free guidance method that refines long-horizon plans toward high-density, globally coherent trajectories. While we do not identify direct negative societal impacts stemming from this research, practitioners deploying diffusion-based planners in real-world decision-making systems are encouraged to carefully assess safety and reliability prior to deployment, particularly in settings where unsafe actions could carry significant consequences.

\bibliography{references}

@inproceedings{tassa2012synthesis,
  title={Synthesis and stabilization of complex behaviors through online trajectory optimization},
  author={Tassa, Yuval and Erez, Tom and Todorov, Emanuel},
  booktitle={2012 IEEE/RSJ International Conference on Intelligent Robots and Systems},
  pages={4906--4913},
  year={2012},
}

@article{silver2016mastering,
  title={Mastering the game of Go with deep neural networks and tree search},
  author={Silver, David and Huang, Aja and Maddison, Chris J and Guez, Arthur and Sifre, Laurent and Van Den Driessche, George and Schrittwieser, Julian and Antonoglou, Ioannis and Panneershelvam, Veda and Lanctot, Marc and others},
  journal={nature},
  volume={529},
  number={7587},
  pages={484--489},
  year={2016},
  publisher={Nature Publishing Group}
}

@article{silver2017mastering,
  title={Mastering the game of go without human knowledge},
  author={Silver, David and Schrittwieser, Julian and Simonyan, Karen and Antonoglou, Ioannis and Huang, Aja and Guez, Arthur and Hubert, Thomas and Baker, Lucas and Lai, Matthew and Bolton, Adrian and others},
  journal={nature},
  volume={550},
  number={7676},
  pages={354--359},
  year={2017},
  publisher={Nature Publishing Group}
}

@inproceedings{lee2018deep,
  title={Deep reinforcement learning in continuous action spaces: a case study in the game of simulated curling},
  author={Lee, Kyowoon and Kim, Sol-A and Choi, Jaesik and Lee, Seong-Whan},
  booktitle={International Conference on Machine Learning (ICML)},
  year={2018},
}

@inproceedings{janner2022planning,
  title={Planning with diffusion for flexible behavior synthesis},
  author={Janner, Michael and Du, Yilun and Tenenbaum, Joshua B and Levine, Sergey},
  booktitle={International Conference on Machine Learning (ICML)},
  year={2022}
}

@inproceedings{
    ajay2023is,
    title={Is Conditional Generative Modeling all you need for Decision Making?},
    author={Anurag Ajay and Yilun Du and Abhi Gupta and Joshua B. Tenenbaum and Tommi S. Jaakkola and Pulkit Agrawal},
    booktitle={International Conference on Learning Representations (ICLR)},
    year={2023}
}

@inproceedings{liang2023adaptdiffuser,
  title={AdaptDiffuser: Diffusion Models as Adaptive Self-evolving Planners},
  author={Liang, Zhixuan and Mu, Yao and Ding, Mingyu and Ni, Fei and Tomizuka, Masayoshi and Luo, Ping},
  booktitle={International Conference on Machine Learning (ICML)},
  year={2023},
}

@inproceedings{luo2025generative,
  title={Generative Trajectory Stitching through Diffusion Composition},
  author={Luo, Yunhao and Mishra, Utkarsh A and Du, Yilun and Xu, Danfei},
  booktitle={Advances in Neural Information Processing Systems (NeurIPS)},
  year={2025}
}

@inproceedings{
mishra2026compositional,
title={Compositional Diffusion with Guided search for Long-Horizon Planning},
author={Utkarsh Aashu Mishra and David He and Yongxin Chen and Danfei Xu},
booktitle={International Conference on Learning Representations (ICLR)},
year={2026},
}

@inproceedings{zhang2023diffcollage,
  title={Diffcollage: Parallel generation of large content with diffusion models},
  author={Zhang, Qinsheng and Song, Jiaming and Huang, Xun and Chen, Yongxin and Liu, Ming-Yu},
  booktitle={2023 IEEE/CVF Conference on Computer Vision and Pattern Recognition (CVPR)},
  pages={10188--10198},
  year={2023},
  organization={IEEE}
}

@inproceedings{du2020compositional,
  title={Compositional visual generation with energy based models},
  author={Du, Yilun and Li, Shuang and Mordatch, Igor},
  booktitle={Advances in Neural Information Processing Systems (NeurIPS)},
  year={2020}
}

@inproceedings{du2023reduce,
  title={Reduce, reuse, recycle: Compositional generation with energy-based diffusion models and mcmc},
  author={Du, Yilun and Durkan, Conor and Strudel, Robin and Tenenbaum, Joshua B and Dieleman, Sander and Fergus, Rob and Sohl-Dickstein, Jascha and Doucet, Arnaud and Grathwohl, Will Sussman},
  booktitle={International Conference on Machine Learning (ICML)},
  year={2023},
}

@inproceedings{mishra2023generative,
  title={Generative skill chaining: Long-horizon skill planning with diffusion models},
  author={Mishra, Utkarsh Aashu and Xue, Shangjie and Chen, Yongxin and Xu, Danfei},
  booktitle={Conference on Robot Learning},
  pages={2905--2925},
  year={2023},
  organization={PMLR}
}

@inproceedings{song2021denoising,
  title={Denoising Diffusion Implicit Models},
  author={Song, Jiaming and Meng, Chenlin and Ermon, Stefano},
  booktitle={International Conference on Learning Representations (ICLR)},
  year={2021}
}

@inproceedings{ho2020denoising,
  title={Denoising diffusion probabilistic models},
  author={Ho, Jonathan and Jain, Ajay and Abbeel, Pieter},
  booktitle={Advances in Neural Information Processing Systems (NeurIPS)},
  year={2020}
}

@article{yedidia2005constructing,
  title={Constructing free-energy approximations and generalized belief propagation algorithms},
  author={Yedidia, Jonathan S and Freeman, William T and Weiss, Yair},
  journal={IEEE Transactions on information theory},
  volume={51},
  number={7},
  pages={2282--2312},
  year={2005},
  publisher={IEEE}
}

@inproceedings{
  song2021scorebased,
  title={Score-Based Generative Modeling through Stochastic Differential Equations},
  author={Yang Song and Jascha Sohl-Dickstein and Diederik P Kingma and Abhishek Kumar and Stefano Ermon and Ben Poole},
  booktitle={International Conference on Learning Representations (ICLR)},
  year={2021},
}

@inproceedings{dhariwal2021diffusion,
  title={Diffusion models beat gans on image synthesis},
  author={Dhariwal, Prafulla and Nichol, Alexander},
  booktitle={Advances in Neural Information Processing Systems (NeurIPS)},
  year={2021}
}

@inproceedings{chung2023diffusion,
  title={Diffusion posterior sampling for general noisy inverse problems},
  author={Chung, Hyungjin and Kim, Jeongsol and Mccann, Michael T and Klasky, Marc L and Ye, Jong Chul},
  booktitle={International Conference on Learning Representations (ICLR)},
  year={2023},
}

@incollection{robbins1992empirical,
  title={An empirical Bayes approach to statistics},
  author={Robbins, Herbert E},
  booktitle={Breakthroughs in Statistics: Foundations and basic theory},
  pages={388--394},
  year={1992},
  publisher={Springer}
}

@inproceedings{song2023loss,
  title={Loss-guided diffusion models for plug-and-play controllable generation},
  author={Song, Jiaming and Zhang, Qinsheng and Yin, Hongxu and Mardani, Morteza and Liu, Ming-Yu and Kautz, Jan and Chen, Yongxin and Vahdat, Arash},
  booktitle={International Conference on Machine Learning (ICML)},
  year={2023},
}

@inproceedings{yang2024guidance,
  title={Guidance with spherical gaussian constraint for conditional diffusion},
  author={Yang, Lingxiao and Ding, Shutong and Cai, Yifan and Yu, Jingyi and Wang, Jingya and Shi, Ye},
  booktitle={International Conference on Machine Learning (ICML)},
  year={2024}
}

@inproceedings{he2024manifold,
  title={Manifold preserving guided diffusion},
  author={He, Yutong and Murata, Naoki and Lai, Chieh-Hsin and Takida, Yuhta and Uesaka, Toshimitsu and Kim, Dongjun and Liao, Wei-Hsiang and Mitsufuji, Yuki and Kolter, J Zico and Salakhutdinov, Ruslan and others},
  booktitle={International Conference on Learning Representations (ICLR)},
  year={2024}
}

@inproceedings{park2025ogbench,
  title={Ogbench: Benchmarking offline goal-conditioned rl},
  author={Park, Seohong and Frans, Kevin and Eysenbach, Benjamin and Levine, Sergey},
  booktitle={International Conference on Learning Representations (ICLR)},
  year={2025}
}

@inproceedings{kostrikov2022offline,
  title={Offline reinforcement learning with implicit q-learning},
  author={Kostrikov, Ilya and Nair, Ashvin and Levine, Sergey},
  booktitle={International Conference on Learning Representations (ICLR)},
  year={2022}
}

@inproceedings{lynch2020learning,
  title={Learning latent plans from play},
  author={Lynch, Corey and Khansari, Mohi and Xiao, Ted and Kumar, Vikash and Tompson, Jonathan and Levine, Sergey and Sermanet, Pierre},
  booktitle={Conference on robot learning},
  year={2020},
}

@article{ghosh2019learning,
  title={Learning to reach goals via iterated supervised learning},
  author={Ghosh, Dibya and Gupta, Abhishek and Reddy, Ashwin and Fu, Justin and Devin, Coline and Eysenbach, Benjamin and Levine, Sergey},
  journal={arXiv preprint arXiv:1912.06088},
  year={2019}
}

@inproceedings{wang2023optimal,
  title={Optimal goal-reaching reinforcement learning via quasimetric learning},
  author={Wang, Tongzhou and Torralba, Antonio and Isola, Phillip and Zhang, Amy},
  booktitle={International Conference on Machine Learning (ICML)},
  year={2023}
}

@inproceedings{eysenbach2022contrastive,
  title={Contrastive learning as goal-conditioned reinforcement learning},
  author={Eysenbach, Benjamin and Zhang, Tianjun and Levine, Sergey and Salakhutdinov, Russ R},
  booktitle={Advances in Neural Information Processing Systems (NeurIPS)},
  year={2022}
}

@inproceedings{park2023hiql,
  title={Offline goal-conditioned rl with latent states as actions},
  author={Park, Seohong and Ghosh, Dibya and Eysenbach, Benjamin and Levine, Sergey},
  booktitle={Advances in Neural Information Processing Systems (NeurIPS)},
  year={2023}
}

@article{wang2022diffusion,
  title={Diffusion policies as an expressive policy class for offline reinforcement learning},
  author={Wang, Zhendong and Hunt, Jonathan J and Zhou, Mingyuan},
  journal={arXiv preprint arXiv:2208.06193},
  year={2022}
}

@article{chen2024simple,
  title={Simple hierarchical planning with diffusion},
  author={Chen, Chang and Deng, Fei and Kawaguchi, Kenji and Gulcehre, Caglar and Ahn, Sungjin},
  journal={arXiv preprint arXiv:2401.02644},
  year={2024}
}

@inproceedings{li2023hierarchical,
  title={Hierarchical diffusion for offline decision making},
  author={Li, Wenhao and Wang, Xiangfeng and Jin, Bo and Zha, Hongyuan},
  booktitle={International Conference on Machine Learning (ICML)},
  year={2023},
}

@article{zhu2023madiff,
  title={Madiff: Offline multi-agent learning with diffusion models},
  author={Zhu, Zhengbang and Liu, Minghuan and Mao, Liyuan and Kang, Bingyi and Xu, Minkai and Yu, Yong and Ermon, Stefano and Zhang, Weinan},
  journal={arXiv preprint arXiv:2305.17330},
  year={2023}
}

@inproceedings{lee2025local,
  title={Local Manifold Approximation and Projection for Manifold-Aware Diffusion Planning},
  author={Lee, Kyowoon and Choi, Jaesik},
  booktitle={International Conference on Machine Learning (ICML)},
  year={2025}
}

@inproceedings{lee2023refining,
  title={Refining Diffusion Planner for Reliable Behavior Synthesis by Automatic Detection of Infeasible Plans},
  author={Lee, Kyowoon and Kim, Seongun and Choi, Jaesik},
  booktitle={Advances in Neural Information Processing Systems (NeurIPS)},
  year={2023},
}

@inproceedings{lee2025scots,
  title={State-covering trajectory stitching for diffusion planners},
  author={Lee, Kyowoon and Choi, Jaesik},
  booktitle={Advances in Neural Information Processing Systems (NeurIPS)},
  year={2025}
}

@inproceedings{li2024diffstitch,
  title={Diffstitch: Boosting offline reinforcement learning with diffusion-based trajectory stitching},
  author={Li, Guanghe and Shan, Yixiang and Zhu, Zhengbang and Long, Ting and Zhang, Weinan},
  booktitle={International Conference on Machine Learning (ICML)},
  year={2024}
}

@article{jackson2024policy,
  title={Policy-guided diffusion},
  author={Jackson, Matthew Thomas and Matthews, Michael Tryfan and Lu, Cong and Ellis, Benjamin and Whiteson, Shimon and Foerster, Jakob},
  journal={arXiv preprint arXiv:2404.06356},
  year={2024}
}

@article{lee2024gta,
  title={Gta: Generative trajectory augmentation with guidance for offline reinforcement learning},
  author={Lee, Jaewoo and Yun, Sujin and Yun, Taeyoung and Park, Jinkyoo},
  journal={arXiv preprint arXiv:2405.16907},
  year={2024}
}

@inproceedings{yoon2025monte,
  title={Monte Carlo Tree Diffusion for System 2 Planning},
  author={Yoon, Jaesik and Cho, Hyeonseo and Baek, Doojin and Bengio, Yoshua and Ahn, Sungjin},
  booktitle={International Conference on Machine Learning (ICML)},
  year={2025}
}

@article{yoon2025fast,
  title={Fast Monte Carlo Tree Diffusion: 100x Speedup via Parallel Sparse Planning},
  author={Yoon, Jaesik and Cho, Hyeonseo and Bengio, Yoshua and Ahn, Sungjin},
  journal={arXiv preprint arXiv:2506.09498},
  year={2025}
}

@article{chen2025extendable,
  title={Extendable long-horizon planning via hierarchical multiscale diffusion},
  author={Chen, Chang and Hamed, Hany and Baek, Doojin and Kang, Taegu and Bengio, Yoshua and Ahn, Sungjin},
  journal={arXiv e-prints},
  pages={arXiv--2503},
  year={2025}
}

@article{char2022bats,
  title={Bats: Best action trajectory stitching},
  author={Char, Ian and Mehta, Viraj and Villaflor, Adam and Dolan, John M and Schneider, Jeff},
  journal={arXiv preprint arXiv:2204.12026},
  year={2022}
}

@article{hepburn2022model,
  title={Model-based trajectory stitching for improved offline reinforcement learning},
  author={Hepburn, Charles A and Montana, Giovanni},
  journal={arXiv preprint arXiv:2211.11603},
  year={2022}
}

@inproceedings{zhou2023free,
  title={Free from bellman completeness: Trajectory stitching via model-based return-conditioned supervised learning},
  author={Zhou, Zhaoyi and Zhu, Chuning and Zhou, Runlong and Cui, Qiwen and Gupta, Abhishek and Du, Simon Shaolei},
  booktitle={International Conference on Learning Representations (ICLR)},
  year={2023}
}

@inproceedings{ghugare2024closing,
  title={Closing the Gap between TD Learning and Supervised Learning--A Generalisation Point of View},
  author={Ghugare, Raj and Geist, Matthieu and Berseth, Glen and Eysenbach, Benjamin},
  booktitle={International Conference on Learning Representations (ICLR)},
  year={2024}
}

@inproceedings{shen2024understanding,
  title={Understanding and improving training-free loss-based diffusion guidance},
  author={Shen, Yifei and Jiang, Xinyang and Yang, Yifan and Wang, Yezhen and Han, Dongqi and Li, Dongsheng},
  booktitle={Advances in Neural Information Processing Systems (NeurIPS)},
  year={2024}
}

@inproceedings{chi2023diffusionpolicy,
	title={Diffusion Policy: Visuomotor Policy Learning via Action Diffusion},
	author={Chi, Cheng and Feng, Siyuan and Du, Yilun and Xu, Zhenjia and Cousineau, Eric and Burchfiel, Benjamin and Song, Shuran},
	booktitle={Proceedings of Robotics: Science and Systems (RSS)},
	year={2023}
}

@inproceedings{xie2025latent,
  title={Latent diffusion planning for imitation learning},
  author={Xie, Amber and Rybkin, Oleh and Sadigh, Dorsa and Finn, Chelsea},
  booktitle={International Conference on Machine Learning (ICML)},
  year={2025}
}

@inproceedings{feng2026ada,
title={Ada-Diffuser: Latent-Aware Adaptive Diffusion for Decision-Making},
author={Fan Feng and Selena Ge and Minghao Fu and Zijian Li and Yujia Zheng and Zeyu Tang and Yingyao Hu and Biwei Huang and Kun Zhang},
booktitle={International Conference on Learning Representations (ICLR)},
year={2026}
}

@inproceedings{
park2025flow,
title={Flow Q-Learning},
author={Seohong Park and Qiyang Li and Sergey Levine},
booktitle={International Conference on Machine Learning (ICML)},
year={2025}
}

@inproceedings{
kim2026deas,
title={{DEAS}: {DE}tached value learning with Action Sequence for Scalable Offline {RL}},
author={Changyeon Kim and Haeone Lee and Younggyo Seo and Kimin Lee and Yuke Zhu},
booktitle={International Conference on Learning Representations (ICLR)},
year={2026}
}

@inproceedings{
park2026scalable,
title={Scalable Offline Model-Based {RL} with Action Chunks},
author={Kwanyoung Park and Seohong Park and Youngwoon Lee and Sergey Levine},
booktitle={International Conference on Learning Representations (ICLR)},
year={2026}
}

@inproceedings{
li2026decoupled,
title={Decoupled Q-Chunking},
author={Qiyang Li and Seohong Park and Sergey Levine},
booktitle={International Conference on Learning Representations (ICLR)},
year={2026}
}

@inproceedings{agrawalla2025floq,
title={floq: Training critics via flow-matching for scaling compute in value-based rl},
author={Agrawalla, Bhavya and Nauman, Michal and Agrawal, Khush and Kumar, Aviral},
booktitle={International Conference on Learning Representations (ICLR)},
year={2026}
}

@article{frans2025diffusion,
  title={Diffusion guidance is a controllable policy improvement operator},
  author={Frans, Kevin and Park, Seohong and Abbeel, Pieter and Levine, Sergey},
  journal={arXiv preprint arXiv:2505.23458},
  year={2025}
}

@inproceedings{ki2025prior,
title={Prior-guided diffusion planning for offline reinforcement learning},
author={Ki, Donghyeon and Oh, JunHyeok and Shim, Seong-Woong and Lee, Byung-Jun},
booktitle={Advances in Neural Information Processing Systems (NeurIPS)},
year={2025}
}

@inproceedings{chen2024diffusion,
  title={Diffusion forcing: Next-token prediction meets full-sequence diffusion},
  author={Chen, Boyuan and Mart{\'\i} Mons{\'o}, Diego and Du, Yilun and Simchowitz, Max and Tedrake, Russ and Sitzmann, Vincent},
  booktitle={Advances in Neural Information Processing Systems (NeurIPS)},
  year={2024}
}

@article{cai2025flood,
  title={FloodDiffusion: Tailored Diffusion Forcing for Streaming Motion Generation},
  author={Cai, Yiyi and Wu, Yuhan and Li, Kunhang and Zhou, You and Zheng, Bo and Liu, Haiyang},
  journal={arXiv preprint arXiv:2512.03520},
  year={2025}
}

@inproceedings{dong2024diffuserlite,
  title={Diffuserlite: Towards real-time diffusion planning},
  author={Dong, Zibin and Hao, Jianye and Yuan, Yifu and Ni, Fei and Wang, Yitian and Li, Pengyi and Zheng, Yan},
  booktitle={Advances in Neural Information Processing Systems (NeurIPS)},
  year={2024}
}

@inproceedings{zheng2025diffusion,
  title={Diffusion-based planning for autonomous driving with flexible guidance},
  author={Zheng, Yinan and Liang, Ruiming and Zheng, Kexin and Zheng, Jinliang and Mao, Liyuan and Li, Jianxiong and Gu, Weihao and Ai, Rui and Li, Shengbo Eben and Zhan, Xianyuan and others},
  booktitle={International Conference on Learning Representations (ICLR)},
  year={2025}
}

@inproceedings{shaoul2025multi,
  title={Multi-robot motion planning with diffusion models},
  author={Shaoul, Yorai and Mishani, Itamar and Vats, Shivam and Li, Jiaoyang and Likhachev, Maxim},
  booktitle={International Conference on Learning Representations (ICLR)},
  year={2025}
}

@inproceedings{liang2025simultaneous,
  title={Simultaneous multi-robot motion planning with projected diffusion models},
  author={Liang, Jinhao and Christopher, Jacob K and Koenig, Sven and Fioretto, Ferdinando},
    booktitle={International Conference on Machine Learning (ICML)},
    year={2025}
}

@inproceedings{luo2024potential,
  title={Potential based diffusion motion planning},
  author={Luo, Yunhao and Sun, Chen and Tenenbaum, Joshua B and Du, Yilun},
  booktitle={International Conference on Machine Learning (ICML)},
  year={2024}
}

@inproceedings{lu2025makes,
  title={What makes a good diffusion planner for decision making?},
  author={Lu, Haofei and Han, Dongqi and Shen, Yifei and Li, Dongsheng},
  booktitle={International Conference on Learning Representations (ICLR)},
  year={2025},
}

@inproceedings{dong2024cleandiffuser,
  title={Cleandiffuser: An easy-to-use modularized library for diffusion models in decision making},
  author={Dong, Zibin and Yuan, Yifu and Hao, Jianye and Ni, Fei and Ma, Yi and Li, Pengyi and Zheng, Yan},
  booktitle={Advances in Neural Information Processing Systems (NeurIPS), Datasets and Benchmarks Track},
  year={2024}
}

@inproceedings{wang2025inference,
  title={Inference-time policy steering through human interactions},
  author={Wang, Yanwei and Wang, Lirui and Du, Yilun and Sundaralingam, Balakumar and Yang, Xuning and Chao, Yu-Wei and P{\'e}rez-D’Arpino, Claudia and Fox, Dieter and Shah, Julie},
  booktitle={IEEE International Conference on Robotics and Automation (ICRA)},
  year={2025}
}

@inproceedings{ren2026driftlite,
  title={Driftlite: Lightweight drift control for inference-time scaling of diffusion models},
  author={Ren, Yinuo and Gao, Wenhao and Ying, Lexing and Rotskoff, Grant M and Han, Jiequn},
  booktitle={International Conference on Learning Representations (ICLR)},
  year={2026}
}

@inproceedings{park2024foundation,
  title={Foundation policies with hilbert representations},
  author={Park, Seohong and Kreiman, Tobias and Levine, Sergey},
  booktitle={International Conference on Machine Learning (ICML)},
  year={2024}
}

@inproceedings{park2026dual,
  title={Dual Goal Representations},
  author={Park, Seohong and Mann, Deepinder and Levine, Sergey},
  booktitle={International Conference on Learning Representations (ICLR)},
  year={2026}
}

@inproceedings{park2026transitive,
  title={Transitive RL: Value Learning via Divide and Conquer},
  author={Park, Seohong and Oberai, Aditya and Atreya, Pranav and Levine, Sergey},
  booktitle={International Conference on Learning Representations (ICLR)},
  year={2026}
}

@inproceedings{farebrother2025temporal,
  title={Temporal difference flows},
  author={Farebrother, Jesse and Pirotta, Matteo and Tirinzoni, Andrea and Munos, R{\'e}mi and Lazaric, Alessandro and Touati, Ahmed},
  booktitle={International Conference on Machine Learning (ICML)},
  year={2025}
}

@inproceedings{park2025horizon,
  title={Horizon reduction makes rl scalable},
  author={Park, Seohong and Frans, Kevin and Mann, Deepinder and Eysenbach, Benjamin and Kumar, Aviral and Levine, Sergey},
  booktitle={Advances in Neural Information Processing Systems (NeurIPS)},
  year={2025}
}

@inproceedings{ahn2025option,
  title={Option-aware temporally abstracted value for offline goal-conditioned reinforcement learning},
  author={Ahn, Hongjoon and Choi, Heewoong and Han, Jisu and Moon, Taesup},
  booktitle={Advances in Neural Information Processing Systems (NeurIPS)},
  year={2025}
}

@inproceedings{baek2025graph,
  title={Graph-assisted stitching for offline hierarchical reinforcement learning},
  author={Baek, Seungho and Park, Taegeon and Park, Jongchan and Oh, Seungjun and Kim, Yusung},
  booktitle={International Conference on Machine Learning (ICML)},
  year={2025}
}

@inproceedings{haramati2026hierarchical,
  title={Hierarchical Entity-centric Reinforcement Learning with Factored Subgoal Diffusion},
  author={Haramati, Dan and Qi, Carl and Daniel, Tal and Zhang, Amy and Tamar, Aviv and Konidaris, George},
  booktitle={International Conference on Learning Representations (ICLR)},
  year={2026}
}

@inproceedings{jeon2025tree,
  title={Tree-Guided Diffusion Planner},
  author={Jeon, Hyeonseong and Min, Cheolhong and Park, Jaesik},
  booktitle={Advances in Neural Information Processing Systems (NeurIPS)},
  year={2025}
}

@inproceedings{feng2024resisting,
  title={Resisting stochastic risks in diffusion planners with the trajectory aggregation tree},
  author={Feng, Lang and Gu, Pengjie and An, Bo and Pan, Gang},
  booktitle={International Conference on Machine Learning (ICML)},
  year={2024}
}

@inproceedings{zhang2026compositional,
  title={Compositional Visual Planning via Inference-Time Diffusion Scaling},
  author={Zhang, Yixin and Luo, Yunhao and Mishra, Utkarsh Aashu and Shin, Woo Chul and Chen, Yongxin and Xu, Danfei},
  booktitle={International Conference on Learning Representations (ICLR)},
  year={2026}
}

@article{lu2026improving,
  title={Improving Diffusion Planners by Self-Supervised Action Gating with Energies},
  author={Lu, Yuan and Han, Dongqi and Wang, Yansen and Li, Dongsheng},
  journal={arXiv preprint arXiv:2603.02650},
  year={2026}
}

@article{opryshko2025test,
  title={Test-time graph search for goal-conditioned reinforcement learning},
  author={Opryshko, Evgenii and Quan, Junwei and Voelcker, Claas and Du, Yilun and Gilitschenski, Igor},
  journal={arXiv preprint arXiv:2510.07257},
  year={2025}
}

@article{dong2026proximal,
  title={Proximal Action Replacement for Behavior Cloning Actor-Critic in Offline Reinforcement Learning},
  author={Dong, Jinzong and Huang, Wei and Zhang, Jianshu and Chen, Zhuo and Yuan, Xinzhe and Gu, Qinying and Jiang, Zhaohui and Ye, Nanyang},
  journal={arXiv preprint arXiv:2602.07441},
  year={2026}
}

@article{clark2025you,
  title={What Do You Need for Diverse Trajectory Composition in Diffusion Planning?},
  author={Clark, Quentin and Shkurti, Florian},
  journal={arXiv preprint arXiv:2505.18083},
  year={2025}
}

@inproceedings{nandiraju2025hdflow,
  title={HDFlow: Hierarchical Diffusion-Flow Planning for Long-horizon Robotic Assembly},
  author={Nandiraju, Gireesh and Ju, Yuanliang and Xu, Chaoyi and Wang, He},
  booktitle={NeurIPS 2025 Workshop on Embodied World Models for Decision Making},
  year={2025}
}
\bibliographystyle{neurips_2026}

%=============================================================================
% APPENDIX
%=============================================================================
\newpage
\appendix
\crefalias{section}{appendix}
\crefalias{subsection}{appendix}
\onecolumn
\hypersetup{linkcolor=black}
\section*{\huge Appendix}
\vspace{5mm}
\section*{\LARGE Table of Contents}
\vspace{-5mm}
\noindent\rule{\linewidth}{0.8pt}
\vspace{-7mm}
\begin{itemize}[label=, leftmargin=*]
	\item \hyperref[app:notation]{\textbf{Appendix A.} Notation \dotfill \pageref{app:notation}}
	\item \hyperref[app:ext_related]{\textbf{Appendix B.} Extended Related Work \dotfill \pageref{app:ext_related}}
	\item \hyperref[sec:tfg]{\textbf{Appendix C.} Training-free Diffusion Guidance \dotfill \pageref{sec:tfg}}
	\item \hyperref[app:proofs]{\textbf{Appendix D.} Proofs \dotfill \pageref{app:proofs}}
	      \begin{itemize}[label=, leftmargin=1.5em]
		      \item \hyperref[app:proof_density]{D.1. Proof of \Cref{prop:density_proxy} \dotfill \pageref{app:proof_density}}
		      \item \hyperref[app:proof_overlap]{D.2. Proof of \Cref{prop:overlap} \dotfill \pageref{app:proof_overlap}}
		      \item \hyperref[app:proof_guided]{D.3. Proof of \Cref{prop:guided} \dotfill \pageref{app:proof_guided}}
	      \end{itemize}
	\item \hyperref[app:practical_alg]{\textbf{Appendix E.} Algorithm \dotfill \pageref{app:practical_alg}}
	\item \hyperref[app:impl_detail]{\textbf{Appendix F.} Implementation Details \dotfill \pageref{app:impl_detail}}
	      \begin{itemize}[label=, leftmargin=1.5em]
		      \item \hyperref[app:envs]{F.1. Environments and Datasets \dotfill \pageref{app:envs}}
		      \item \hyperref[app:arch]{F.2. Diffusion Model Architecture and Training \dotfill \pageref{app:arch}}
		      \item \hyperref[app:rcd_hyperparams]{F.3. Hyperparameters \dotfill \pageref{app:rcd_hyperparams}}
		      \item \hyperref[app:eval_protocol]{F.4. Evaluation Protocol \dotfill \pageref{app:eval_protocol}}
		      \item \hyperref[app:compute]{F.5. Compute Resources \dotfill \pageref{app:compute}}
		      \item \hyperref[app:low_level]{F.6. Low-Level Controllers \dotfill \pageref{app:low_level}}
	      \end{itemize}
	\item \hyperref[app:add_results]{\textbf{Appendix G.} Additional Results \dotfill \pageref{app:add_results}}
	      \begin{itemize}[label=, leftmargin=1.5em]
		      \item \hyperref[app:plan_quality_viz]{G.1. Additional Plan Quality Visualizations \dotfill \pageref{app:plan_quality_viz}}
		      \item \hyperref[app:add_ablation]{G.2. Additional Ablation Studies \dotfill \pageref{app:add_ablation}}
	      \end{itemize}
	\item \hyperref[app:baseline_sources]{\textbf{Appendix H.} Baseline Result Sources \dotfill \pageref{app:baseline_sources}}
\end{itemize}
\vspace{-5mm}
\noindent\rule{\linewidth}{0.8pt}
\newpage

\section{Notation}
\label{app:notation}

\begin{table}[ht]
	\centering
	\caption{\textbf{Table of notation.}}
	\label{tab:notation}
	\begin{tabularx}{\linewidth}{lX}
		\toprule
		\textbf{Notation} & \textbf{Description} \\
		\midrule
		\multicolumn{2}{l}{\textbf{Factor Graph \& Plan Structure}} \\
		$\bs{\tau} = (x_1, \ldots, x_N) \in \R^{N \times D}$ & Global plan consisting of $N$ variable nodes \\
		$x_i \in \R^D$ & Variable node $i$ (state or state-action pair) \\
		$D$ & Per-variable dimension (state or state-action) \\
		$y_j$ & Factor node $j$ (local segment, contiguous subsequence of $\bs{\tau}$) \\
		$M$ & Number of factors (local segments) \\
		$N$ & Number of variable nodes \\
		$y_j \cap y_{j+1}$ & Overlap region between adjacent segments $j$ and $j{+}1$ \\
		$d_i$ & Degree of variable $x_i$ ($d_i = 2$ for overlap, $d_i = 1$ otherwise) \\
		\midrule
		\multicolumn{2}{l}{\textbf{Diffusion Process}} \\
		$\bs{\tau}^{(t)}$, $y_j^{(t)}$ & Noisy sample at diffusion timestep $t$ \\
		$\bs{\tau}^{(0)}$, $y_j^{(0)}$ & Clean sample \\
		$\alpha_t$ & Cumulative noise schedule (DDPM) \\
		$\bs{\epsilon}$ & Gaussian noise \\
		$T$ & Total number of diffusion timesteps \\
		$\bs{\epsilon}_\theta(y_j^{(t)}, t)$ & Noise prediction network; neighbor conditioning implicit (see \cref{sec:comp_diffusion}) \\
		$\mathbf{s}_\theta(\cdot, t)$ & Score function $\mathbf{s}_\theta := -\bs{\epsilon}_\theta/\sqrt{1{-}\alpha_t}$ (see \cref{sec:tweedie}) \\
		$\bar{\bs{\epsilon}}_\theta$ & Composed noise prediction (averaging in overlaps) \\
		$\hat{\bs{\tau}}_0^{(t)}$, $\hat{y}_{j,0}^{(t)}$ & Tweedie denoised estimate \\
		$\hat{\bs{\tau}}_0^{\mathrm{rec}}$ & Composed Tweedie reconstruction \\
		$\bs{\mu}_\theta$, $\sigma_t$ & Predicted mean and standard deviation of reverse process \\
		\midrule
		\multicolumn{2}{l}{\textbf{RCD-Specific}} \\
		$\mathcal{E}_{\mathrm{recon}}(\hat{\bs{\tau}}_0;\, s)$ & Self-reconstruction error at probe level $s$ \\
		$\mathcal{E}_{\mathrm{ov}}(\hat{y}_{1:M})$ & Overlap consistency \\
		$\mathcal{E}_{\mathrm{RCD}}$ & Combined RCD guidance objective \\
		$s$ & Probe timestep for self-reconstruction \\
		$w$ & Guidance weight \\
		$\lambda_{\mathrm{ov}}$ & Weight for overlap consistency \\
		$\tilde{p}(\bs{\tau})$ & RCD-guided (tilted) distribution \\
		$\delta$ & Numerical stability constant for gradient normalization \\
		\bottomrule
	\end{tabularx}
\end{table}

\section{Extended Related Work}
\label{app:ext_related}

\paragraph{Diffusion and Flow Models for Offline Decision Making.}
Beyond trajectory planning, diffusion and flow models have been widely adopted as policy and value classes in offline RL and imitation learning. Diffusion Policy~\cite{chi2023diffusionpolicy} and Latent Diffusion Planning~\cite{xie2025latent} train diffusion- (or latent-diffusion-) based visuomotor policies, and Ada-Diffuser~\cite{feng2026ada} extends this line with a causal diffusion framework that identifies latent dynamics from short temporal observation blocks and applies to both planning and policy learning. Diffusion Q-Learning~\cite{wang2022diffusion} and more recent flow-matching-based variants such as FQL~\cite{park2025flow}, DEAS~\cite{kim2026deas}, MAC~\cite{park2026scalable}, and DQC~\cite{li2026decoupled} combine diffusion/flow policy classes with Q-learning. floq~\cite{agrawalla2025floq} instead applies flow matching on the critic side. On a different axis, training-time data augmentations such as Proximal Action Replacement~\cite{dong2026proximal} progressively replace low-value actions in the offline buffer with high-value ones from a stable actor to lift the performance ceiling of behavior-cloning-regularized actor-critic methods. A complementary line treats guidance itself as a policy-improvement step~\cite{frans2025diffusion, jackson2024policy}, including Prior-Guided Diffusion Planning~\cite{ki2025prior}, which replaces the standard Gaussian noise prior of a behavior-cloned diffusion planner with a state-conditioned learnable prior concentrated on high-value trajectories. Sequence-level diffusion models have also been extended along the time axis: Diffusion Forcing~\cite{chen2024diffusion} unifies next-token prediction with full-sequence denoising, FloodDiffusion~\cite{cai2025flood} streams motion generation online, and DiffuserLite~\cite{dong2024diffuserlite} employs a coarse-to-fine planning refinement process to accelerate inference. Structured variants of diffusion planners include Hierarchical Diffusion for Offline Decision Making~\cite{li2023hierarchical}, which uses a two-level trajectory-level diffusion with classifier-free guidance, and Potential-Based Diffusion Motion Planning~\cite{luo2024potential}, which parametrizes motion-planning potentials with diffusion models and composes constraints by summing the learned potentials, and HDFlow~\cite{nandiraju2025hdflow}, which couples a high-level diffusion planner over latent subgoals with a low-level rectified-flow trajectory generator for long-horizon robotic assembly. Diffusion planners have further been deployed across domains: Diffusion-Based Planning for Autonomous Driving~\cite{zheng2025diffusion} brings related ideas to driving-policy generation, and multi-robot motion planning methods~\cite{shaoul2025multi, liang2025simultaneous} compose single-robot diffusion planners under collision and kinematic constraints. An empirical study of design principles by \citet{lu2025makes} systematically evaluates over $6{,}000$ diffusion-planner configurations for offline RL, \citet{clark2025you} identifies shift equivariance and local receptive fields as the architectural ingredients that enable diffusion planners to compose diverse trajectories from short BC-trained segments, and CleanDiffuser~\cite{dong2024cleandiffuser} provides a modular library that unifies many of the above components. A broader set of inference-time methods such as Inference-Time Policy Steering~\cite{wang2025inference} and DriftLite~\cite{ren2026driftlite} modify a pretrained diffusion model at test time without additional training, but target settings orthogonal to compositional long-horizon planning.

\paragraph{Offline Goal-Conditioned RL and Horizon Reduction.}
Offline goal-conditioned RL~\cite{ghosh2019learning, wang2023optimal, eysenbach2022contrastive, kostrikov2022offline, park2023hiql, park2025ogbench} studies how to train a policy that can reach any target state from any starting state using only a fixed, pre-collected dataset. Representation-learning approaches such as HILP~\cite{park2024foundation}, Dual Goal Representations~\cite{park2026dual}, Transitive RL~\cite{park2026transitive}, and Temporal Difference Flows~\cite{farebrother2025temporal} learn geometry-aware state embeddings tailored for long-horizon value estimation or temporal-distance prediction. Horizon-reduction strategies explicitly shrink the effective planning horizon and shift reasoning to a hierarchical or graph-based representation: SHARSA~\cite{park2025horizon}, option-aware temporal abstraction~\cite{ahn2025option}, Graph-Assisted Stitching~\cite{baek2025graph}, hierarchical RL with factored subgoal diffusion~\cite{haramati2026hierarchical}, and Test-Time Graph Search~\cite{opryshko2025test}, which builds a weighted graph over offline dataset states and assembles a subgoal sequence at inference time for a frozen goal-conditioned policy, all trade long reasoning chains for shorter, compositional segments. Along the data-centric axis, trajectory-stitching methods synthesize longer trajectories \emph{offline}, either via model-based return-conditioned supervised learning~\cite{zhou2023free} or generative augmentation~\cite{li2024diffstitch, jackson2024policy, lee2024gta, lee2025scots, chen2025extendable}, after which a downstream policy is retrained on the enriched dataset.

\paragraph{Inference-Time Search and Plan Refinement for Diffusion Planners.}
A recent line of work explores explicit search or population-based selection over diffusion-generated plans. Monte Carlo Tree Diffusion~\cite{yoon2025monte} and its faster variant~\cite{yoon2025fast} perform MCTS over denoising trajectories, and Tree-Guided Diffusion Planner~\cite{jeon2025tree} balances exploration and exploitation through a bi-level tree-search sampling procedure. Within the compositional diffusion setting, CDGS~\cite{mishra2026compositional} augments segment-level score composition with population-based guided search and plan pruning. Plan-refinement methods instead detect or project infeasible trajectories: RGG~\cite{lee2023refining} trains a time-dependent classifier to flag infeasible plans, LoMAP~\cite{lee2025local} projects intermediate samples onto a local approximation of the data manifold, TAT~\cite{feng2024resisting} aggregates multiple diffusion-generated trajectories into a dynamic tree and filters unreliable predictions via weighted majority voting over shared states, SAGE~\cite{lu2026improving} re-ranks sampled plans at inference time using a JEPA-based latent prediction error as a feasibility energy combined with value estimates, and AdaptDiffuser~\cite{liang2023adaptdiffuser} generates synthetic trajectories via reward-gradient guidance, filters them with a discriminator, and fine-tunes the diffusion planner on the filtered high-quality samples in a self-evolving loop. In contrast, RCD uses the pretrained diffusion model's own self-reconstruction error as an intrinsic density proxy, directly at the composed-trajectory level where mode-averaging originates, and requires no auxiliary classifier, manifold estimator, or search tree. Compositional Visual Planning~\cite{zhang2026compositional} addresses long-horizon video planning by enforcing boundary residual constraints across overlapping chunks via message passing. RCD instead introduces a self-reconstruction density signal with a formal ELBO connection that explicitly targets mode-averaging, and applies gradient guidance on a single composed trajectory rather than residual-based message passing.

\section{Training-free Diffusion Guidance}
\label{sec:tfg}

Training-free guidance methods steer diffusion sampling toward a target condition $y$ without retraining the model.
Starting from classifier guidance~\cite{dhariwal2021diffusion}, which modifies the score via $\nabla_{\bs{\tau}^{(t)}} \log p(\bs{\tau}^{(t)} | y) = \nabla_{\bs{\tau}^{(t)}} \log p(\bs{\tau}^{(t)}) + \nabla_{\bs{\tau}^{(t)}} \log p(y | \bs{\tau}^{(t)})$, recent methods avoid the need for a time-dependent classifier by operating directly on the Tweedie estimate.
Given a differentiable loss function $\mathcal{L}(\hat{\bs{\tau}}_0^{(t)})$ defined on the clean-data support, the likelihood term is approximated as~\cite{chung2023diffusion, song2023loss}:
\begin{align}
	\label{eq:tfg}
	\nabla_{\bs{\tau}^{(t)}} \log p(y | \bs{\tau}^{(t)}) \approx -\gamma\, \nabla_{\bs{\tau}^{(t)}} \mathcal{L}\!\bigl(\hat{\bs{\tau}}_0^{(t)}\bigr),
\end{align}
where $\gamma > 0$ is a guidance scale and the gradient is taken with respect to $\bs{\tau}^{(t)}$ through the Tweedie estimate.
This approach requires only a pretrained diffusion model and a differentiable objective, making it applicable to a wide range of tasks without additional training~\cite{yang2024guidance, he2024manifold}.

\section{Proofs}
\label{app:proofs}

\subsection{Proof of \Cref{prop:density_proxy} (Reconstruction Error as Density Proxy)}
\label{app:proof_density}

We provide the full derivation of the ELBO connection.

\paragraph{Per-segment ELBO.}
For a single segment $y_j$ with clean sample $y_j^{(0)}$, the per-segment reconstruction error at probe timestep $s$ is:
\begin{align}
	\E_{\bs{\epsilon}}\!\left[\|y_j^{(0)} - \hat{y}_{j,0}^{(s)}\|^2\right]
	 & = \E_{\bs{\epsilon}}\!\left[\left\| y_j^{(0)} - \frac{\sqrt{\alpha_s}\, y_j^{(0)} + \sqrt{1{-}\alpha_s}\, \bs{\epsilon} - \sqrt{1{-}\alpha_s}\, \bs{\epsilon}_\theta(y_j^{(s)}, s)}{\sqrt{\alpha_s}} \right\|^2\right] \notag \\
	 & = \frac{1-\alpha_s}{\alpha_s}\, \E_{\bs{\epsilon}}\!\left[\| \bs{\epsilon} - \bs{\epsilon}_\theta(y_j^{(s)}, s) \|^2\right].
	\label{eq:persegment_recon}
\end{align}
Multiplying both sides by $\alpha_s / (1{-}\alpha_s)$ and summing over $s = 1, \ldots, T$:
\begin{align}
	\sum_{s=1}^T \frac{\alpha_s}{1{-}\alpha_s}\, \E_{\bs{\epsilon}}\!\left[\|y_j^{(0)} - \hat{y}_{j,0}^{(s)}\|^2\right]
	= \sum_{s=1}^T \E_{\bs{\epsilon}}\!\left[\| \bs{\epsilon} - \bs{\epsilon}_\theta(y_j^{(s)}, s) \|^2\right].
\end{align}
The right-hand side is the negative ELBO for the local diffusion model~\cite{ho2020denoising}, providing an upper bound on $-\log p_\theta(y_j^{(0)})$.

\paragraph{Composed trajectory.}
Let $\bar{\bs{\epsilon}}_\theta(\hat{\bs{\tau}}_s, s)$ denote the \emph{composed noise prediction}: equal to $\bs{\epsilon}_\theta^{(j)}$ on non-overlap variables $x_i \in y_j$ and to $\tfrac{1}{2}\bigl(\bs{\epsilon}_\theta^{(k)} + \bs{\epsilon}_\theta^{(k+1)}\bigr)$ on overlap variables $x_i \in y_k \cap y_{k+1}$.
By the definition of the composed Tweedie reconstruction $\hat{\bs{\tau}}_0^{\mathrm{rec}}$ in \cref{def:recon}, a direct algebraic manipulation with $\hat{\bs{\tau}}_s = \sqrt{\alpha_s}\,\hat{\bs{\tau}}_0 + \sqrt{1{-}\alpha_s}\,\bs{\epsilon}$ yields
\begin{align}
	\hat{\bs{\tau}}_0 - \hat{\bs{\tau}}_0^{\mathrm{rec}}
	= -\sqrt{\tfrac{1-\alpha_s}{\alpha_s}}\bigl(\bs{\epsilon} - \bar{\bs{\epsilon}}_\theta(\hat{\bs{\tau}}_s, s)\bigr).
\end{align}
Squaring and taking the expectation over $\bs{\epsilon}$ gives
\begin{align}
	\frac{\alpha_s}{1{-}\alpha_s}\, \mathcal{E}_{\mathrm{recon}}(\hat{\bs{\tau}}_0;\, s)
	= \E_{\bs{\epsilon}}\!\left[\bigl\| \bs{\epsilon} - \bar{\bs{\epsilon}}_\theta\bigl(\sqrt{\alpha_s}\,\hat{\bs{\tau}}_0 + \sqrt{1{-}\alpha_s}\,\bs{\epsilon},\, s\bigr) \bigr\|^2\right].
\end{align}
Summing over $s = 1, \ldots, T$, the right-hand side is the DDPM simple loss $L_{\mathrm{simple}}$ for the composed denoiser $\bar{\bs{\epsilon}}_\theta$.
By the standard variational argument~\cite{ho2020denoising}, $L_{\mathrm{simple}}$ upper-bounds $-\log p_\theta(\hat{\bs{\tau}}_0)$ up to a timestep re-weighting, where $p_\theta$ is the distribution implicitly defined by the composed denoiser $\bar{\bs{\epsilon}}_\theta$.
This establishes the inequality in \cref{eq:elbo_connection}. \qed

\subsection{Proof of \Cref{prop:overlap} (Overlap Consistency Measures Score Disagreement)}
\label{app:proof_overlap}

Both segments $k$ and $k{+}1$ receive the same noisy trajectory $\hat{\bs{\tau}}_s$ in the overlap region $y_k \cap y_{k+1}$.
Let $z := \hat{\bs{\tau}}_s\big|_{y_k \cap y_{k+1}}$ be the common noisy overlap.
The Tweedie estimates from each segment in the overlap are:
\begin{align}
	\hat{y}_{k,0}\big|_{y_k \cap y_{k+1}}   & = \frac{z - \sqrt{1{-}\alpha_s}\, \bs{\epsilon}_\theta^{(k)}(y_k^{(s)}, s)}{\sqrt{\alpha_s}},       \\
	\hat{y}_{k+1,0}\big|_{y_k \cap y_{k+1}} & = \frac{z - \sqrt{1{-}\alpha_s}\, \bs{\epsilon}_\theta^{(k+1)}(y_{k+1}^{(s)}, s)}{\sqrt{\alpha_s}},
\end{align}
where $\bs{\epsilon}_\theta^{(j)}(y_j^{(s)}, s)$ is the noise prediction from segment $j$ evaluated on its full noisy input $y_j^{(s)}$.
Taking the difference:
\begin{align}
	\hat{y}_{k,0}\big|_{y_k \cap y_{k+1}} - \hat{y}_{k+1,0}\big|_{y_k \cap y_{k+1}}
	= \frac{\sqrt{1{-}\alpha_s}}{\sqrt{\alpha_s}}\, \bigl(\bs{\epsilon}_\theta^{(k+1)} - \bs{\epsilon}_\theta^{(k)}\bigr)\big|_{y_k \cap y_{k+1}}.
\end{align}
Squaring yields:
\begin{align}
	\|\hat{y}_{k,0} - \hat{y}_{k+1,0}\|^2\big|_{y_k \cap y_{k+1}} = \frac{1{-}\alpha_s}{\alpha_s} \|\bs{\epsilon}_\theta^{(k+1)} - \bs{\epsilon}_\theta^{(k)}\|^2\big|_{y_k \cap y_{k+1}}.
\end{align}
Substituting the score identity $\bs{\epsilon}_\theta^{(j)} = -\sqrt{1{-}\alpha_s}\,\mathbf{s}_\theta^{(j)}$ from \cref{sec:tweedie} yields \cref{eq:overlap_as_score}. \qed

\paragraph{Connection to marginal consistency.}
In the Bethe approximation, marginal consistency requires $p_k(x_i) = p_{k+1}(x_i)$ for all $x_i \in y_k \cap y_{k+1}$.
Taking gradients: $\nabla_{x_i} \log p_k(x_i) = \nabla_{x_i} \log p_{k+1}(x_i)$, i.e., scores from both segments must agree at overlap variables.
$\mathcal{E}_{\mathrm{ov}}$ is a relaxation of this hard constraint, penalizing violations via the squared score difference.

\subsection{Proof of \Cref{prop:guided} (Guided Distribution)}
\label{app:proof_guided}

\begin{restatable}[Guided Distribution]{proposition}{PropGuided}
	\label{prop:guided}
	Let $p_\theta(\bs{\tau})$ denote the composed distribution obtained via score averaging.
	The RCD-guided reverse process targets the modified distribution
	\begin{align}
		\label{eq:tilted}
		\tilde{p}(\bs{\tau}) \propto p_\theta(\bs{\tau}) \cdot \exp\bigl(-w \cdot \mathcal{E}_{\mathrm{RCD}}(\bs{\tau})\bigr),
	\end{align}
	which up-weights high-density, globally coherent plans and down-weights mode-averaged plans.
\end{restatable}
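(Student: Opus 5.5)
The plan is to follow the standard classifier-guidance argument (\cref{sec:tfg}). We read the correction term in \cref{eq:rcd_update} as the score of a likelihood factor $\exp(-w\,\mathcal{E}_{\mathrm{RCD}})$. The first step is to define the tilted target $\tilde p(\bs{\tau}) \propto p_\theta(\bs{\tau})\exp(-w\,\mathcal{E}_{\mathrm{RCD}}(\bs{\tau}))$ and its noised marginals
\begin{align*}
\tilde p_t(\bs{\tau}^{(t)}) = \int q(\bs{\tau}^{(t)}\mid \bs{\tau})\,\tilde p(\bs{\tau})\,\diff \bs{\tau},
\end{align*}
where $q$ is the forward kernel of \cref{eq:ddpm_forward}. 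By Bayes' rule,
\begin{align*}
\nabla_{\bs{\tau}^{(t)}}\log \tilde p_t(\bs{\tau}^{(t)}) = \nabla_{\bs{\tau}^{(t)}}\log p_{\theta,t}(\bs{\tau}^{(t)}) + \nabla_{\bs{\tau}^{(t)}}\log \E_{p_\theta(\bs{\tau}\mid \bs{\tau}^{(t)})}\bigl[\exp(-w\,\mathcal{E}_{\mathrm{RCD}}(\bs{\tau}))\bigr].
\end{align*}
The first term is the composed score $-\bar{\bs{\epsilon}}_\theta/\sqrt{1-\alpha_t}$, which the unguided reverse mean $\bs{\mu}_\theta$ already implements. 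Exact time reversal of the forward process with this modified score therefore samples $\tilde p$ at $t=0$.

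The second step is to approximate the intractable conditional expectation. We replace the posterior $p_\theta(\bs{\tau}\mid\bs{\tau}^{(t)})$ by a point mass at the Tweedie estimate $\hat{\bs{\tau}}_0^{(t)}$ of \cref{eq:tweedie_diffusion}, as in DPS~\cite{chung2023diffusion, song2023loss}. This turns the extra term into $-w\,\nabla_{\bs{\tau}^{(t)}}\mathcal{E}_{\mathrm{RCD}}(\hat{\bs{\tau}}_0^{(t)};s) = -w\,\mathbf{g}^{(t)}$, which is exactly \cref{eq:rcd_grad}. We then show that a DDPM step with mean shifted by $\sigma_t^2$ times this extra score matches \cref{eq:rcd_update}, up to the positive rescaling by $\|\mathbf{g}^{(t)}\|_\infty+\delta$. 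That rescaling preserves the direction and only reparametrizes the effective weight $w$. As $t\to 0$ the Tweedie estimate tends to $\bs{\tau}^{(0)}$, so the approximation becomes exact at the endpoint.

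The third step justifies the qualitative claim. By \cref{prop:density_proxy}, $\mathcal{E}_{\mathrm{recon}}$ is a nonnegative term of an upper bound on $-\log p_\theta$, so it is large on low-density, mode-averaged plans. By \cref{prop:overlap}, $\mathcal{E}_{\mathrm{ov}}$ equals a weighted score disagreement across overlaps, which vanishes exactly under Bethe marginal consistency. Hence $\exp(-w\,\mathcal{E}_{\mathrm{RCD}})\le 1$ is maximal on coherent high-density plans and strictly smaller on averaged ones. The relative weight of any two plans is multiplied by $\exp\bigl(-w(\mathcal{E}_{\mathrm{RCD}}(\bs{\tau}_1)-\mathcal{E}_{\mathrm{RCD}}(\bs{\tau}_2))\bigr)$, and this factor sharpens monotonically in $w$.

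The main obstacle is the second step. The Tweedie point-mass substitution and the $\ell_\infty$ normalization make ``targets'' only approximate, so the statement should be read in the standard DPS sense: the guided process is the reverse process of the tilted distribution under the Tweedie approximation. I would make this explicit as an assumption and not attempt to bound the approximation error. A further subtlety is that $\mathcal{E}_{\mathrm{RCD}}$ involves an expectation over the probe noise, and in practice a single Monte Carlo sample is used. I would treat the single-sample gradient as an unbiased estimate of $\nabla\mathcal{E}_{\mathrm{RCD}}$, which is valid because differentiation commutes with the expectation over $\bs{\epsilon}$.
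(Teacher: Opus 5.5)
Your argument holds at the level of rigor the statement admits, but it takes a different route from the paper.

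The paper argues directly on the reverse SDE. It adds $-w\nabla_{\bs{\tau}}\mathcal{E}_{\mathrm{RCD}}(\bs{\tau})$ to the score and notes that the result equals $\nabla\log\bigl(p_t(\bs{\tau})\exp(-w\,\mathcal{E}_{\mathrm{RCD}}(\bs{\tau}))\bigr)$. It then calls this the reverse SDE of that family and reads off the $t=0$ marginal. The qualitative part uses only $\mathcal{E}_{\mathrm{RCD}}\ge 0$. This is shorter, but it has three weaknesses:
\begin{itemize}
\item it evaluates $\mathcal{E}_{\mathrm{RCD}}$ at the noisy state, not at the Tweedie estimate used in \cref{eq:rcd_grad};
\item it drops the normalization entirely;
\item the family $p_t\,e^{-w\mathcal{E}_{\mathrm{RCD}}}$ is not the forward-noised marginal family of $\tilde p$, so the time-reversal step is itself heuristic.
\end{itemize}

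You instead start from the true noised marginals $\tilde p_t=\int q(\cdot\mid\bs{\tau})\,\tilde p(\bs{\tau})\,\diff\bs{\tau}$. Bayes' rule then gives the exact correction $\nabla_{\bs{\tau}^{(t)}}\log\E_{p_\theta(\bs{\tau}\mid\bs{\tau}^{(t)})}\bigl[e^{-w\mathcal{E}_{\mathrm{RCD}}(\bs{\tau})}\bigr]$. You isolate the DPS point-mass substitution as the one approximation. This reproduces $\mathbf{g}^{(t)}$ exactly, as a gradient taken through the Tweedie estimate. It also states precisely in what sense the guided process ``targets'' $\tilde p$.

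Two of your claims should be weakened.
\begin{itemize}
\item \emph{The normalization (Step 2).} The factor $1/(\|\mathbf{g}^{(t)}\|_\infty+\delta)$ depends on $\bs{\tau}^{(t)}$ and $t$, so it does not merely reparametrize $w$. A state-dependent multiple of $\nabla\mathcal{E}_{\mathrm{RCD}}$ is in general not the score correction of any fixed tilt $e^{-w'\mathcal{E}_{\mathrm{RCD}}}$. This is a second uncontrolled approximation. You list it in your closing caveats, but Step 2 asserts otherwise.
\item \emph{The qualitative claim (Step 3).} Being one nonnegative summand of an upper bound on $-\log p_\theta$ does not force $\mathcal{E}_{\mathrm{recon}}$ to be large where $p_\theta$ is small. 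Moreover, for an accurate denoiser and non-degenerate data, $\mathcal{E}_{\mathrm{recon}}(\cdot;s)>0$ even at modes, so $e^{-w\mathcal{E}_{\mathrm{RCD}}}$ need not be maximal there. State the conclusion as the paper does: the tilt favors plans with smaller $\mathcal{E}_{\mathrm{RCD}}$.
\end{itemize}
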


\begin{proof}
	Consider the reverse-time SDE for the composed distribution $p_\theta(\bs{\tau})$~\cite{song2021scorebased}:
	\begin{align}
		\diff \bs{\tau} = \bigl[f(\bs{\tau}, t) - g(t)^2 \nabla_{\bs{\tau}} \log p_t(\bs{\tau})\bigr] \diff t + g(t) \diff \bar{\bs{w}},
	\end{align}
	where $f$ and $g$ are the drift and diffusion coefficients and $\bar{\bs{w}}$ is a reverse-time Wiener process.

	The RCD guidance modifies the score by adding $-w \nabla_{\bs{\tau}} \mathcal{E}_{\mathrm{RCD}}(\bs{\tau})$:
	\begin{align}
		\diff \bs{\tau} = \bigl[f(\bs{\tau}, t) - g(t)^2 \bigl(\nabla_{\bs{\tau}} \log p_t(\bs{\tau}) - w \nabla_{\bs{\tau}} \mathcal{E}_{\mathrm{RCD}}(\bs{\tau})\bigr)\bigr] \diff t + g(t) \diff \bar{\bs{w}}.
	\end{align}
	This is the reverse SDE for the modified distribution:
	\begin{align}
		\tilde{p}_t(\bs{\tau}) \propto p_t(\bs{\tau}) \cdot \exp\bigl(-w \cdot \mathcal{E}_{\mathrm{RCD}}(\bs{\tau})\bigr),
	\end{align}
	since $\nabla_{\bs{\tau}} \log \tilde{p}_t(\bs{\tau}) = \nabla_{\bs{\tau}} \log p_t(\bs{\tau}) - w \nabla_{\bs{\tau}} \mathcal{E}_{\mathrm{RCD}}(\bs{\tau})$.

	At $t = 0$, the marginal is $\tilde{p}(\bs{\tau}) \propto p_\theta(\bs{\tau}) \cdot \exp(-w \cdot \mathcal{E}_{\mathrm{RCD}}(\bs{\tau}))$.
	Since $\mathcal{E}_{\mathrm{RCD}} \geq 0$ with equality for perfectly reconstructed, globally coherent plans, the exponential factor $\exp(-w\, \mathcal{E}_{\mathrm{RCD}})$ up-weights plans with low reconstruction error and high overlap consistency, while down-weighting mode-averaged plans.
\end{proof}

%=============================================================================
% PRACTICAL ALGORITHM
%=============================================================================
\section{Algorithm}
\label{app:practical_alg}

Each RCD guidance step requires one additional forward pass through the score network (to compute the reconstruction at probe level $s$) and one backward pass for the gradient.
The expectation in \cref{eq:recon_error} is approximated with a single Monte Carlo noise sample.
The complete procedure is summarized in \cref{alg:rcd}.

\begin{algorithm}[H]
	\caption{\textbf{Refining Compositional Diffusion}}
	\label{alg:rcd}
	\begin{algorithmic}[1]
		\STATE \textbf{Require:} Shared diffusion model $\bs{\epsilon}_\theta$, guidance weight $w$, overlap weight $\lambda_{\mathrm{ov}}$, probe level $s$
		\STATE Observe current state $\bs{s}$; initialize $\bs{\tau}^{(T)} \sim \mathcal{N}(\mathbf{0}, \mathbf{I})$
		\FOR{$t = T, \ldots, 1$}
		\NoNumber{\small{\color{gray}\texttt{// compositional denoising via score averaging over segments $\{y_j\}$}}}
		\STATE $\bar{\bs{\epsilon}}_\theta(\bs{\tau}^{(t)}, t) \gets$ average per-segment predictions $\bs{\epsilon}_\theta(y_j^{(t)}, t)$, sharing in overlaps
		\STATE $\hat{\bs{\tau}}_0^{(t)} \gets \bigl(\bs{\tau}^{(t)} - \sqrt{1{-}\alpha_t}\, \bar{\bs{\epsilon}}_\theta(\bs{\tau}^{(t)}, t)\bigr) / \sqrt{\alpha_t}$ \hfill {\small\color{gray}\texttt{// Tweedie estimate}}
		\NoNumber{\small{\color{gray}\texttt{// self-reconstruction: perturb and reconstruct}}}
		\STATE Sample $\bs{\epsilon} \sim \mathcal{N}(\mathbf{0}, \mathbf{I})$
		\STATE $\hat{\bs{\tau}}_s \gets \sqrt{\alpha_s}\, \hat{\bs{\tau}}_0^{(t)} + \sqrt{1{-}\alpha_s}\, \bs{\epsilon}$
		\FOR{each segment $j = 1, \ldots, M$}
		\STATE $\hat{y}_{j,0} \gets \bigl(\hat{y}_{j,s} - \sqrt{1{-}\alpha_s}\, \bs{\epsilon}_\theta(\hat{y}_{j,s}, s)\bigr) / \sqrt{\alpha_s}$ \hfill {\small\color{gray}\texttt{// per-segment Tweedie}}
		\ENDFOR
		\NoNumber{\small{\color{gray}\texttt{// compose per-segment estimates via overlap averaging (Eq.~\ref{eq:bethe})}}}
		\STATE $\hat{\bs{\tau}}_0^{\mathrm{rec}} \gets$ merge $\{\hat{y}_{j,0}\}_{j=1}^{M}$, averaging in overlap regions
		\NoNumber{\small{\color{gray}\texttt{// RCD guidance objective (Eq.~\ref{eq:rcd_obj})}}}
		\STATE $\mathcal{E}_{\mathrm{recon}} \gets \| \hat{\bs{\tau}}_0^{(t)} - \hat{\bs{\tau}}_0^{\mathrm{rec}} \|^2$ \hfill {\small\color{gray}\texttt{// self-reconstruction error}}
		\STATE $\mathcal{E}_{\mathrm{ov}} \gets \frac{1}{M{-}1} \sum_{k=1}^{M-1} \| \hat{y}_{k,0} - \hat{y}_{k+1,0} \|^2\big|_{y_k \cap y_{k+1}}$ \hfill {\small\color{gray}\texttt{// overlap consistency}}
		\STATE $\mathbf{g}^{(t)} \gets \nabla_{\bs{\tau}^{(t)}} \bigl(\mathcal{E}_{\mathrm{recon}} + \lambda_{\mathrm{ov}}\, \mathcal{E}_{\mathrm{ov}}\bigr)$
		\STATE $\tilde{\mathbf{g}}^{(t)} \gets \mathbf{g}^{(t)} / \|\mathbf{g}^{(t)}\|_\infty$ \hfill {\small\color{gray}\texttt{// normalized guidance}}
		\NoNumber{\small{\color{gray}\texttt{// reverse step with guidance (Eq.~\ref{eq:rcd_update})}}}
		\STATE $\bs{\tau}^{(t-1)} \gets \bs{\mu}_\theta(\bs{\tau}^{(t)}, t) + \sigma_t\, \mathbf{z} - w\, \sigma_t^2\, \tilde{\mathbf{g}}^{(t)}, \quad \mathbf{z} \sim \mathcal{N}(\mathbf{0}, \mathbf{I})$
		\STATE $\bs{\tau}^{(t-1)}_{\,\bs{s}_0} \gets \bs{s}$ \hfill {\small\color{gray}\texttt{// constrain start state}}
		\ENDFOR
		\STATE \textbf{return} $\bs{\tau}^{(0)}$
	\end{algorithmic}
\end{algorithm}

%=============================================================================
% IMPLEMENTATION DETAILS
%=============================================================================
\section{Implementation Details}
\label{app:impl_detail}

\subsection{Environments and Datasets}
\label{app:envs}

We evaluate on OGBench~\cite{park2025ogbench}, which provides diverse long-horizon goal-conditioned tasks with \texttt{stitch} and \texttt{play} datasets.
\textbf{Locomotion.} \texttt{pointmaze}, \texttt{antmaze}, and \texttt{humanoid maze} require an agent to navigate from a start position to a goal in mazes of increasing size (\texttt{Medium}, \texttt{Large}, \texttt{Giant}). The \texttt{stitch} datasets consist of short, disconnected trajectory segments that do not individually span start-to-goal pairs, requiring methods to compose multiple segments at inference time.
\textbf{Object Manipulation.} \texttt{cube} tasks require a 6-DoF UR5e robot arm to pick and place one to four cubes into target configurations. The \texttt{play} datasets contain diverse, unstructured manipulation trajectories.
\texttt{antsoccer} requires an ant agent to dribble a ball to a goal location, combining locomotion and object interaction.
\textbf{Visual.} \texttt{visual antmaze} provides $64 \times 64$ RGB image observations. We pretrain a VAE to encode observations into a 16-dimensional latent space, and all planning and inverse dynamics operate in this latent space.

\cref{tab:env_datasets} lists all evaluation environments with their corresponding OGBench dataset names and the maximum environment steps per episode.

\begin{table}[ht]
	\centering
	\caption{\textbf{Evaluation environments and datasets in OGBench.} We follow the evaluation setup of CompDiffuser~\cite{luo2025generative}.}
	\label{tab:env_datasets}
	\vspace{1mm}
	\setlength{\tabcolsep}{8pt}
	\begin{tabular}{llllc}
		\toprule
		\textbf{Environment}                   & \textbf{Type}                       & \textbf{Size}                 & \textbf{Dataset Name}        & \textbf{Env Steps} \\
		\midrule
		\multirow[c]{3}{*}{\texttt{pointmaze}} & \multirow[c]{3}{*}{\texttt{stitch}}
		                                       & \texttt{Medium}                     & pointmaze-medium-stitch-v0    & 1000                                              \\
		                                       &                                     & \texttt{Large}                & pointmaze-large-stitch-v0    & 1000               \\
		                                       &                                     & \texttt{Giant}                & pointmaze-giant-stitch-v0    & 1000               \\
		\midrule
		\multirow[c]{3}{*}{\texttt{antmaze}}   & \multirow[c]{3}{*}{\texttt{stitch}}
		                                       & \texttt{Medium}                     & antmaze-medium-stitch-v0      & 1000                                              \\
		                                       &                                     & \texttt{Large}                & antmaze-large-stitch-v0      & 2000               \\
		                                       &                                     & \texttt{Giant}                & antmaze-giant-stitch-v0      & 2000               \\
		\midrule
		\multirow[c]{3}{*}{\makecell[l]{\texttt{humanoid}                                                                                                                \\\texttt{maze}}} & \multirow[c]{3}{*}{\texttt{stitch}}
		                                       & \texttt{Medium}                     & humanoidmaze-medium-stitch-v0 & 5000                                              \\
		                                       &                                     & \texttt{Large}                & humanoidmaze-large-stitch-v0 & 5000               \\
		                                       &                                     & \texttt{Giant}                & humanoidmaze-giant-stitch-v0 & 8000               \\
		\midrule
		\multirow[c]{2}{*}{\texttt{antsoccer}} & \multirow[c]{2}{*}{\texttt{stitch}}
		                                       & \texttt{Arena}                      & antsoccer-arena-stitch-v0     & 5000                                              \\
		                                       &                                     & \texttt{Medium}               & antsoccer-medium-stitch-v0   & 5000               \\
		\midrule
		\multirow[c]{4}{*}{\texttt{cube}}      & \multirow[c]{4}{*}{\texttt{play}}
		                                       & \texttt{Single}                     & cube-single-play-v0           & 2000                                              \\
		                                       &                                     & \texttt{Double}               & cube-double-play-v0          & 2000               \\
		                                       &                                     & \texttt{Triple}               & cube-triple-play-v0          & 2000               \\
		                                       &                                     & \texttt{Quadruple}            & cube-quadruple-play-v0       & 2000               \\
		\midrule
		\multirow[c]{2}{*}{\makecell[l]{\texttt{visual}                                                                                                                  \\\texttt{antmaze}}} & \multirow[c]{2}{*}{\texttt{stitch}}
		                                       & \texttt{Medium}                     & antmaze-medium-stitch-v0      & 2000                                              \\
		                                       &                                     & \texttt{Large}                & antmaze-large-stitch-v0      & 2000               \\
		\bottomrule
	\end{tabular}
\end{table}

\subsection{Diffusion Model Architecture and Training}
\label{app:arch}

Our implementation is built on top of the public CompDiffuser codebase\footnote{\url{https://github.com/devinluo27/comp_diffuser_release}}~\cite{luo2025generative}, and we use the same pretrained local diffusion model as CompDiffuser for all experiments.
For \texttt{pointmaze} and \texttt{antmaze} (2D), the model architecture follows a 1D temporal U-Net with residual blocks, group normalization, and sinusoidal timestep embeddings.
For \texttt{humanoid maze}, \texttt{antsoccer}, and \texttt{cube}, we use a DiT-based architecture~\cite{luo2025generative} with environment-specific hidden dimensions and patch sizes.
Training uses the standard DDPM~\cite{ho2020denoising} objective with a linear noise schedule.
All models are trained until convergence using the Adam optimizer with learning rate $2 \times 10^{-4}$.

\subsection{Hyperparameters}
\label{app:rcd_hyperparams}

\cref{tab:impl_hyperparams} lists the diffusion model and RCD hyperparameters used across all environments.
We use a \emph{single fixed set of RCD hyperparameters} ($w{=}0.25$, $\lambda_{\mathrm{ov}}{=}0.5$, $s/T{=}0.4$) across every benchmark in this paper, including locomotion mazes of three different scales, high-dimensional object manipulation, and visual AntMaze. The hyperparameter ablations in \cref{app:add_ablation} confirm that RCD is robust to the choice of these hyperparameters.

\begin{table}[ht]
	\centering
	\caption{\textbf{Hyperparameters across environments.}}
	\label{tab:impl_hyperparams}
	\resizebox{\textwidth}{!}{%
		\begin{tabular}{@{}l cccccc@{}}
			\toprule
			\textbf{Parameter}                     & \textbf{PointMaze} & \textbf{AntMaze} & \makecell{\textbf{Humanoid}                       \\\textbf{Maze}} & \textbf{Cube} & \textbf{AntSoccer} & \makecell{\textbf{Visual}\\\textbf{AntMaze}} \\
			\midrule
			Architecture                           & U-Net              & U-Net            & DiT                         & DiT  & DiT  & U-Net \\
			Segment length $H$                     & 160                & 160              & 336                         & 160  & 160  & 160   \\
			Overlap size                           & 64                 & 64               & 128                         & 56   & 56   & 64    \\
			Diffusion steps $T$                    & 1000               & 512              & 512                         & 512  & 512  & 512   \\
			\midrule
			Probe ratio $s/T$                      & 0.40               & 0.40             & 0.40                        & 0.40 & 0.40 & 0.40  \\
			Guidance weight $w$                    & 0.25               & 0.25             & 0.25                        & 0.25 & 0.25 & 0.25  \\
			Overlap weight $\lambda_{\mathrm{ov}}$ & 0.5                & 0.5              & 0.5                         & 0.5  & 0.5  & 0.5   \\
			\bottomrule
		\end{tabular}%
	}
\end{table}

\begin{table}[ht]
	\centering
    \caption{\textbf{Number of composed segments $M$ per environment.} For locomotion mazes and \texttt{antsoccer}, values follow the same setting as CompDiffuser~\cite{luo2025generative}.}
	\label{tab:env_n_segments}
	\vspace{1mm}
	\setlength{\tabcolsep}{8pt}
	\begin{tabular}{llllc}
		\toprule
		\textbf{Environment}                   & \textbf{Type}                       & \textbf{Size}                 & \textbf{Dataset Name}        & \textbf{\# Segments $M$} \\
		\midrule
		\multirow[c]{3}{*}{\texttt{pointmaze}} & \multirow[c]{3}{*}{\texttt{stitch}}
		                                       & \texttt{Medium}                     & pointmaze-medium-stitch-v0    & 3                                                       \\
		                                       &                                     & \texttt{Large}                & pointmaze-large-stitch-v0    & 6                        \\
		                                       &                                     & \texttt{Giant}                & pointmaze-giant-stitch-v0    & 8                        \\
		\midrule
		\multirow[c]{3}{*}{\texttt{antmaze}}   & \multirow[c]{3}{*}{\texttt{stitch}}
		                                       & \texttt{Medium}                     & antmaze-medium-stitch-v0      & 3                                                       \\
		                                       &                                     & \texttt{Large}                & antmaze-large-stitch-v0      & 6                        \\
		                                       &                                     & \texttt{Giant}                & antmaze-giant-stitch-v0      & 9                        \\
		\midrule
		\multirow[c]{3}{*}{\makecell[l]{\texttt{humanoid}                                                                                                                      \\\texttt{maze}}} & \multirow[c]{3}{*}{\texttt{stitch}}
		                                       & \texttt{Medium}                     & humanoidmaze-medium-stitch-v0 & 4                                                       \\
		                                       &                                     & \texttt{Large}                & humanoidmaze-large-stitch-v0 & 6                        \\
		                                       &                                     & \texttt{Giant}                & humanoidmaze-giant-stitch-v0 & 11                       \\
		\midrule
		\multirow[c]{2}{*}{\texttt{antsoccer}} & \multirow[c]{2}{*}{\texttt{stitch}}
		                                       & \texttt{Arena}                      & antsoccer-arena-stitch-v0     & 5                                                       \\
		                                       &                                     & \texttt{Medium}               & antsoccer-medium-stitch-v0   & 6                        \\
		\midrule
		\multirow[c]{4}{*}{\texttt{cube}}      & \multirow[c]{4}{*}{\texttt{play}}
		                                       & \texttt{Single}                     & cube-single-play-v0           & 2                                                       \\
		                                       &                                     & \texttt{Double}               & cube-double-play-v0          & 5                        \\
		                                       &                                     & \texttt{Triple}               & cube-triple-play-v0          & 10                       \\
		                                       &                                     & \texttt{Quadruple}            & cube-quadruple-play-v0       & 10                       \\
		\midrule
		\multirow[c]{2}{*}{\makecell[l]{\texttt{visual}                                                                                                                        \\\texttt{antmaze}}} & \multirow[c]{2}{*}{\texttt{stitch}}
		                                       & \texttt{Medium}                     & antmaze-medium-stitch-v0      & 2                                                       \\
		                                       &                                     & \texttt{Large}                & antmaze-large-stitch-v0      & 5                        \\
		\bottomrule
	\end{tabular}
\end{table}

\subsection{Evaluation Protocol}
\label{app:eval_protocol}

We follow the OGBench evaluation protocol~\cite{park2025ogbench}.
Each environment specifies 5 test-time start-goal pairs.
For each pair, we run 20 episodes and report the binary success rate (reaching within a threshold distance of the goal).
Results are averaged over 5 random seeds, and we report mean $\pm$ standard deviation.
For replanning experiments, we adopt the adaptive replanning strategy proposed by CompDiffuser~\cite{luo2025generative}: replanning is triggered when the agent deviates from its current subgoal beyond a threshold distance, and a receding scheme progressively reduces the number of composed segments based on how much of the current plan has been executed to encourage faster convergence toward the goal.

\subsection{Compute Resources}
\label{app:compute}

All experiments, including training of the local diffusion models and all evaluations are conducted on a single NVIDIA H100 GPU.

\subsection{Low-Level Controllers}
\label{app:low_level}

The diffusion planner generates state-space trajectories, and a separate low-level controller converts these into executable actions.
\textbf{Locomotion and AntSoccer.} Following CompDiffuser~\cite{luo2025generative}, we train an MLP-based inverse dynamics model that takes consecutive states $(s_t, s_{t+1})$ as input and predicts the action $a_t$. The same model architecture and training procedure from CompDiffuser is used.
\textbf{Cube Manipulation.} We use a DQL-based value-learning policy~\cite{wang2022diffusion} that takes the current state and a planned subgoal as input and outputs actions. The DQL policy is trained on the same offline dataset used for the diffusion planner.
\textbf{Visual AntMaze.} We train an inverse dynamics model in the VAE latent space that maps consecutive latent states $(z_t, z_{t+1})$ to actions, following a similar architecture to the state-based inverse dynamics but operating on 16-dimensional latent representations.

%=============================================================================
% ADDITIONAL RESULTS
%=============================================================================
\section{Additional Results}
\label{app:add_results}

%-----------------------------------------------------------------------------
\subsection{Additional Plan Quality Visualizations}
\label{app:plan_quality_viz}

\cref{fig:plan_quality_appendix} extends the plan quality analysis of \cref{sec:plan_quality} to \texttt{PointMaze-Giant-Stitch}, confirming that the improvement in plan feasibility observed on \texttt{AntMaze} generalizes across environments.
CompDiffuser produces valid rates as low as 10\% on the hardest task (Task 3), while CDGS shows improvement on some tasks but remains inconsistent (15--100\%).
RCD achieves 85--100\% valid rates across all 5 tasks, consistent with the \texttt{AntMaze-Giant-Stitch} case presented in \cref{sec:plan_quality}.

\begin{figure}[ht]
	\centering
	\includegraphics[width=0.95\linewidth]{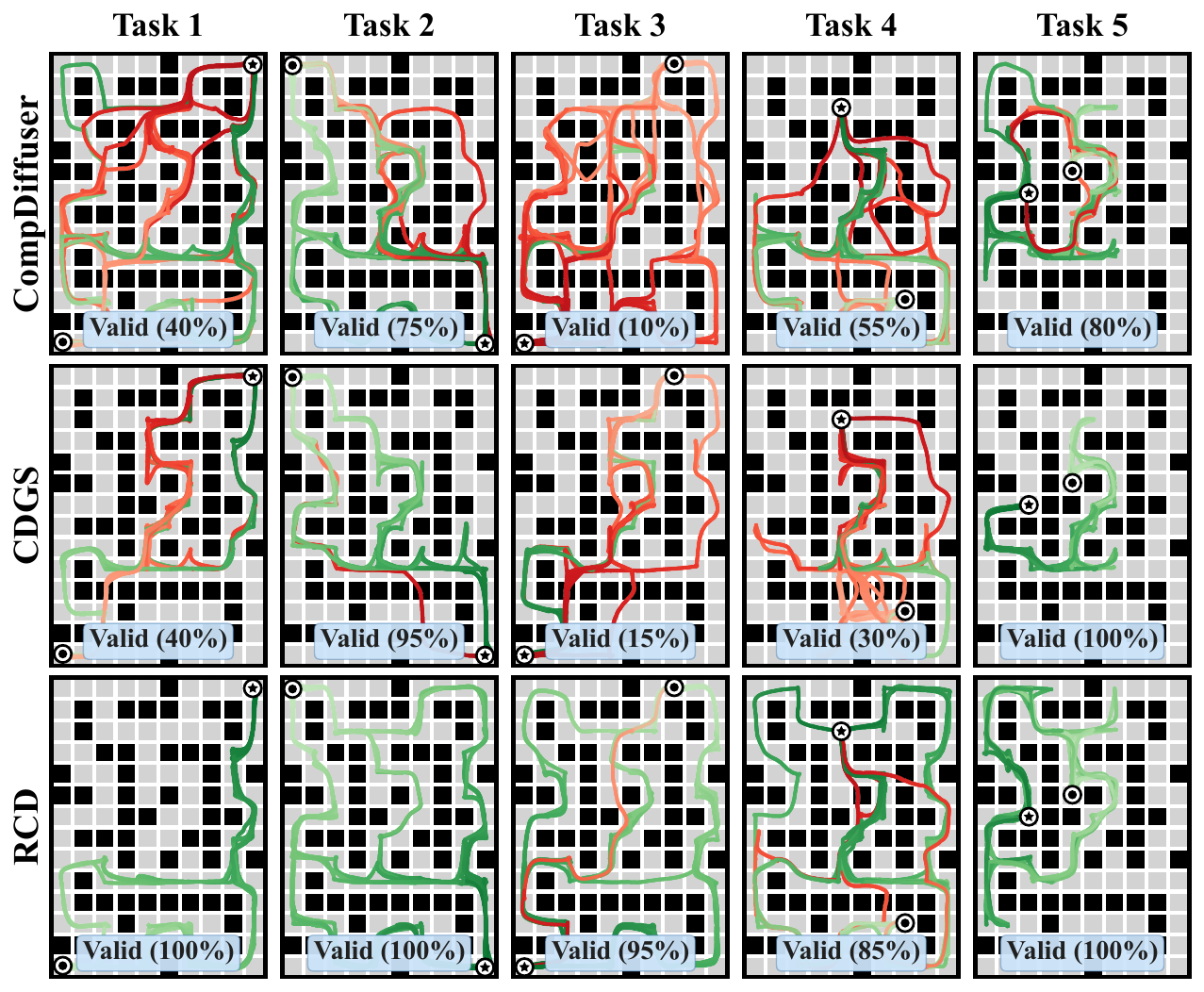}
	\caption{\textbf{Plan quality comparisons on \texttt{PointMaze-Giant-Stitch}.} Each column shows 20 sampled plans from CompDiffuser, CDGS, and RCD for 5 test-time tasks defined in OGBench. Plans that violate environment constraints (wall penetration) are shown in \textcolor[HTML]{f28482}{red}; feasible plans in \textcolor[HTML]{2ca25f}{green}.}
	\label{fig:plan_quality_appendix}
\end{figure}

%-----------------------------------------------------------------------------
\subsection{Additional Ablation Studies}
\label{app:add_ablation}

We vary the three RCD-specific hyperparameters one at a time on \texttt{PointMaze-Giant-Stitch} and \texttt{AntMaze-Giant-Stitch}, fixing the remaining ones at the defaults listed in \cref{tab:impl_hyperparams}. Reported numbers follow the same evaluation protocol as the main results (\cref{app:eval_protocol}). Success rates are averaged over $5$ random seeds, each evaluated on $5$ start-goal pairs with $20$ episodes per pair. The defaults ($w{=}0.25$, $\lambda_\mathrm{ov}{=}0.5$, $s/T{=}0.4$) are highlighted with \textbf{bold} values.

\paragraph{Ablation on guidance weight $w$.}
\cref{tab:abl_weight} varies the RCD guidance weight $w \in \{0, 0.25, 0.5\}$. Setting $w{=}0$ disables RCD guidance entirely and recovers a score close to plain compositional denoising, while both non-zero weights lead to large improvements on both environments. The default $w{=}0.25$ attains the best average success rate.

\begin{table}[H]
	\centering
	\caption{\textbf{Ablation on guidance weight $w$.}}
	\label{tab:abl_weight}
	\resizebox{\textwidth}{!}{%
		\begin{tabular}{lll ll lll}
			\toprule
			\multirow{2}{*}[-0.7ex]{\textbf{Env}} & \multirow{2}{*}[-0.7ex]{\textbf{Type}} & \multirow{2}{*}[-0.7ex]{\textbf{Size}}
			                                      & \multicolumn{2}{c}{\textbf{Baselines}} & \multicolumn{3}{c}{\textbf{RCD}}                                                                                                             \\
			\cmidrule(lr){4-5} \cmidrule(lr){6-8}
			                                      &                                        &                                        & \textbf{CD}     & \textbf{CDGS}  & \boldmath$w{=}0$ & \boldmath$w{=}0.25$ \textbf{(default)} & \boldmath$w{=}0.5$ \\
			\midrule
			\multirow[c]{1}{*}{\texttt{pointmaze}}
			                                      & \multirow[c]{1}{*}{\texttt{stitch}}    & \texttt{Giant}
			                                      & \valstd{69}{3}                         & \valstd{74}{3}                         & \valstd{59}{5}  & \valstd{\mathbf{100}}{0} & \valstd{100}{0}                        \\
			\multirow[c]{1}{*}{\texttt{antmaze}}
			                                      & \multirow[c]{1}{*}{\texttt{stitch}}    & \texttt{Giant}
			                                      & \valstd{67}{3}                         & \valstd{83}{3}                         & \valstd{56}{4}  & \valstd{\mathbf{89}}{2}  & \valstd{87}{3}                         \\
			\midrule
			\multicolumn{3}{c}{\textbf{Average}}  & $68.0$                                 & $78.5$                                 & $57.5$          & $\mathbf{94.5}$          & $93.5$                                                      \\
			\bottomrule
		\end{tabular}%
	}
\end{table}

\paragraph{Ablation on overlap consistency weight $\lambda_\mathrm{ov}$.}
\cref{tab:abl_overlap} varies the overlap consistency weight $\lambda_\mathrm{ov} \in \{0, 0.25, 0.5\}$. Even with $\lambda_\mathrm{ov}{=}0$ (reconstruction error only) the guided sampler already outperforms the compositional baseline, but adding the overlap consistency term consistently improves \texttt{AntMaze-Giant-Stitch} and yields the best overall success rate at the default $\lambda_\mathrm{ov}{=}0.5$.

\begin{table}[H]
	\centering
	\caption{\textbf{Ablation on overlap consistency weight $\lambda_\mathrm{ov}$.}}
	\label{tab:abl_overlap}
	\resizebox{\textwidth}{!}{%
		\begin{tabular}{lll ll lll}
			\toprule
			\multirow{2}{*}[-0.7ex]{\textbf{Env}} & \multirow{2}{*}[-0.7ex]{\textbf{Type}} & \multirow{2}{*}[-0.7ex]{\textbf{Size}}
			                                      & \multicolumn{2}{c}{\textbf{Baselines}} & \multicolumn{3}{c}{\textbf{RCD}}                                                                                                                                                                   \\
			\cmidrule(lr){4-5} \cmidrule(lr){6-8}
			                                      &                                        &                                        & \textbf{CD}                        & \textbf{CDGS}                         & \boldmath$\lambda_\mathrm{ov}{=}0$ & \boldmath$\lambda_\mathrm{ov}{=}0.25$ & \boldmath$\lambda_\mathrm{ov}{=}0.5$ \textbf{(default)} \\
			\midrule
			\multirow[c]{1}{*}{\texttt{pointmaze}}
			                                      & \multirow[c]{1}{*}{\texttt{stitch}}    & \texttt{Giant}
			                                      & \valstd{69}{3}                         & \valstd{74}{3}                         & \valstd{99}{1}                     & \valstd{92}{3}                        & \valstd{\mathbf{100}}{0}                                \\
			\multirow[c]{1}{*}{\texttt{antmaze}}
			                                      & \multirow[c]{1}{*}{\texttt{stitch}}    & \texttt{Giant}
			                                      & \valstd{67}{3}                         & \valstd{83}{3}                         & \valstd{79}{4}                     & \valstd{85}{3}                        & \valstd{\mathbf{89}}{2}                                 \\
			\midrule
			\multicolumn{3}{c}{\textbf{Average}}  & $68.0$                                 & $78.5$                                 & $89.0$                             & $88.5$                                & $\mathbf{94.5}$                                                                                  \\
			\bottomrule
		\end{tabular}%
	}
\end{table}

\paragraph{Ablation on probe level $s/T$.}
\cref{tab:abl_probe} varies the probe level $s/T \in \{0.1, 0.2, 0.3, 0.4, 0.5, 0.6\}$. Performance is stable across a wide range of probe levels, and we adopt $s/T{=}0.4$ as the default across all environments.

\begin{table}[H]
	\centering
	\caption{\textbf{Ablation on probe level $s/T$.}}
	\label{tab:abl_probe}
	\resizebox{\textwidth}{!}{%
		\begin{tabular}{lll ll llllll}
			\toprule
			\multirow{2}{*}[-0.7ex]{\textbf{Env}} & \multirow{2}{*}[-0.7ex]{\textbf{Type}} & \multirow{2}{*}[-0.7ex]{\textbf{Size}}
			                                      & \multicolumn{2}{c}{\textbf{Baselines}} & \multicolumn{6}{c}{\textbf{RCD}}                                                                                                                                                                                      \\
			\cmidrule(lr){4-5} \cmidrule(lr){6-11}
			                                      &                                        &                                        & \textbf{CD}    & \textbf{CDGS}  & \boldmath$s/T{=}0.1$ & \boldmath$0.2$ & \boldmath$0.3$ & \boldmath$0.4$ \textbf{(default)} & \boldmath$0.5$ & \boldmath$0.6$ \\
			\midrule
			\multirow[c]{1}{*}{\texttt{pointmaze}}
			                                      & \multirow[c]{1}{*}{\texttt{stitch}}    & \texttt{Giant}
			                                      & \valstd{69}{3}                         & \valstd{74}{3}                         & \valstd{98}{2} & \valstd{98}{2} & \valstd{98}{2}       & \valstd{\mathbf{100}}{0} & \valstd{98}{2} & \valstd{98}{2}                                                     \\
			\multirow[c]{1}{*}{\texttt{antmaze}}
			                                      & \multirow[c]{1}{*}{\texttt{stitch}}    & \texttt{Giant}
			                                      & \valstd{67}{3}                         & \valstd{83}{3}                         & \valstd{81}{4} & \valstd{86}{3} & \valstd{84}{3}       & \valstd{\mathbf{89}}{2}  & \valstd{\mathbf{89}}{2} & \valstd{83}{3}                                                     \\
			\midrule
			\multicolumn{3}{c}{\textbf{Average}}  & $68.0$                                 & $78.5$                                 & $89.5$         & $92.0$         & $91.0$               & $\mathbf{94.5}$          & $93.5$         & $90.5$                                                                              \\
			\bottomrule
		\end{tabular}%
	}
\end{table}

\section{Baseline Result Sources}
\label{app:baseline_sources}

\subsection{Locomotion}
We obtain scores for GCBC, GCIVL, GCIQL, QRL, CRL, and HIQL from Table~2 in \cite{park2025ogbench}. Scores for GSC is taken from Tables~1 and 2 in \cite{luo2025generative}. CDGS and RCD are from our experiments using the same pretrained models as CompDiffuser (CD).

\subsection{Object Manipulation}
For \texttt{antsoccer}, scores for GCBC through HIQL are from Table~2 in \cite{park2025ogbench}, and GSC from Table~3 in \cite{luo2025generative}. For \texttt{cube}, GCBC through HIQL are from Table~2 in \cite{park2025ogbench}.

\begin{figure}[h!]
	\centering
	\setlength{\tabcolsep}{1.5pt}
	\begin{tabular}{cccc}
		\includegraphics[width=0.23\linewidth]{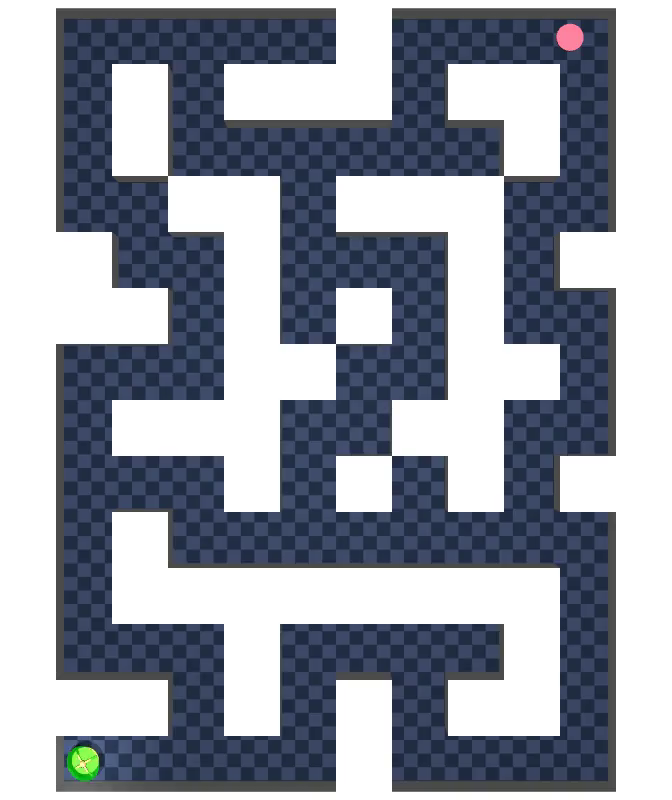} &
		\includegraphics[width=0.23\linewidth]{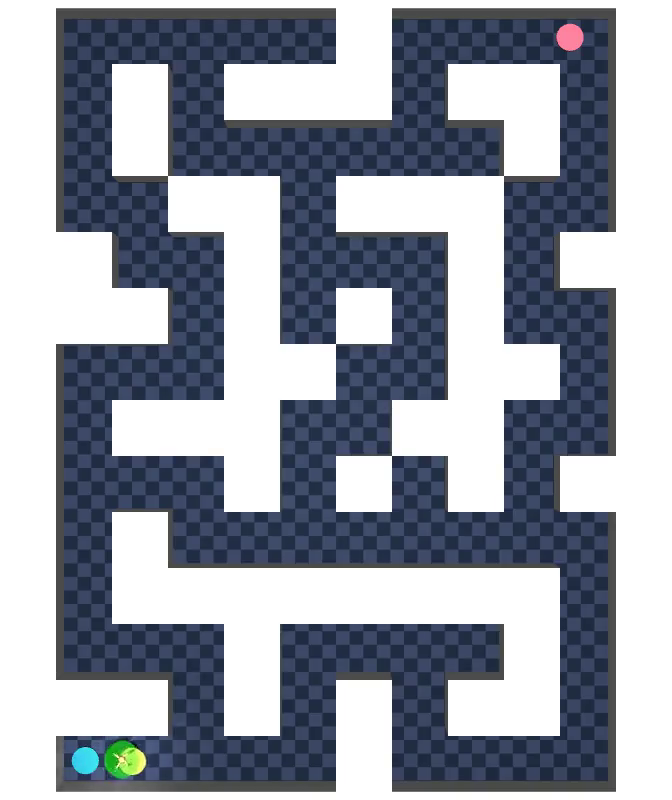} &
		\includegraphics[width=0.23\linewidth]{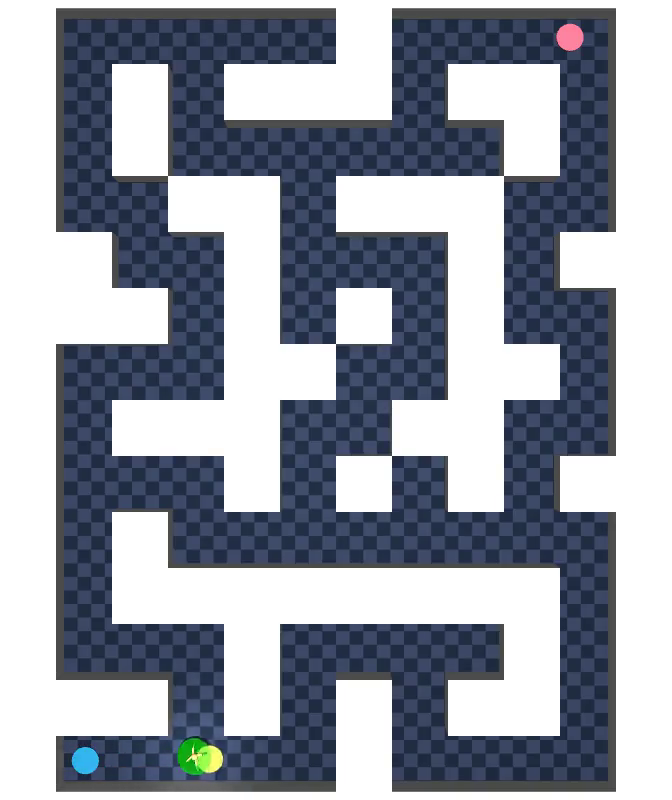} &
		\includegraphics[width=0.23\linewidth]{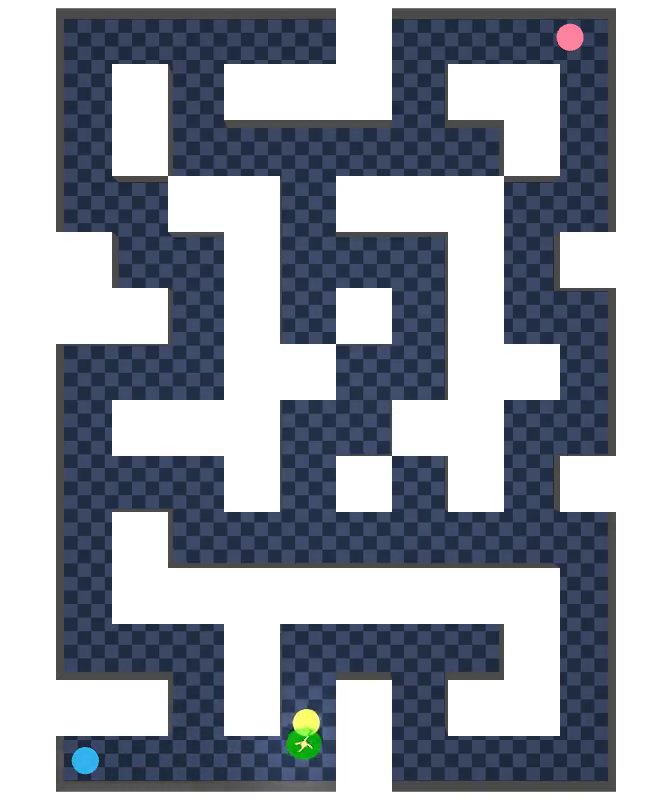} \\
		\addlinespace[1ex]
		\includegraphics[width=0.23\linewidth]{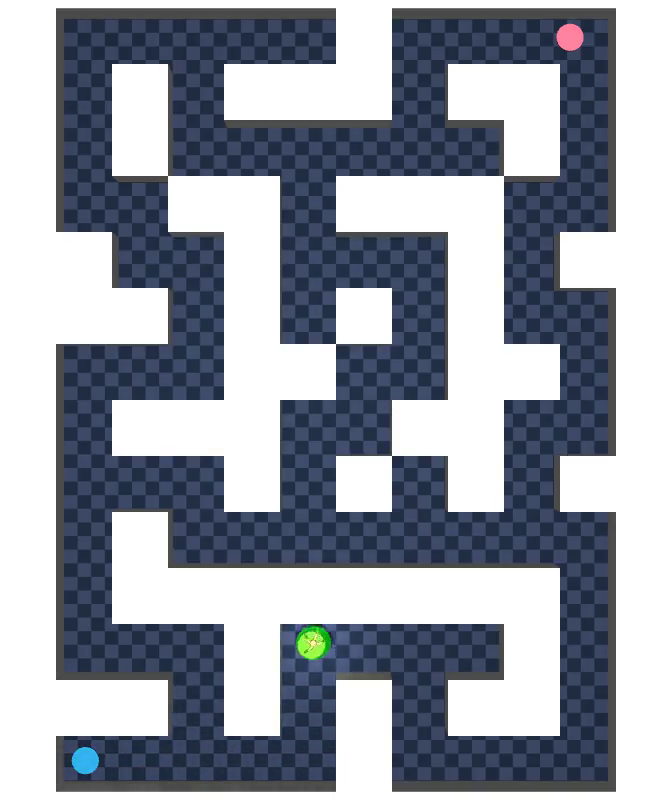} &
		\includegraphics[width=0.23\linewidth]{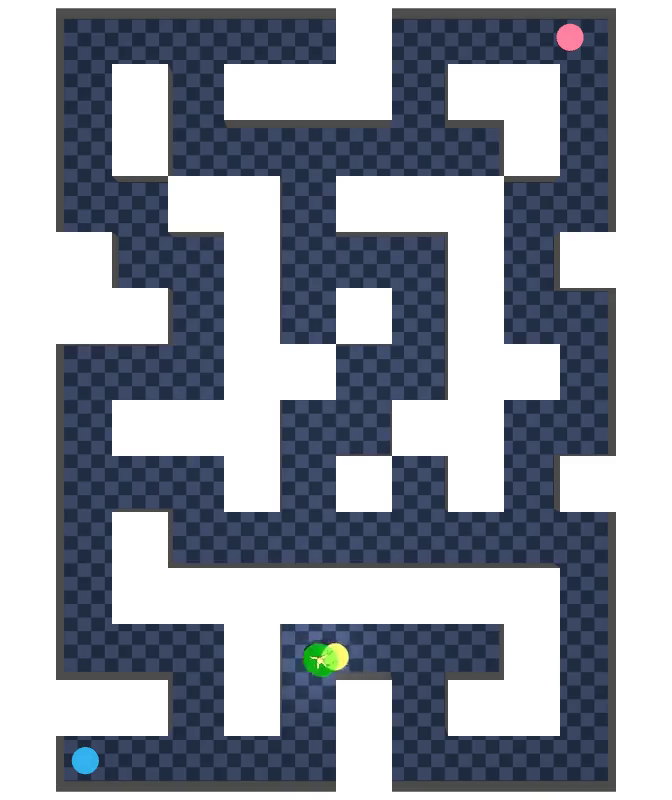} &
		\includegraphics[width=0.23\linewidth]{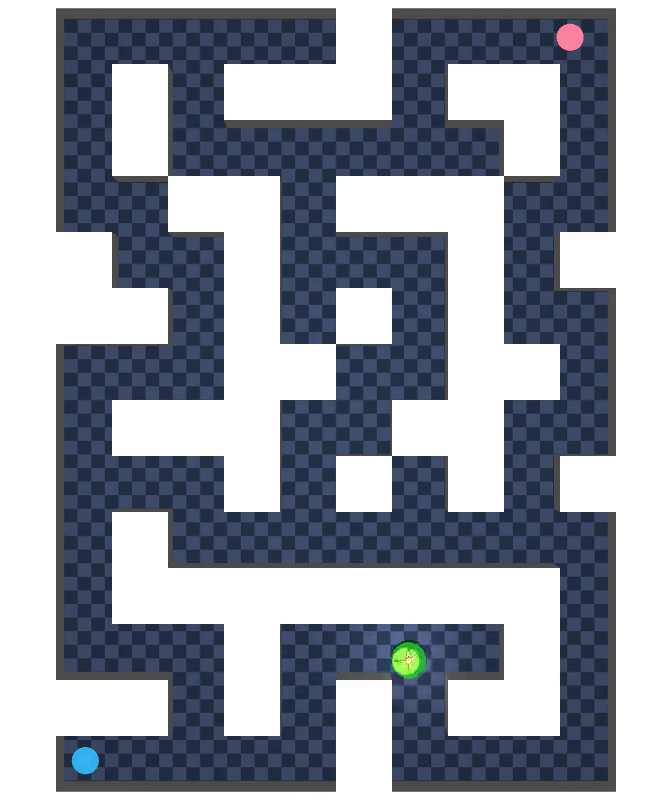} &
		\includegraphics[width=0.23\linewidth]{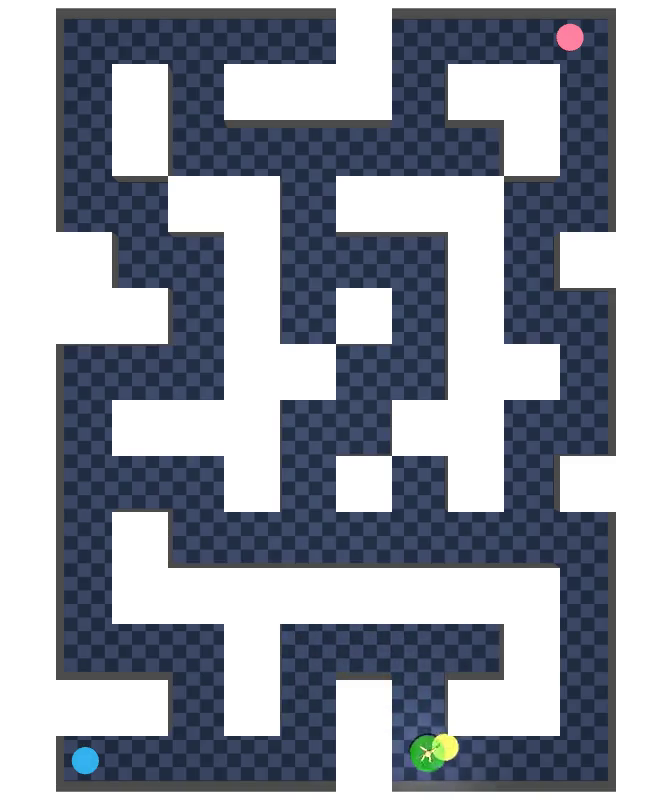} \\
		\addlinespace[1ex]
		\includegraphics[width=0.23\linewidth]{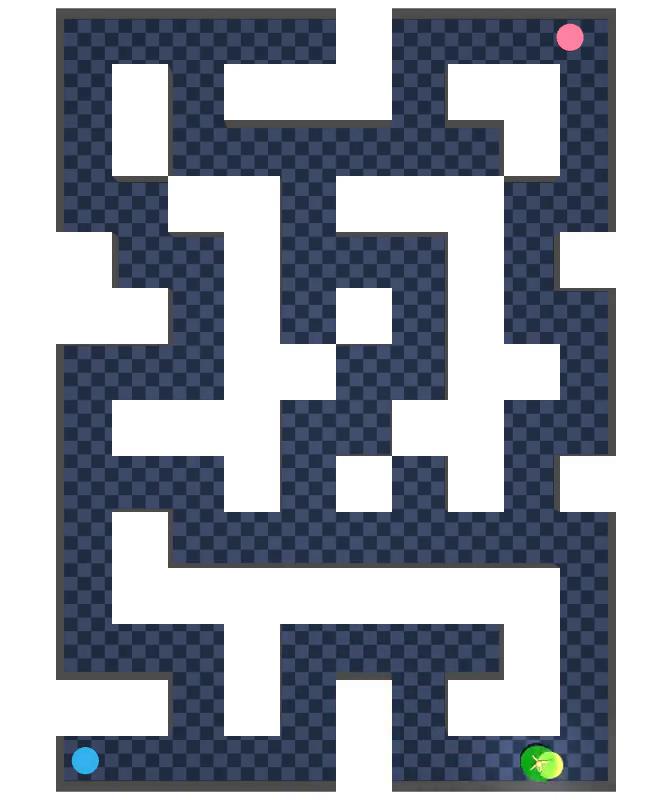} &
		\includegraphics[width=0.23\linewidth]{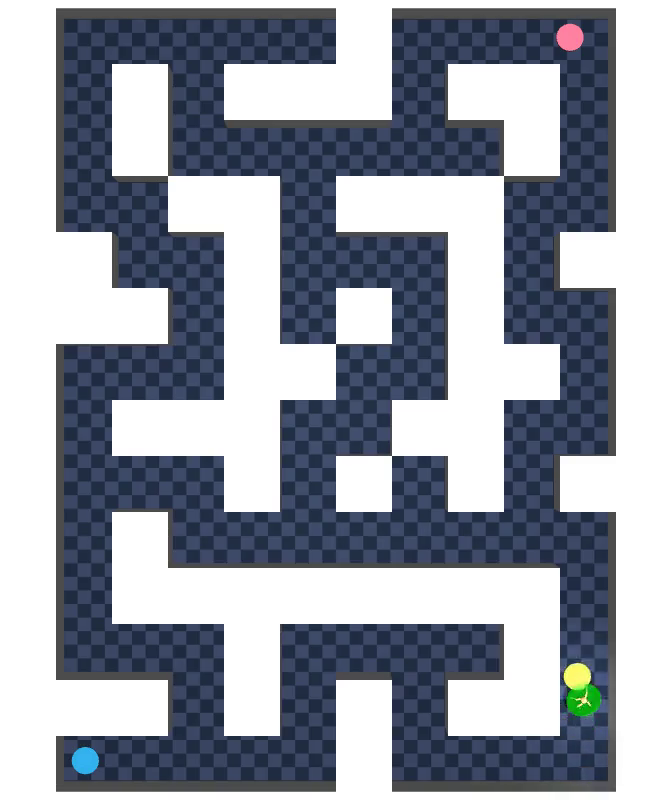} &
		\includegraphics[width=0.23\linewidth]{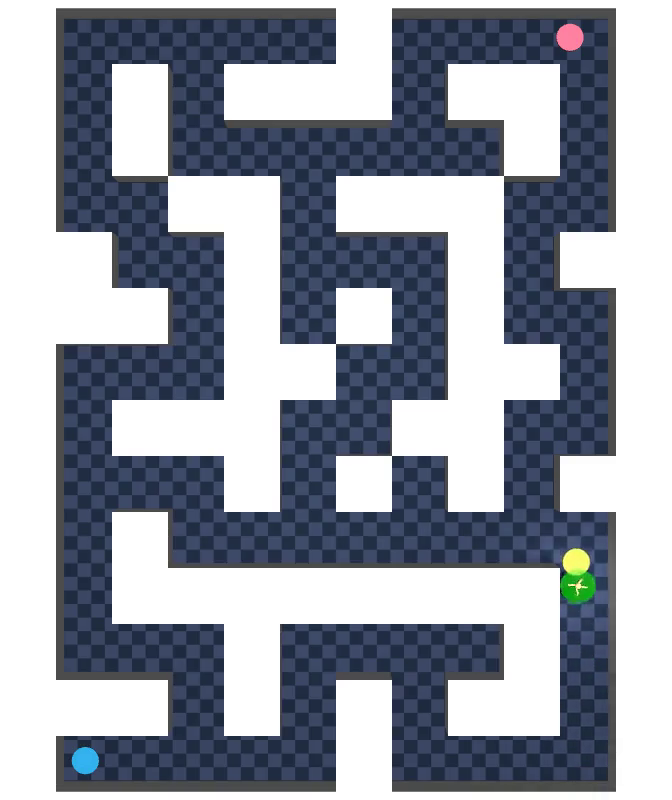} &
		\includegraphics[width=0.23\linewidth]{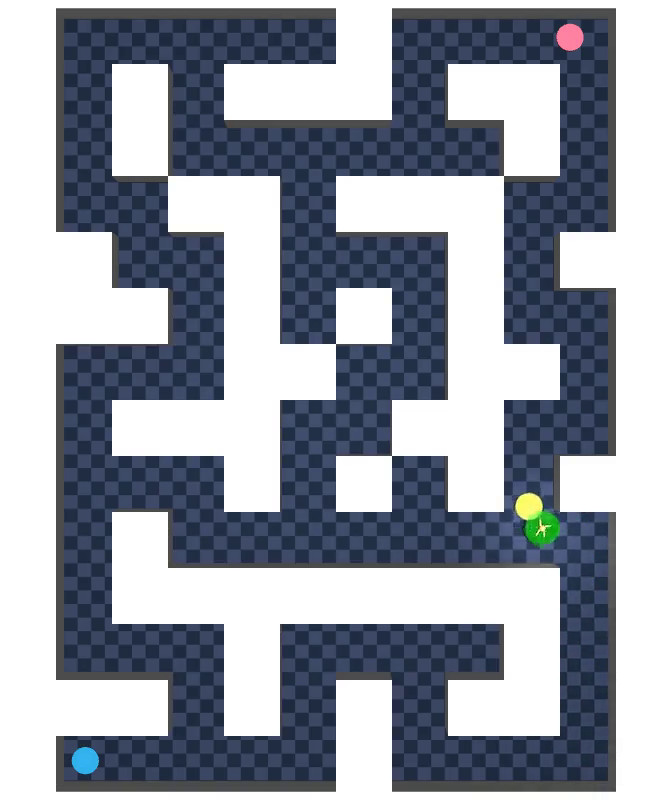} \\
		\addlinespace[1ex]
		\includegraphics[width=0.23\linewidth]{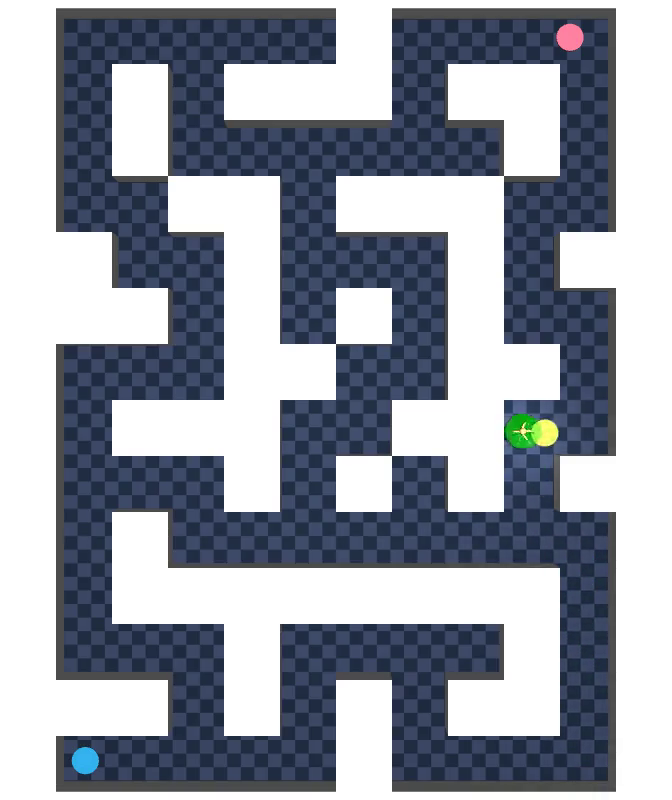} &
		\includegraphics[width=0.23\linewidth]{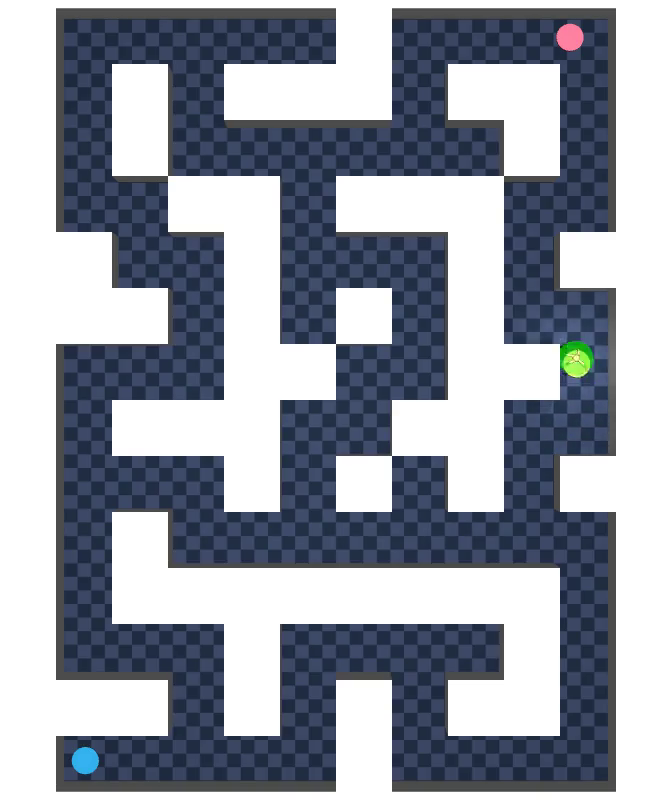} &
		\includegraphics[width=0.23\linewidth]{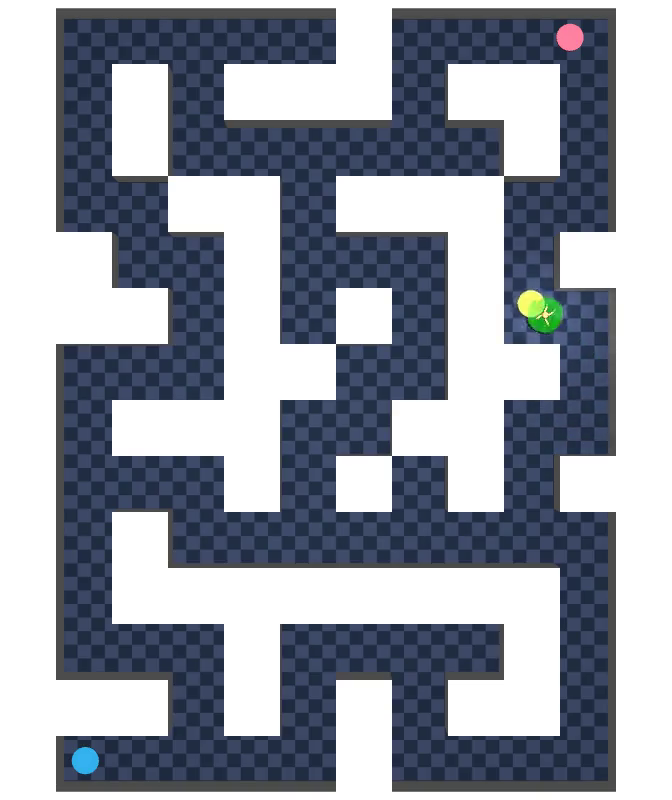} &
		\includegraphics[width=0.23\linewidth]{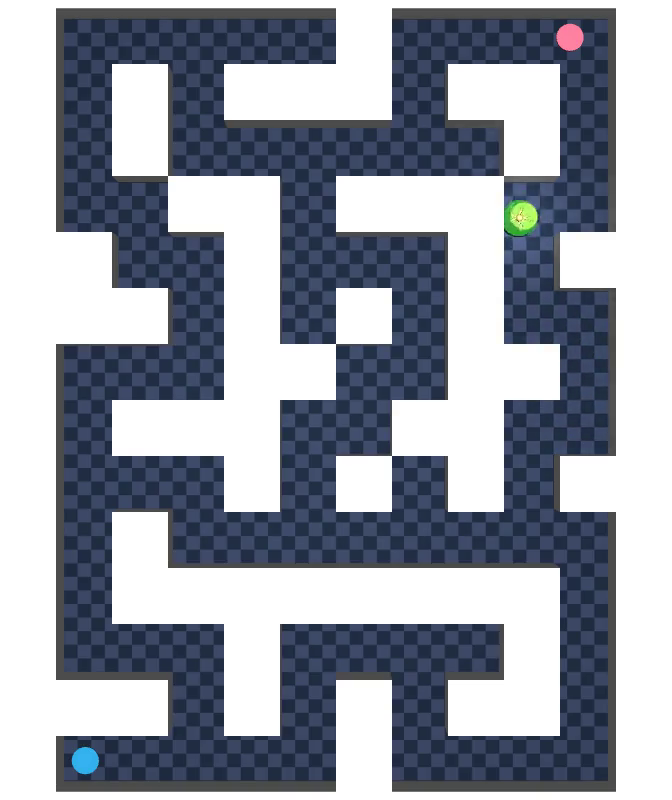} \\
		\addlinespace[1ex]
		\includegraphics[width=0.23\linewidth]{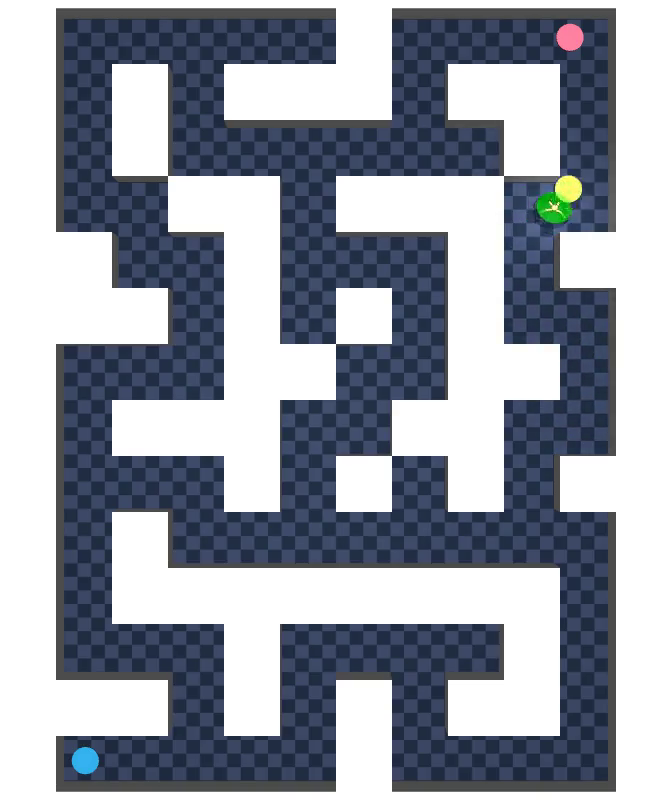} &
		\includegraphics[width=0.23\linewidth]{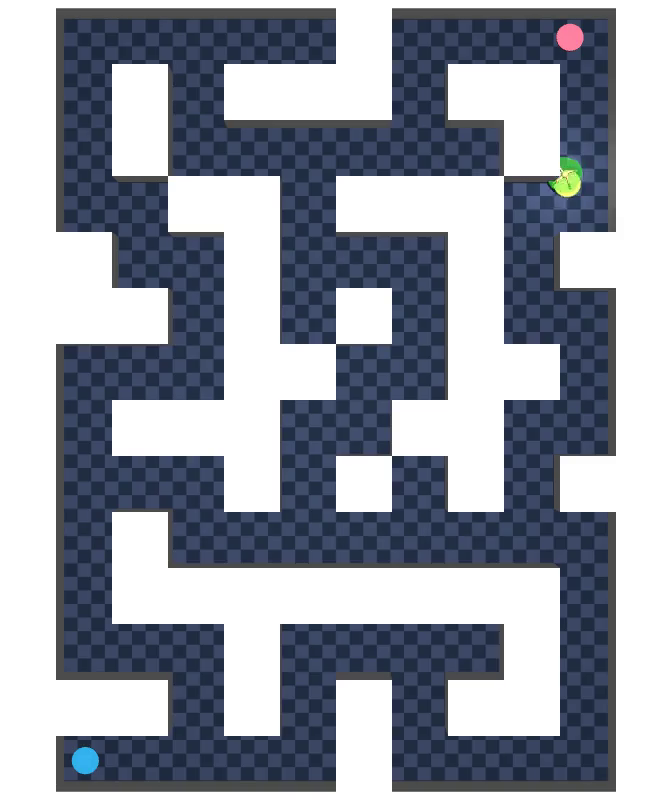} &
		\includegraphics[width=0.23\linewidth]{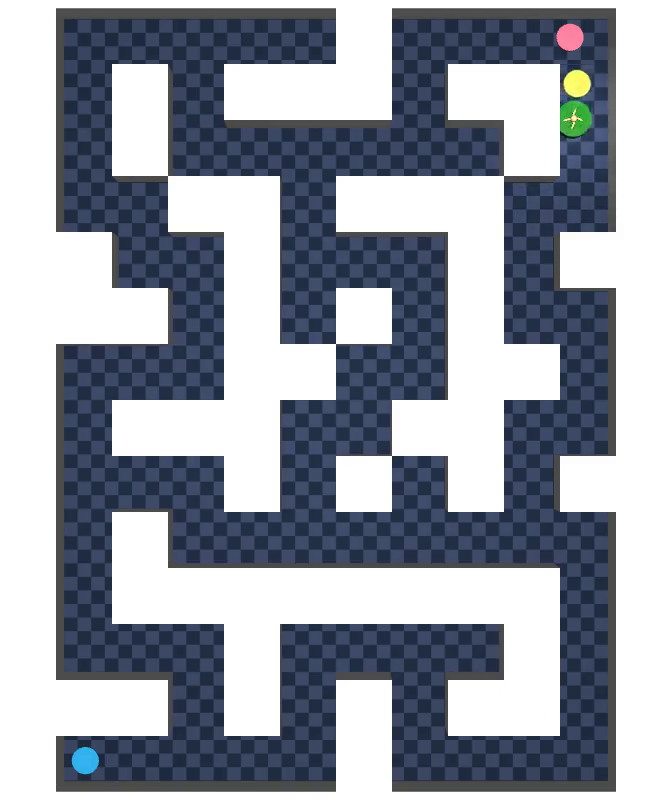} &
		\includegraphics[width=0.23\linewidth]{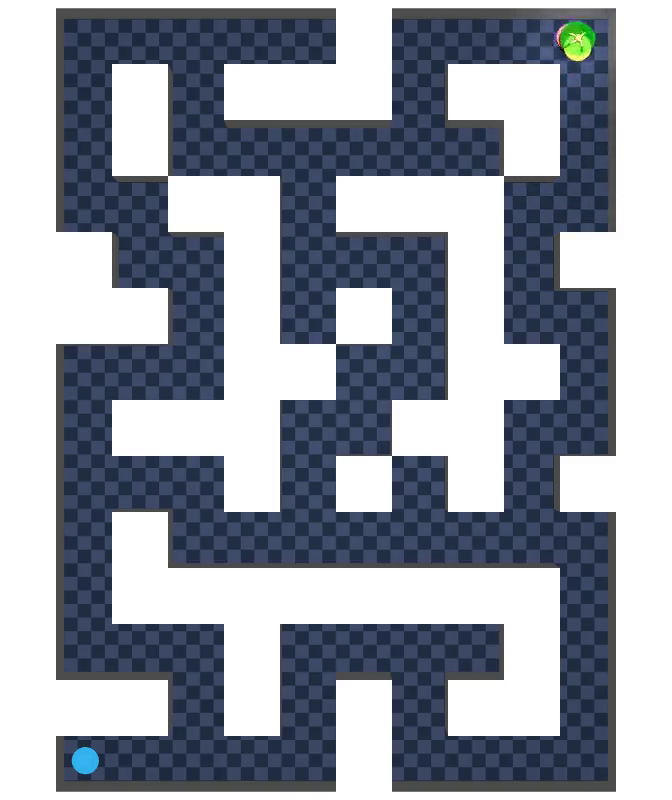} \\
	\end{tabular}
	\caption{\textbf{Visualization of RCD rollout execution on \texttt{AntMaze-Giant-Stitch}.} The ant agent navigates from the starting region to the pink goal.}
	\label{fig:rollout_antmaze_giant}
\end{figure}

\begin{figure}[h!]
	\centering
	\setlength{\tabcolsep}{1.5pt}
	\begin{tabular}{cccc}
		\includegraphics[width=0.23\linewidth]{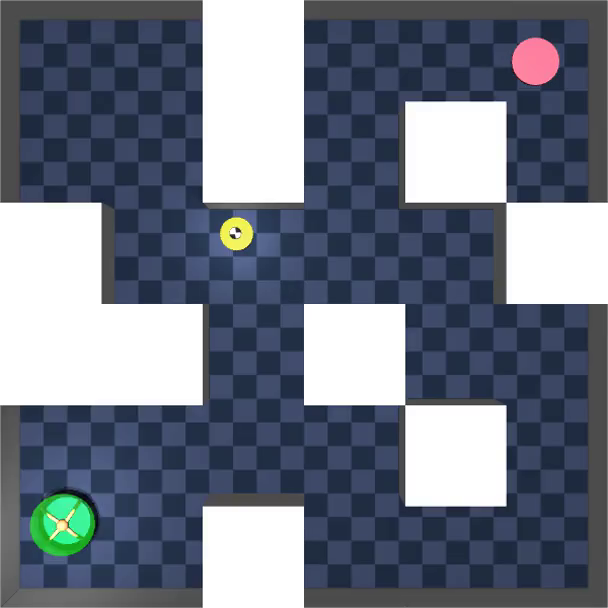} &
		\includegraphics[width=0.23\linewidth]{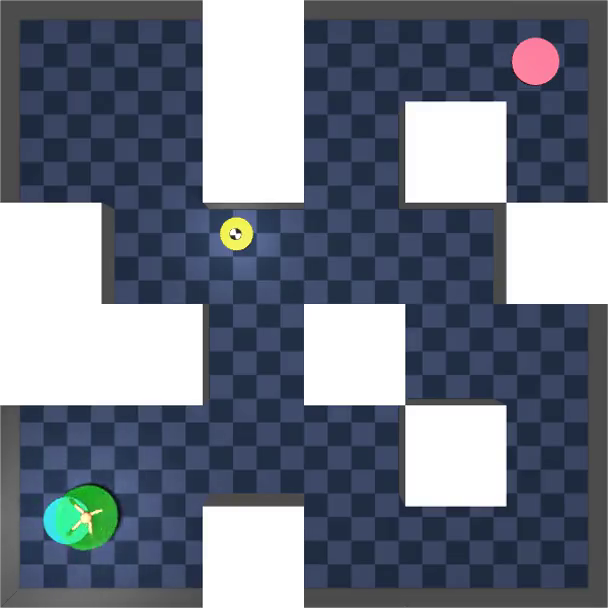} &
		\includegraphics[width=0.23\linewidth]{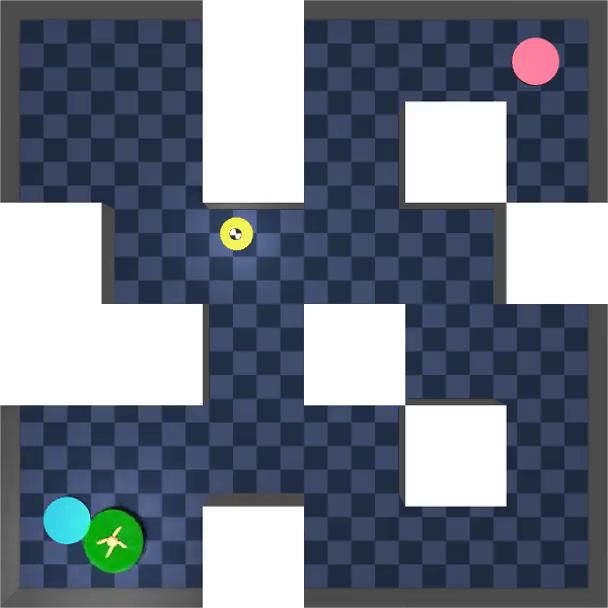} &
		\includegraphics[width=0.23\linewidth]{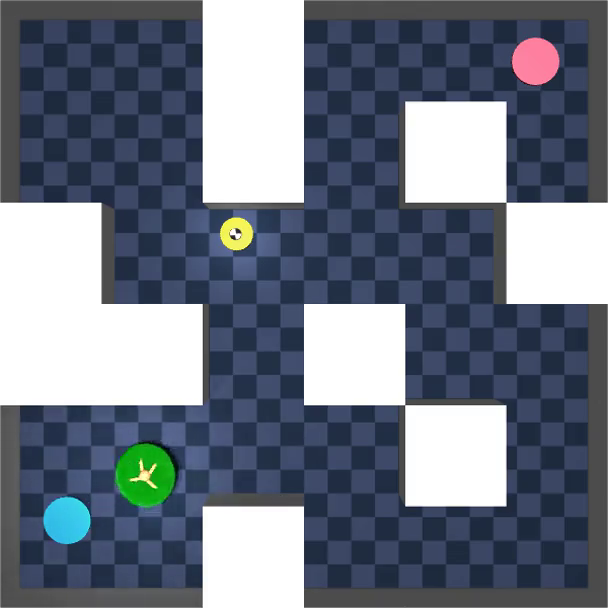} \\
		\addlinespace[1ex]
		\includegraphics[width=0.23\linewidth]{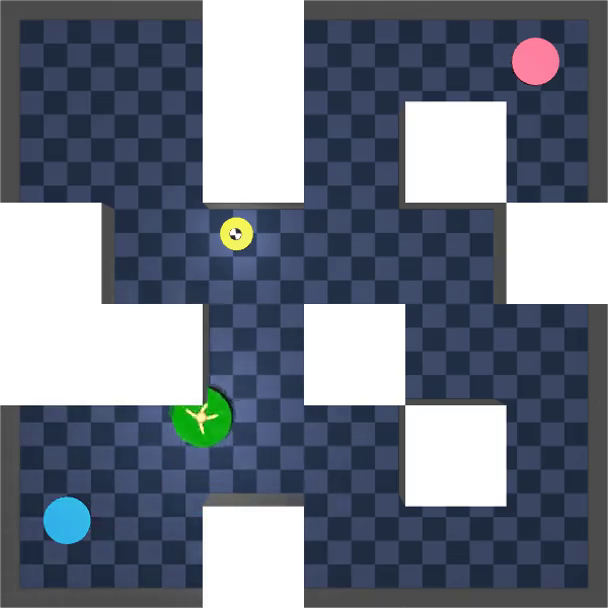} &
		\includegraphics[width=0.23\linewidth]{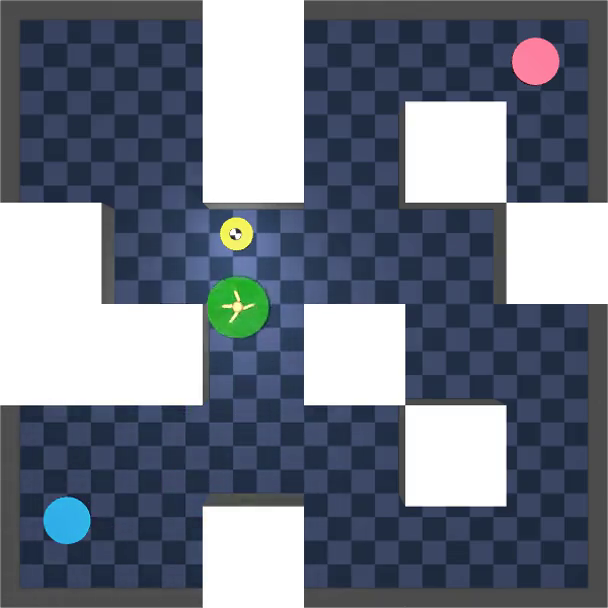} &
		\includegraphics[width=0.23\linewidth]{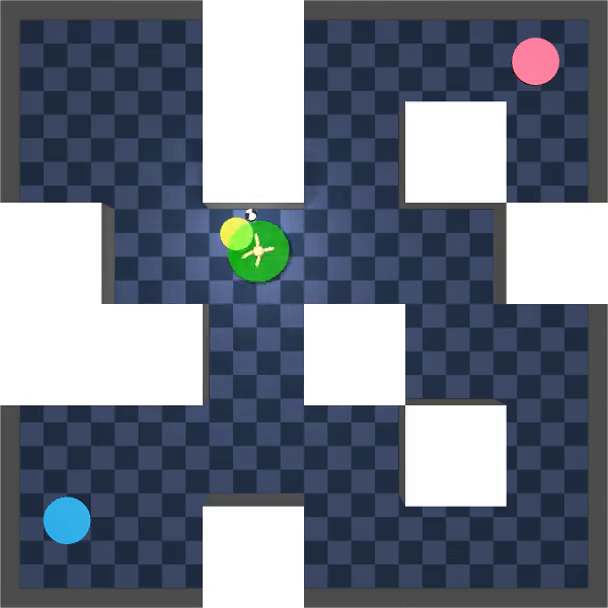} &
		\includegraphics[width=0.23\linewidth]{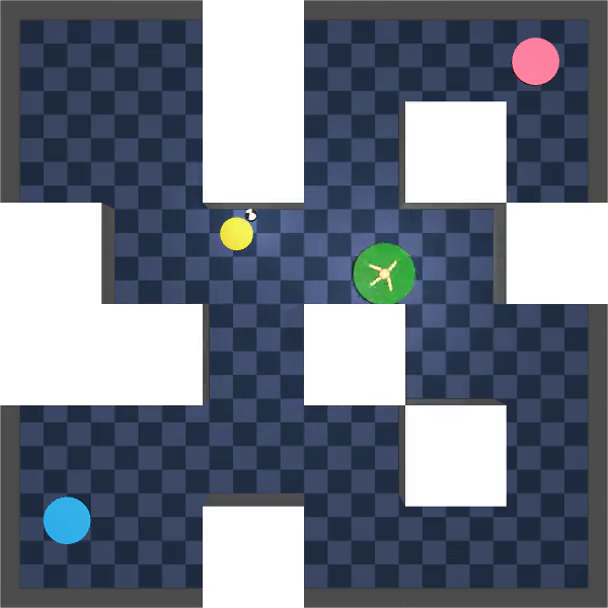} \\
		\addlinespace[1ex]
		\includegraphics[width=0.23\linewidth]{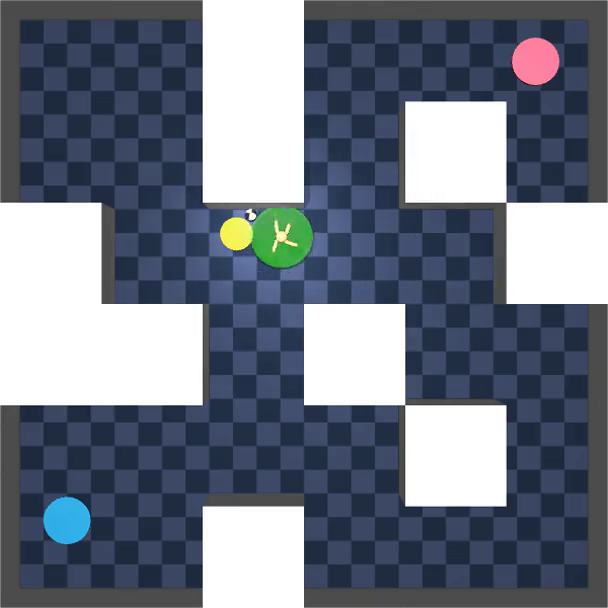} &
		\includegraphics[width=0.23\linewidth]{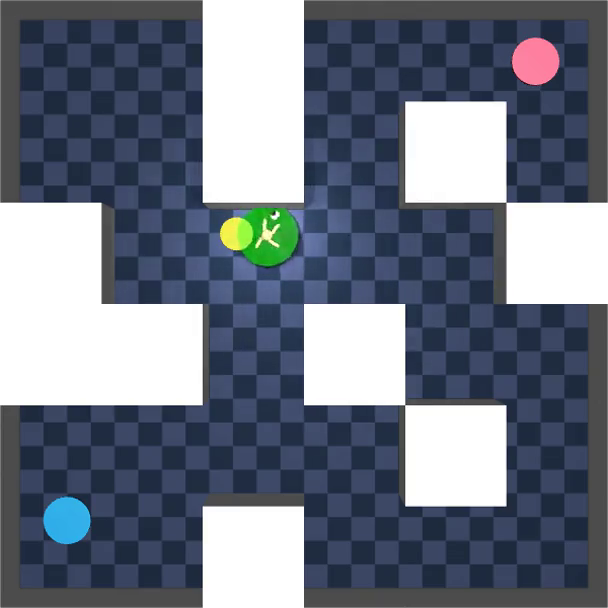} &
		\includegraphics[width=0.23\linewidth]{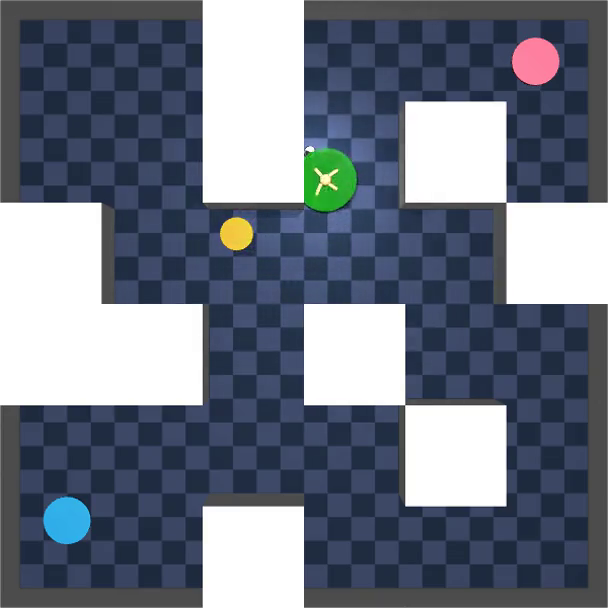} &
		\includegraphics[width=0.23\linewidth]{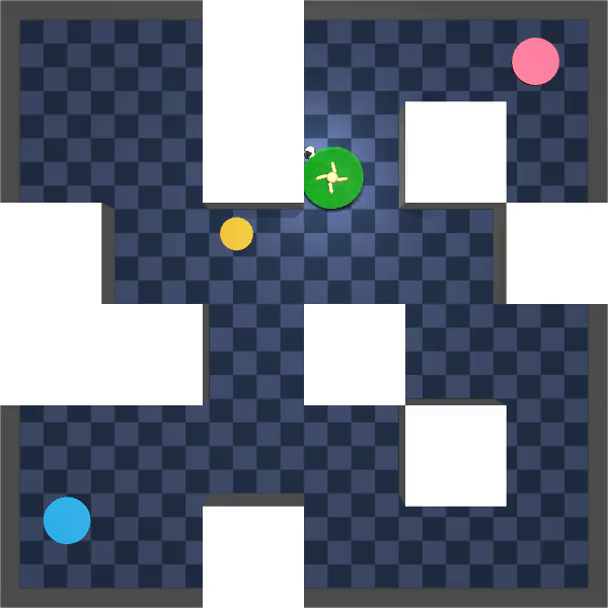} \\
		\addlinespace[1ex]
		\includegraphics[width=0.23\linewidth]{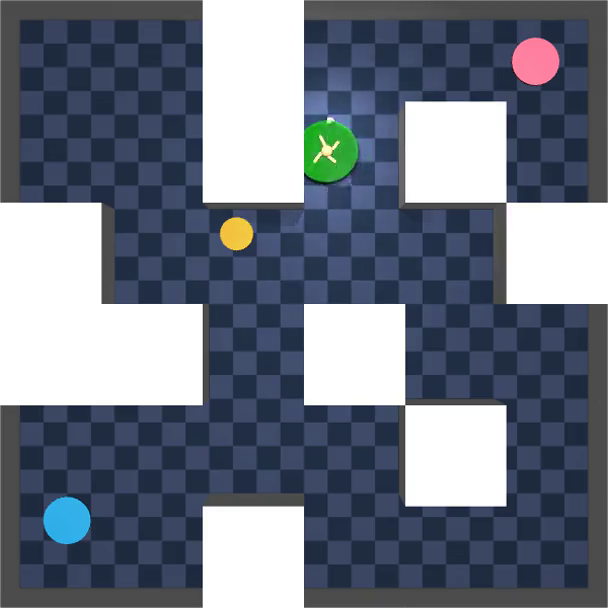} &
		\includegraphics[width=0.23\linewidth]{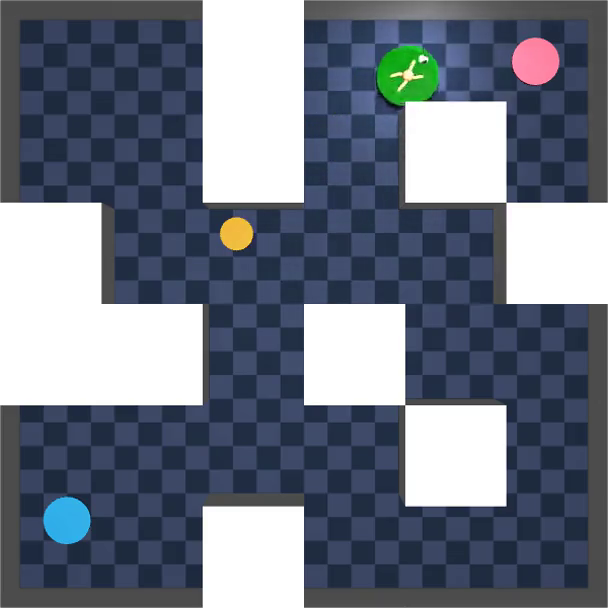} &
		\includegraphics[width=0.23\linewidth]{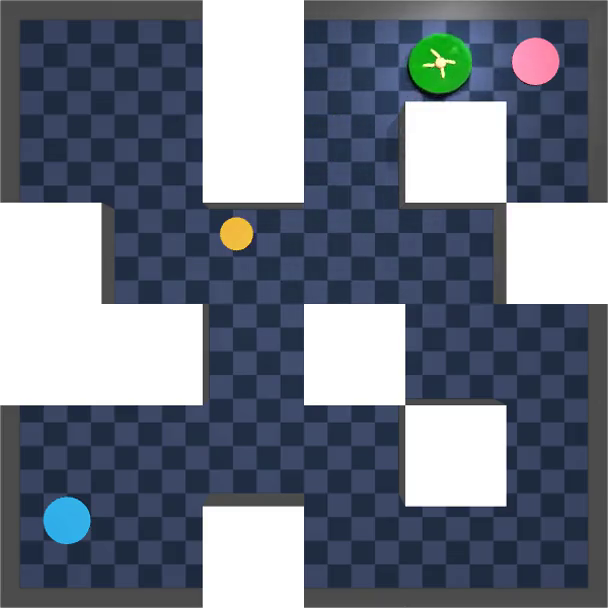} &
		\includegraphics[width=0.23\linewidth]{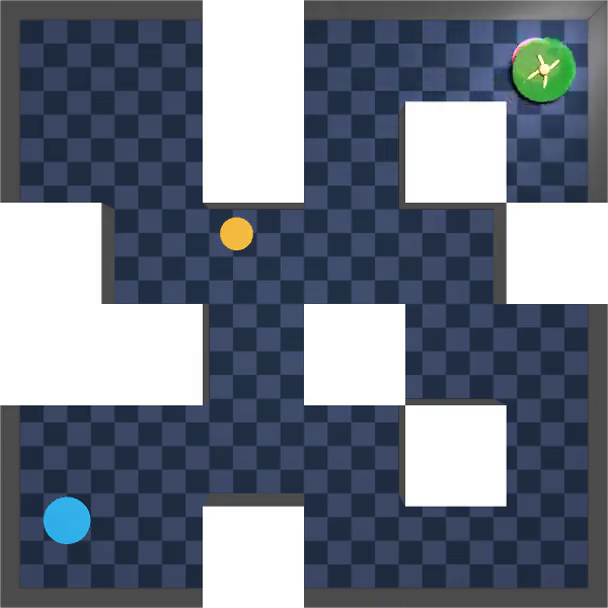} \\
	\end{tabular}
	\caption{\textbf{Visualization of RCD rollout execution on \texttt{AntSoccer-Medium-Stitch}.} The ant agent dribbles the soccer ball toward the goal location.}
	\label{fig:rollout_antsoccer_medium}
\end{figure}

\begin{figure}[h!]
	\centering
	\setlength{\tabcolsep}{1.5pt}
	\begin{tabular}{cccc}
		\includegraphics[width=0.23\linewidth]{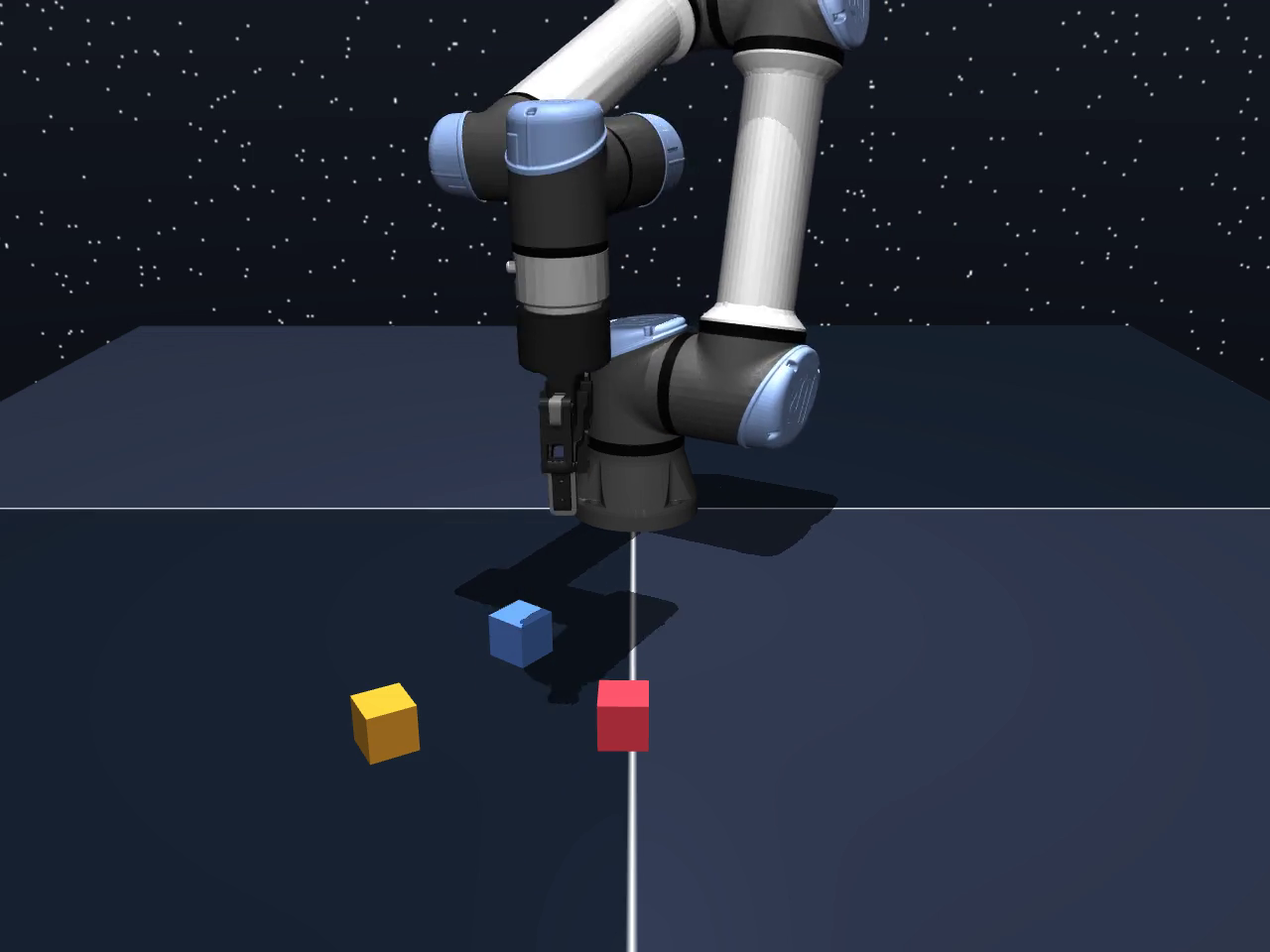} &
		\includegraphics[width=0.23\linewidth]{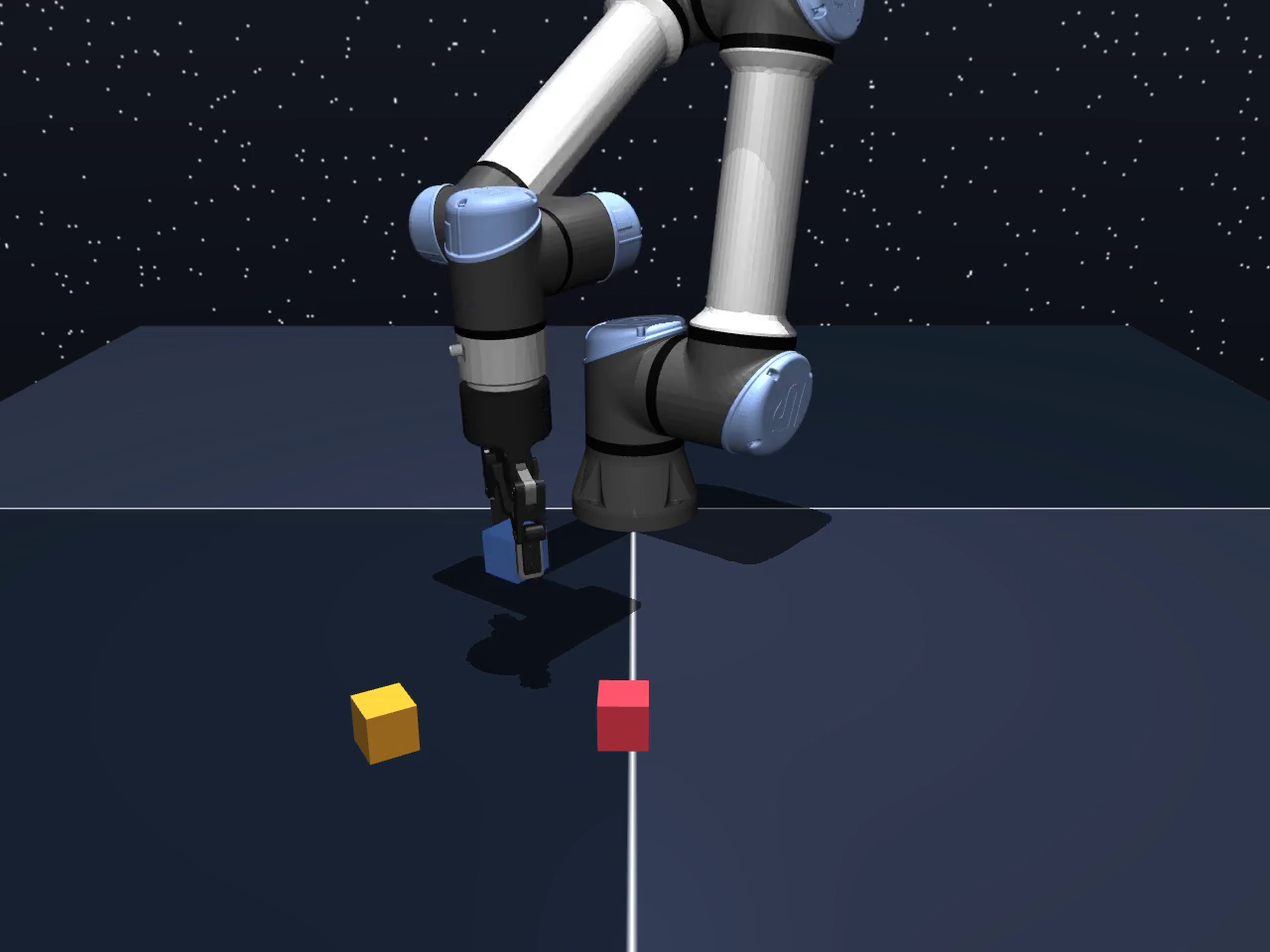} &
		\includegraphics[width=0.23\linewidth]{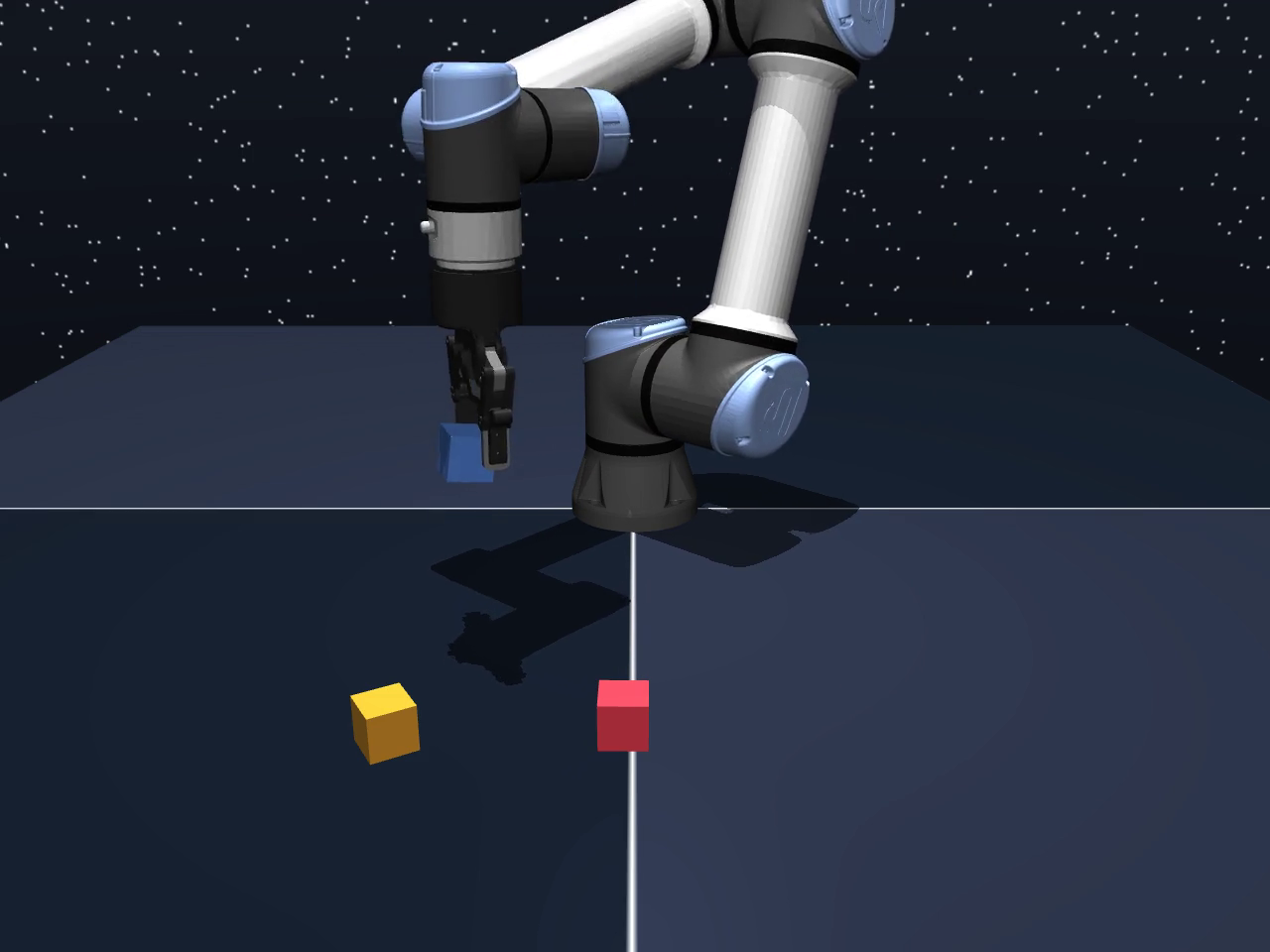} &
		\includegraphics[width=0.23\linewidth]{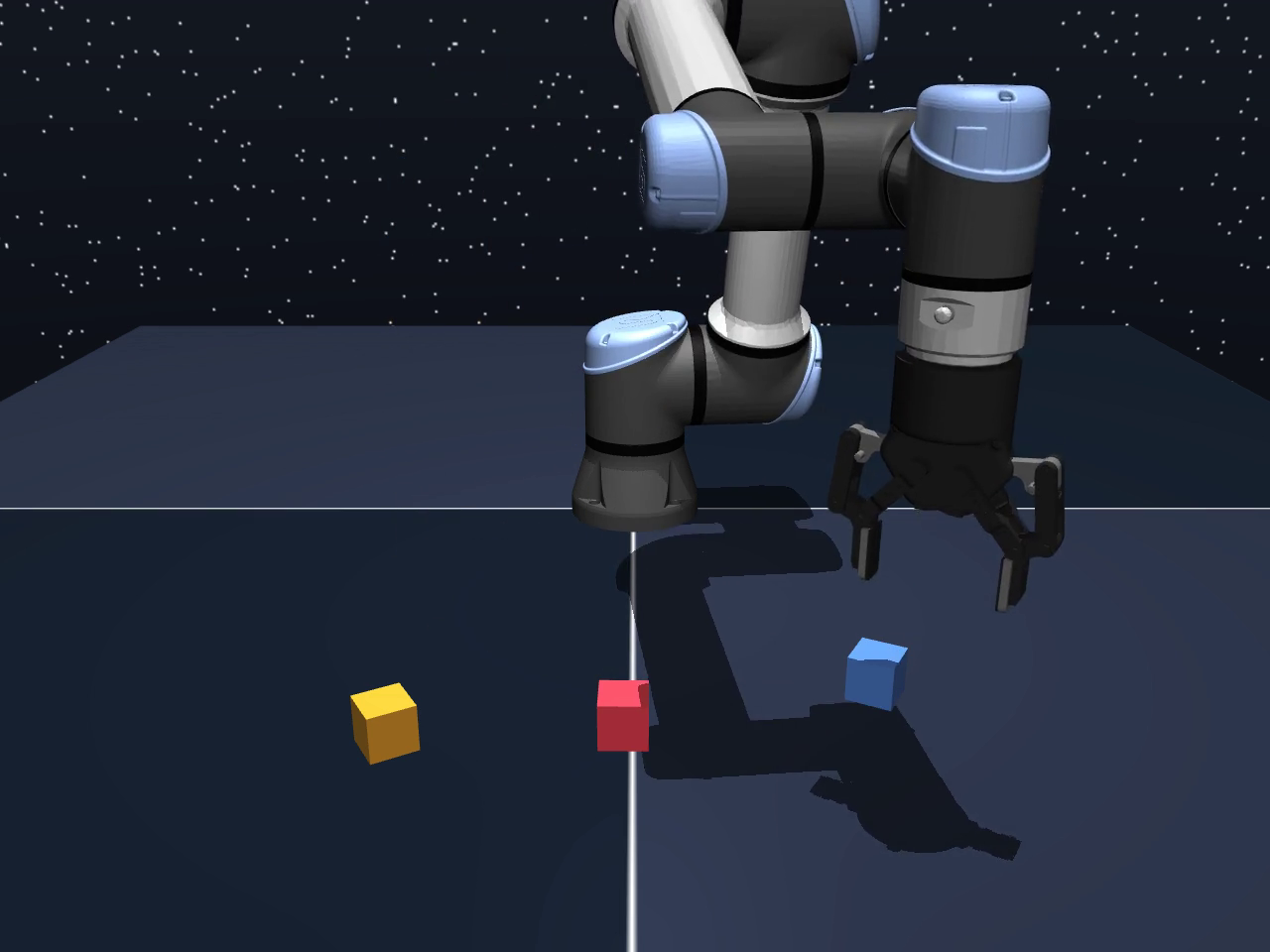} \\
		\addlinespace[1ex]
		\includegraphics[width=0.23\linewidth]{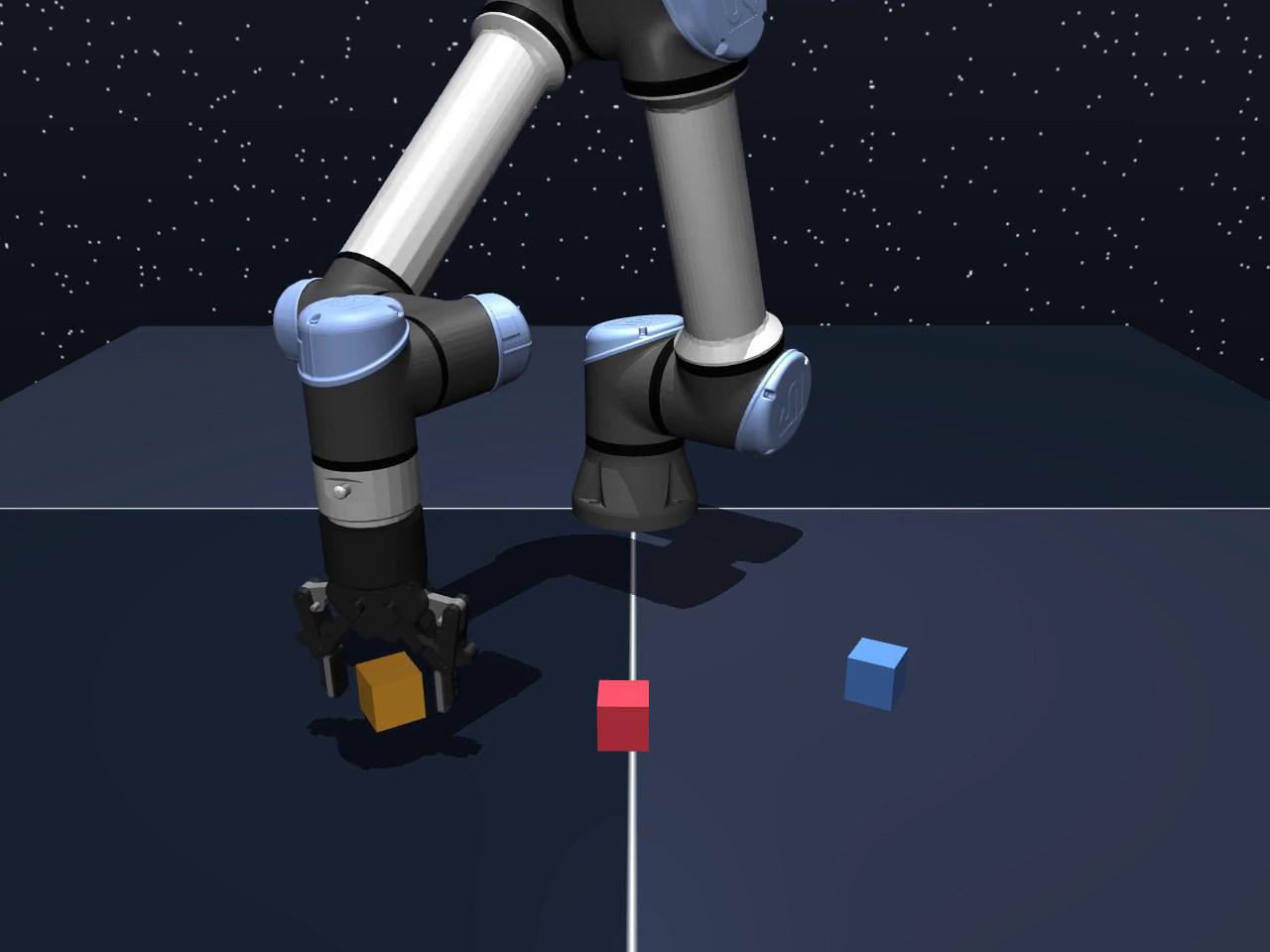} &
		\includegraphics[width=0.23\linewidth]{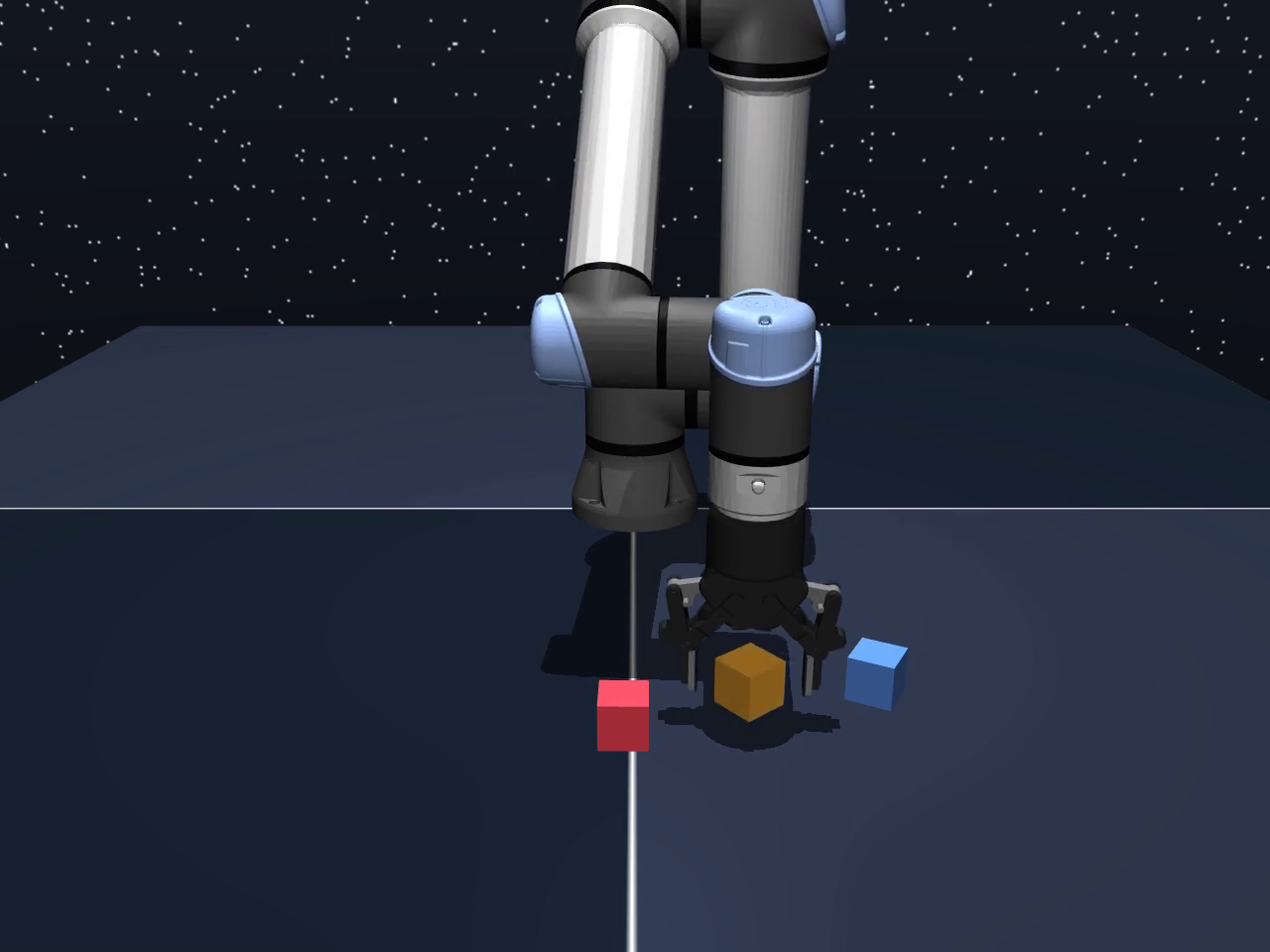} &
		\includegraphics[width=0.23\linewidth]{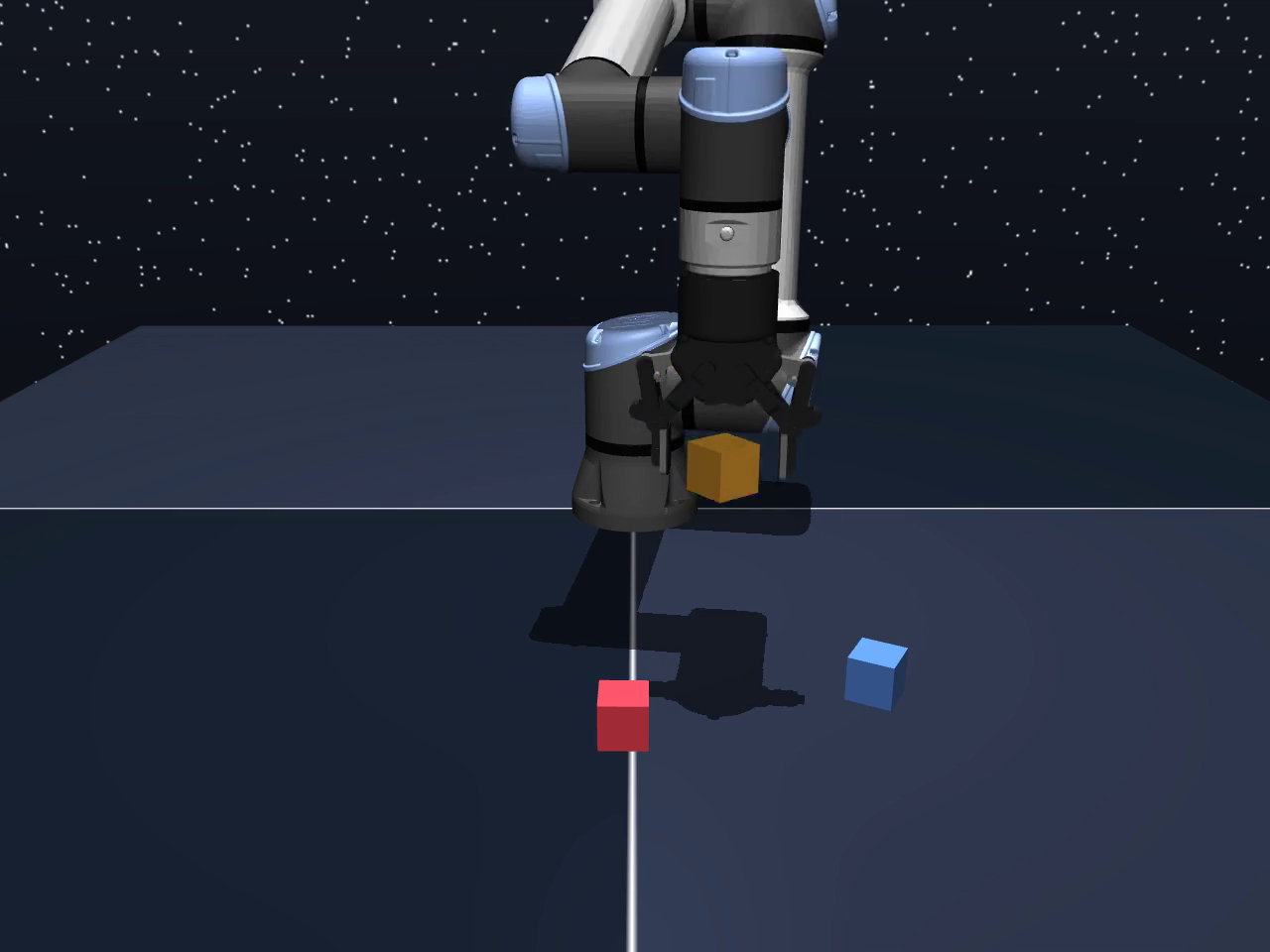} &
		\includegraphics[width=0.23\linewidth]{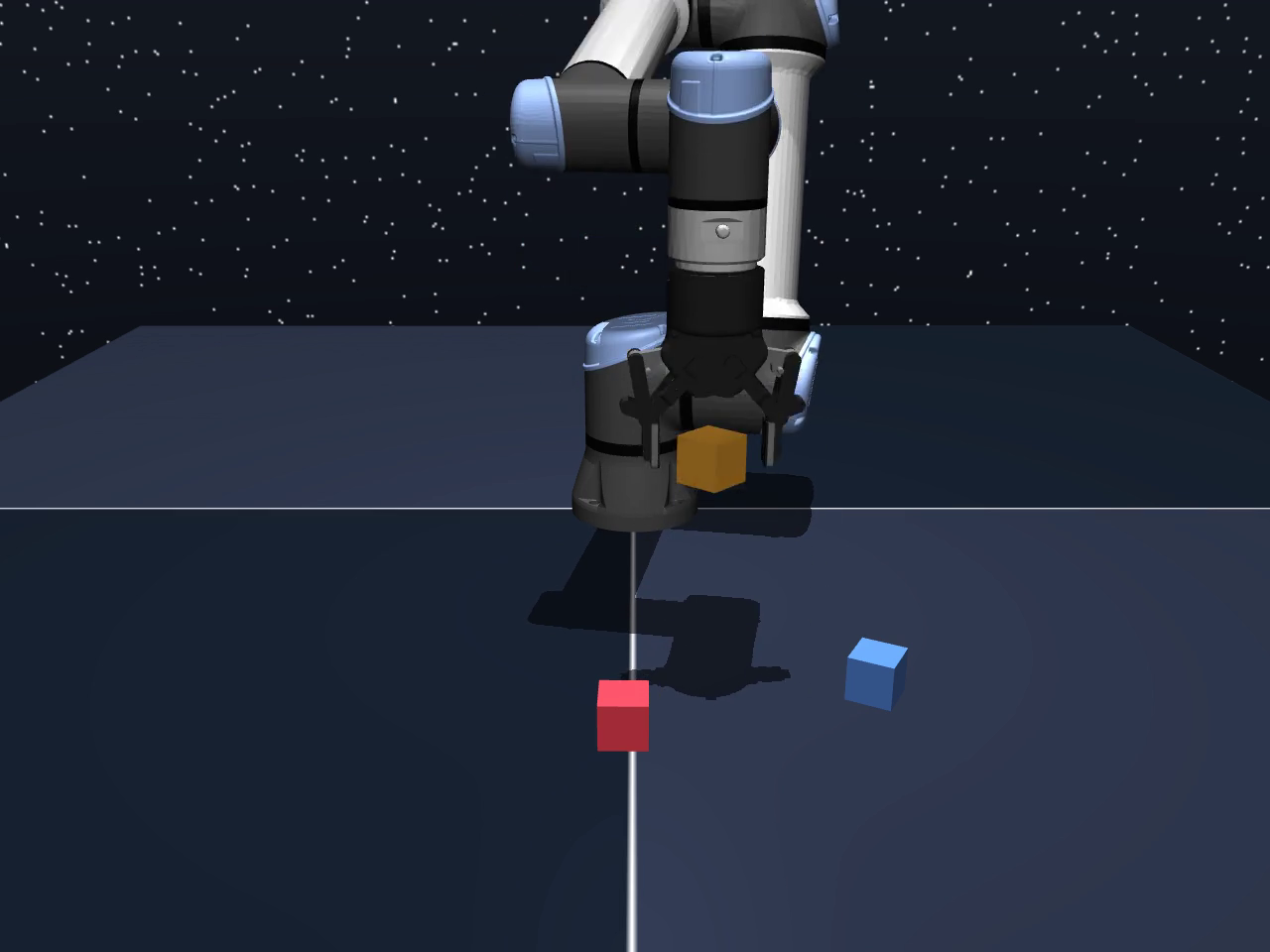} \\
		\addlinespace[1ex]
		\includegraphics[width=0.23\linewidth]{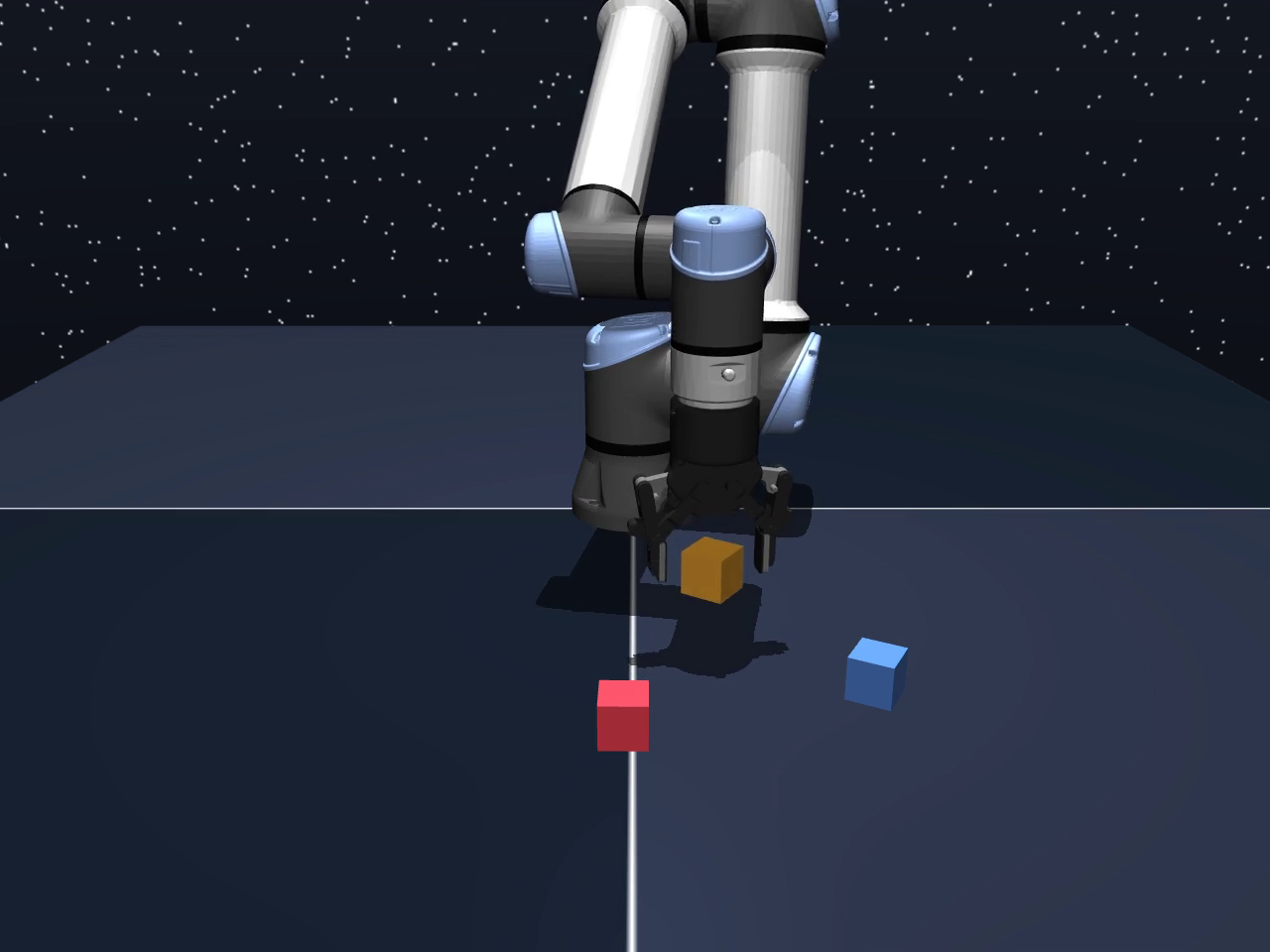} &
		\includegraphics[width=0.23\linewidth]{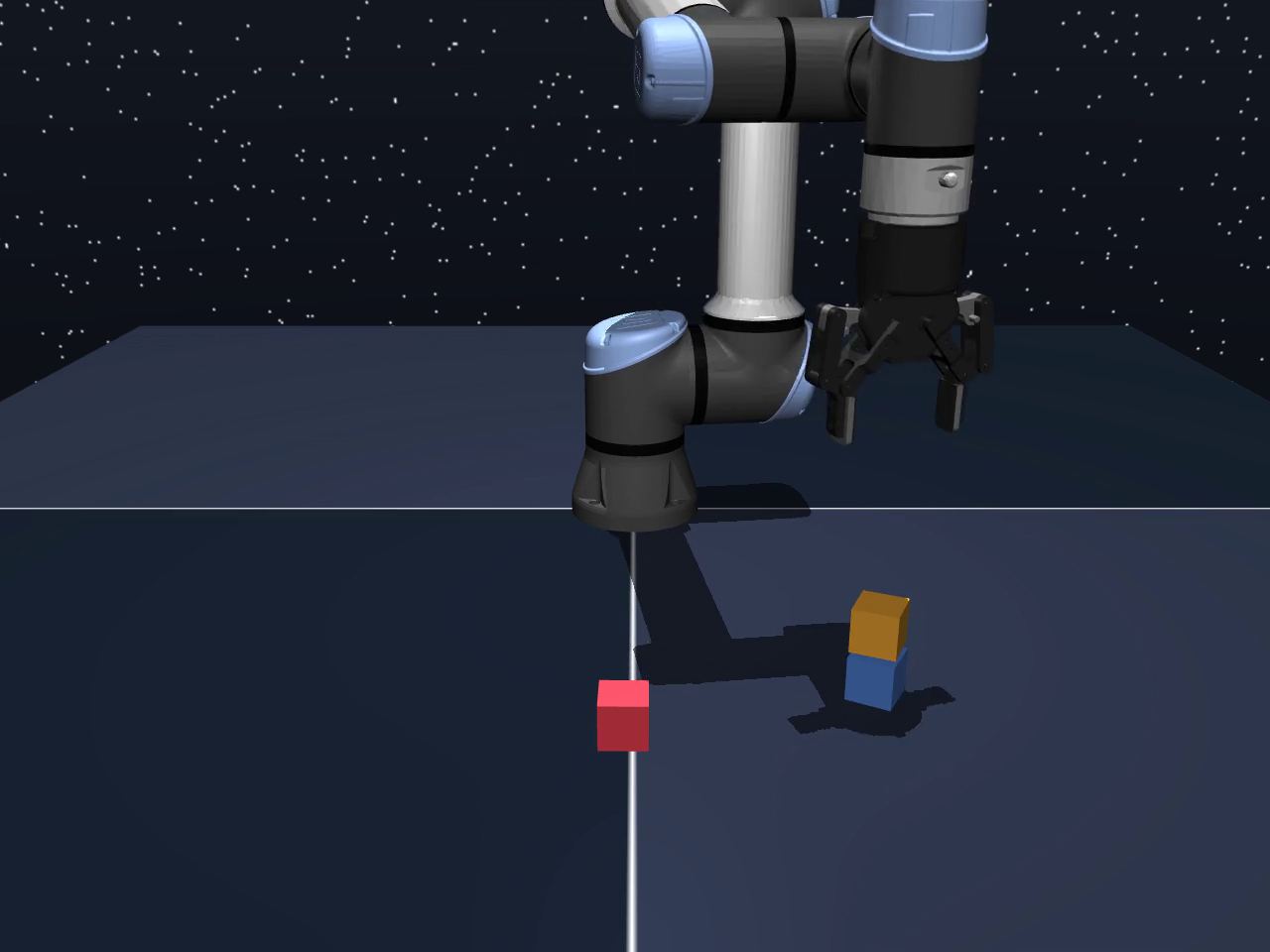} &
		\includegraphics[width=0.23\linewidth]{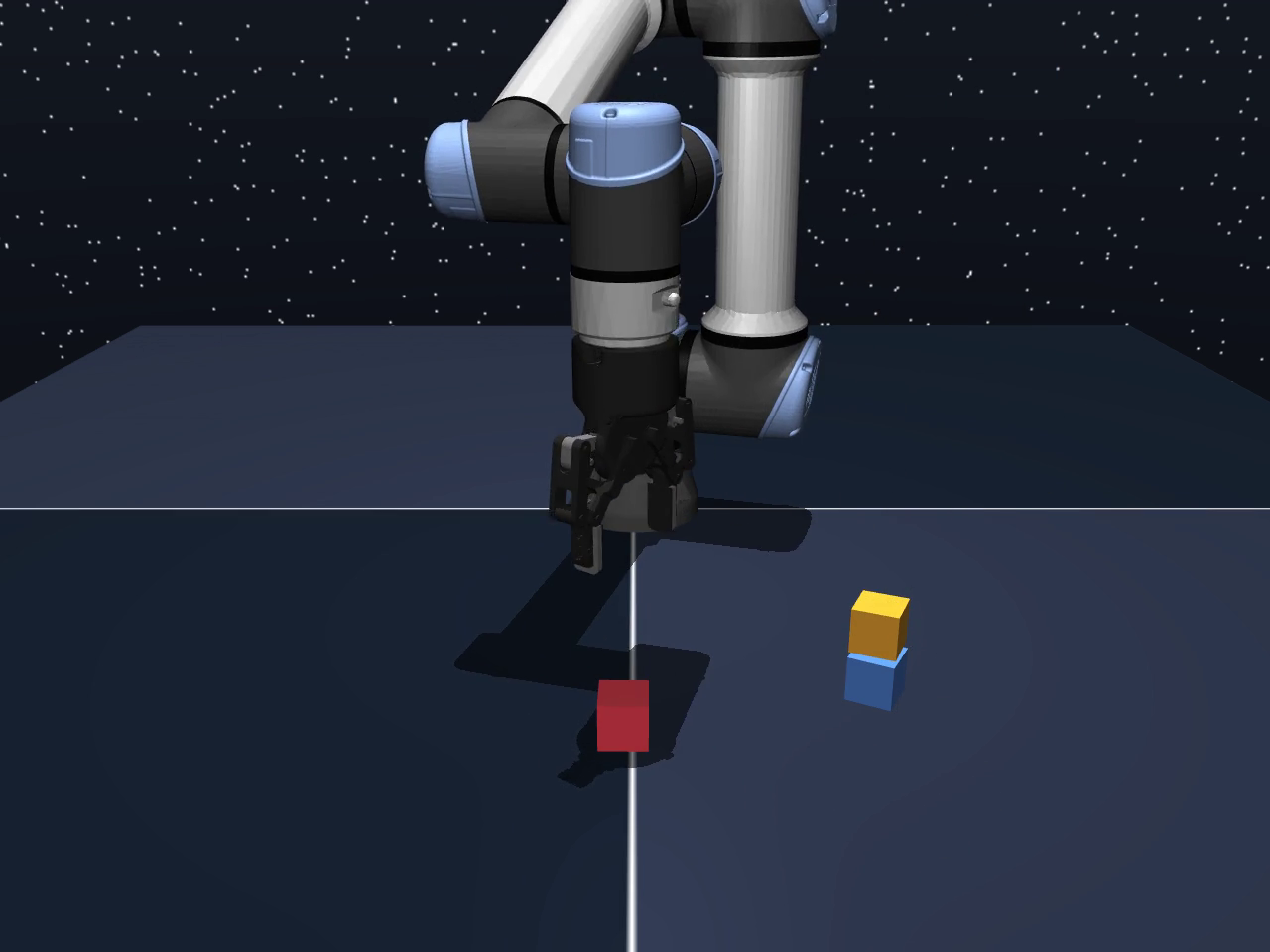} &
		\includegraphics[width=0.23\linewidth]{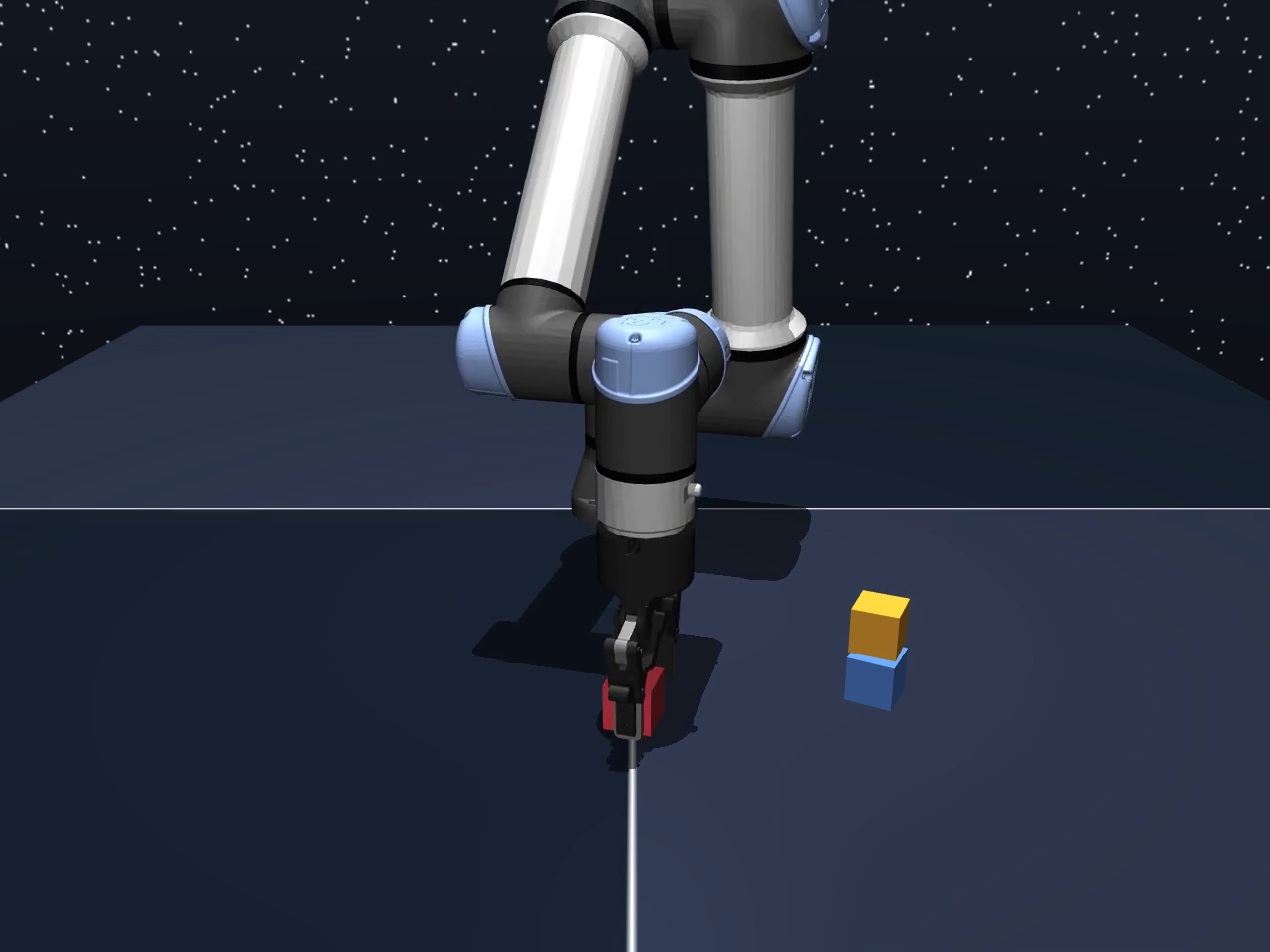} \\
		\addlinespace[1ex]
		\includegraphics[width=0.23\linewidth]{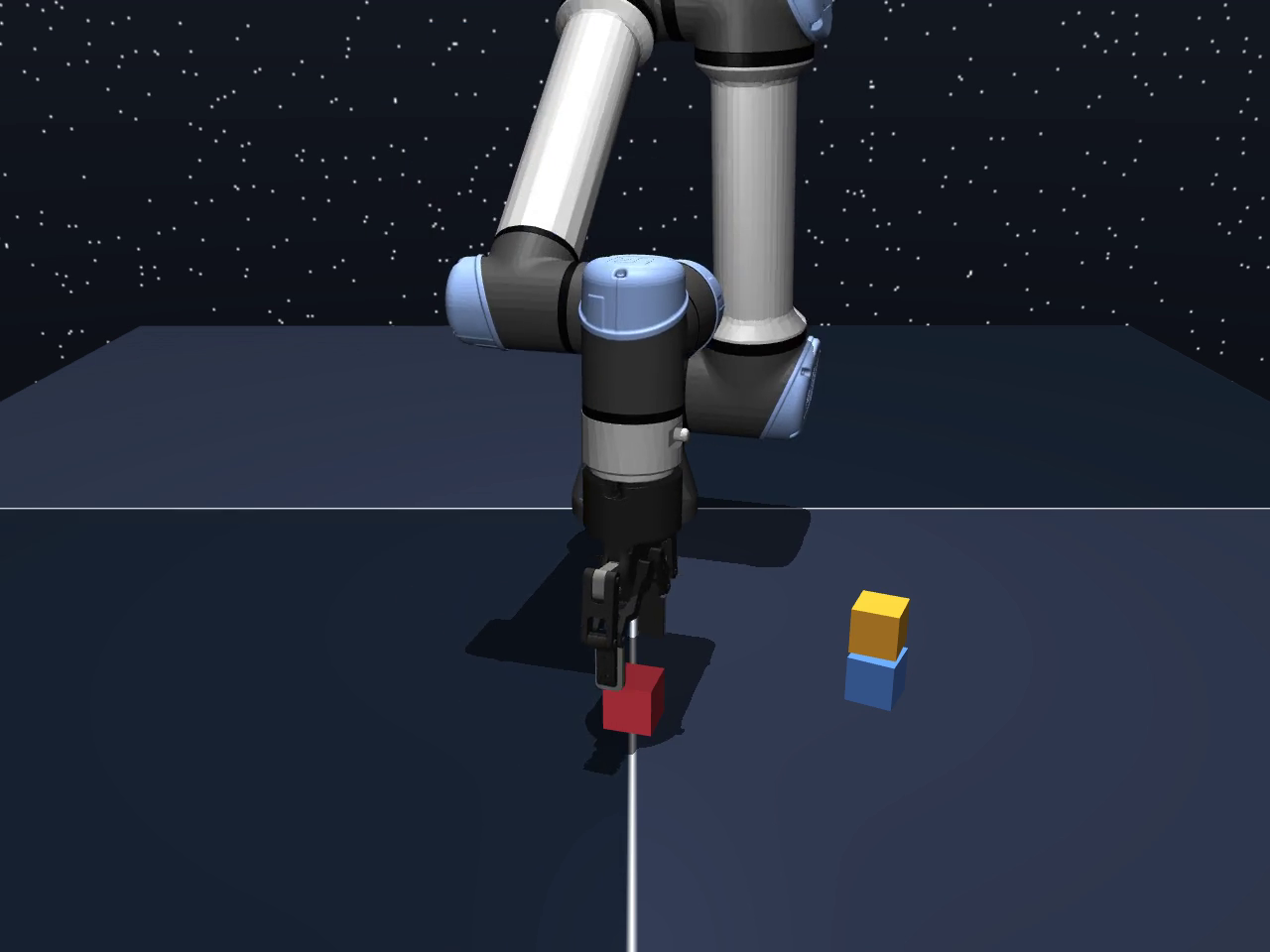} &
		\includegraphics[width=0.23\linewidth]{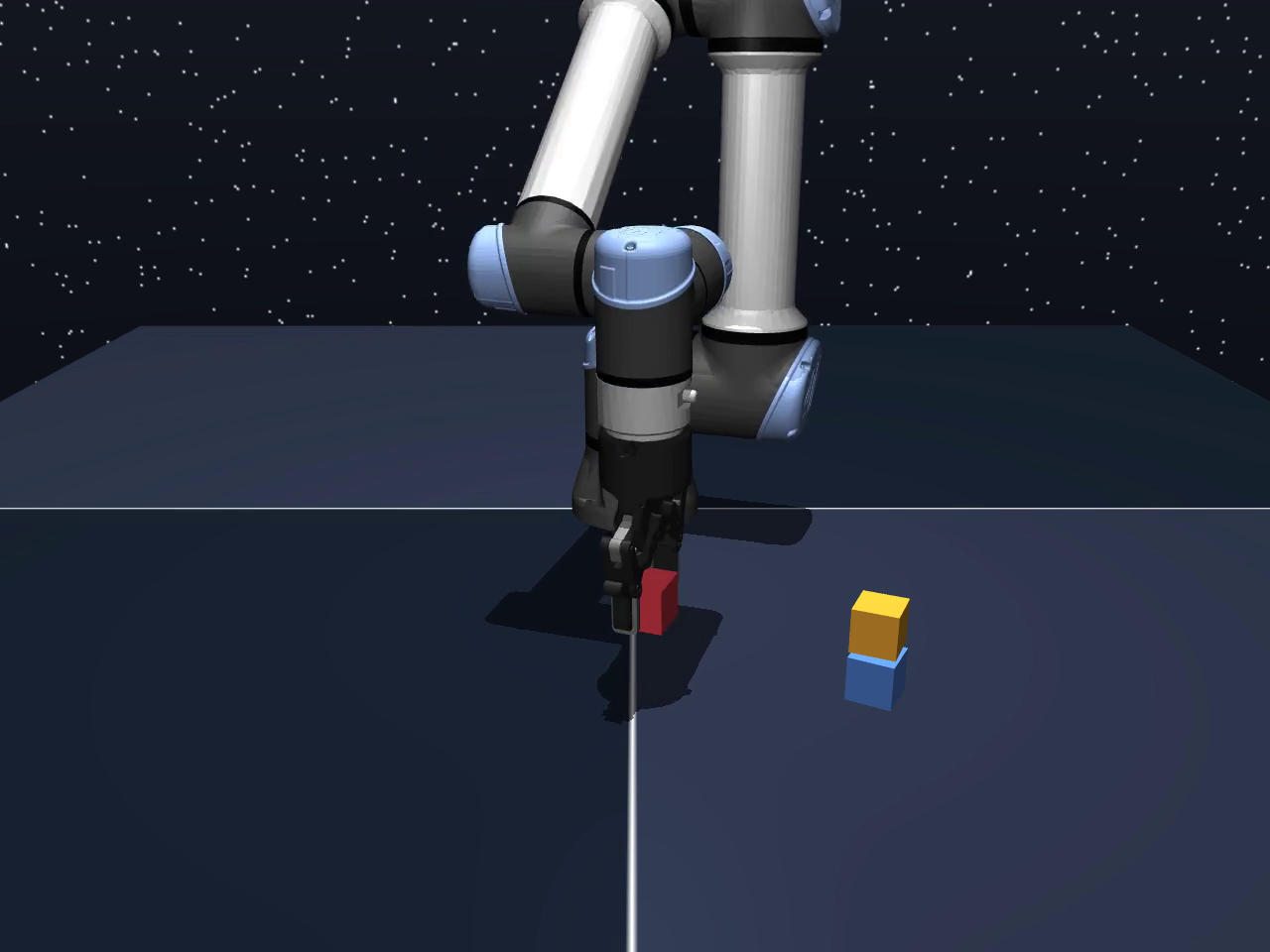} &
		\includegraphics[width=0.23\linewidth]{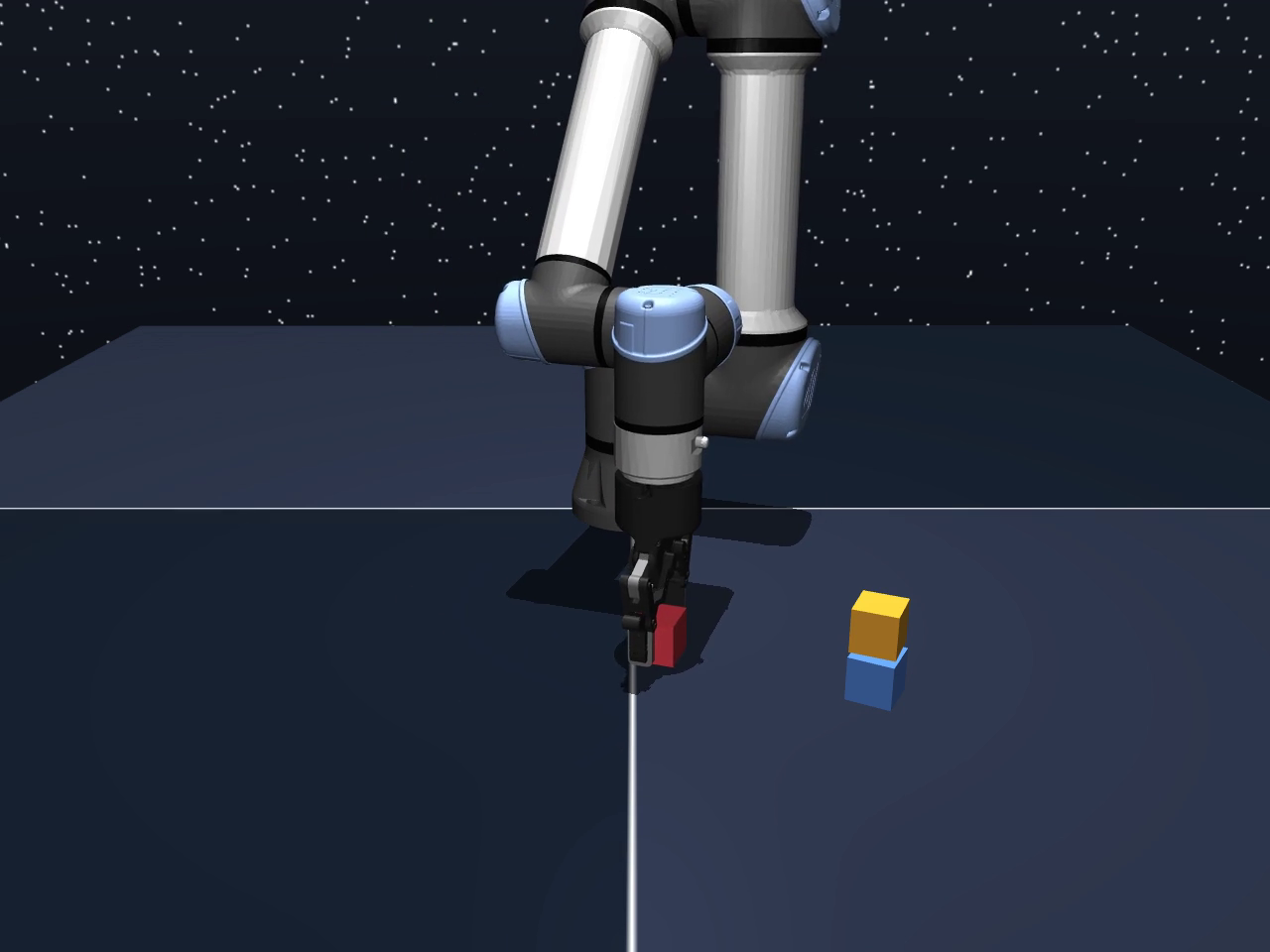} &
		\includegraphics[width=0.23\linewidth]{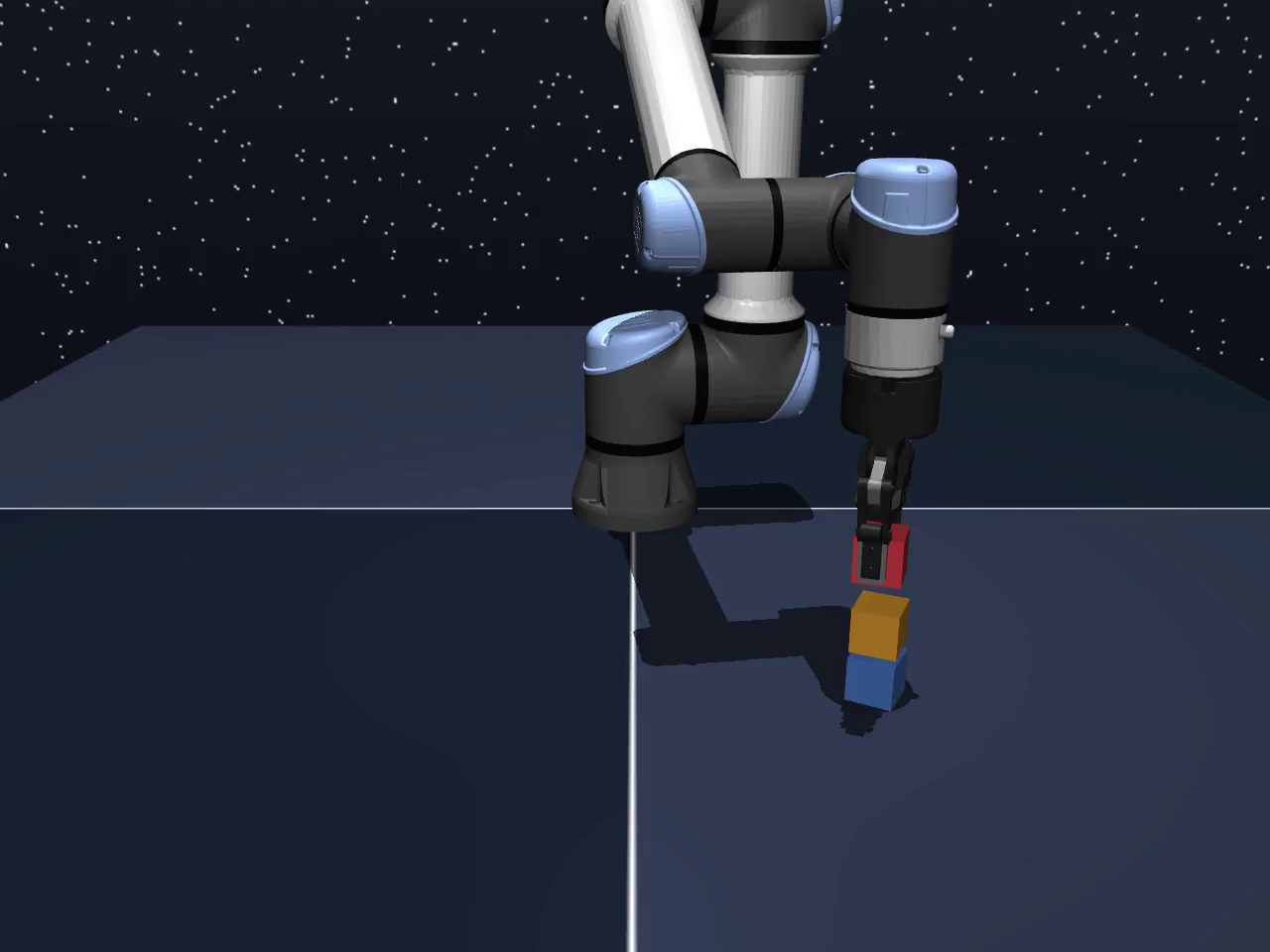} \\
	\end{tabular}
	\caption{\textbf{Visualization of RCD rollout execution on \texttt{Cube-Triple}.} The 6-DoF UR5e robot arm arranges three cubes into their target configuration via pick-and-place.}
	\label{fig:rollout_cube_triple}
\end{figure}

\begin{figure}[ht]
	\centering
	\includegraphics[width=0.98\linewidth]{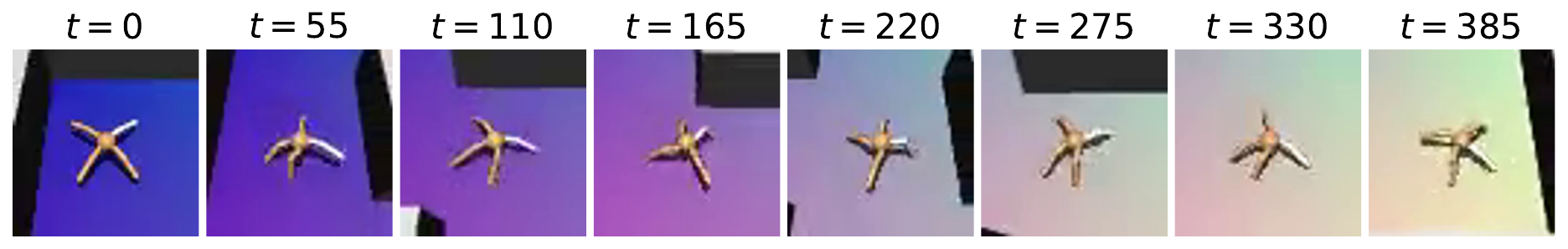}
	\caption{\textbf{Visualization of RCD rollout execution on \texttt{Visual-AntMaze-Medium-Stitch}.} The ant agent navigates from the starting region to the goal in a long-horizon task where start and goal are at opposite ends of the maze. Frames are the env-rendered $64{\times}64$ pixel observations. Planning is performed in a learned $16$-dimensional VAE latent space derived from these pixel renders.}
	\label{fig:rollout_visual_antmaze}
\end{figure}

%%%%%%%%%%%%%%%%%%%%%%%%%%%%%%%%%%%%%%%%%%%%%%%%%%%%%%%%%%%%
% \clearpage
% \newpage
% \input{checklist.tex}

\end{document}